\title{Some Insights into the Geometry and Training of Neural Networks}
\author{E. van den Berg\\[5pt]IBM Watson, Yorktown Heights, USA}
\newtheorem{theorem}{Theorem}[section]
\newtheorem{lemma}[theorem]{Lemma}
\newcommand{\band}{{\sc{and}}}
\begin{document}

\maketitle

\begin{abstract}
  Neural networks have been successfully used for classification tasks
  in a rapidly growing number of practical applications. Despite their
  popularity and widespread use, there are still many aspects of
  training and classification that are not well understood. In this
  paper we aim to provide some new insights into training and
  classification by analyzing neural networks from a feature-space
  perspective. We review and explain the formation of decision regions
  and study some of their combinatorial aspects. We place a particular
  emphasis on the connections between the neural network weight and
  bias terms and properties of decision boundaries and other regions
  that exhibit varying levels of classification confidence. We show
  how the error backpropagates in these regions and emphasize the
  important role they have in the formation of gradients. These
  findings expose the connections between scaling of the weight
  parameters and the density of the training samples. This sheds more
  light on the vanishing gradient problem, explains the need for
  regularization, and suggests an approach for subsampling training
  data to improve performance.
\end{abstract}

\section{Introduction}\label{Sec:Introduction}
Neural networks have been successfully used for classification tasks
in applications such as pattern recognition \cite{BIS1995a}, speech
recognition \cite{HIN2012DYDa}, and numerous others \cite{ZHA2000a}.
Despite their widespread use, the understanding of neural networks is
still incomplete, and they often remain treated as black boxes. In
this paper we provide new insights into training and classification by
analyzing neural networks from a feature-space perspective.  We
consider feedforward neural networks in which input vectors
$x_0 \in \mathbb{R}^d$ are propagated through $n$ successive layers,
each of the form
\begin{equation}\label{Eq:LayerDefn}
x_{k} = \nu_{k}(A_k x_{k-1} - b_k),
\end{equation}
where $\nu_k$ is a nonlinear activation function that acts on an
affine transformation of the output $x_{k-1}$ from the previous layer,
with weight matrix $A_k$ and bias vector $b_k$.  Neural networks are
often represented as graphs and the entries in vectors $x_k$ are
therefore often referred to as nodes or units. There are three main
design parameters in a feedforward neural network architecture: the
number of layers or depth the network, the number of nodes in each
layer, and the choice of activation function. Once these are fixed,
neural networks are training by adjusting only the weight and bias
terms.

Although most of the results and principles in this paper apply more
generally, we predominantly consider neural networks with sigmoidal
activation functions that are convex-concave and differentiable. To
keep the discussion concrete we focus on a symmetrized version of the
logistic function that acts  elementwise on its input as
\begin{equation}\label{Eq:Sigmoid}
\sigma_{\gamma}(x) = 2\ell_{\gamma}(x) - 1,
\qquad\mbox{with}\qquad
\ell_{\gamma}(x) = \frac{1}{1 + e^{-\gamma x}}.
\end{equation}
This function can be seen as a generalization of the hyperbolic
tangent, with $\sigma_{\gamma}(x) = \tanh(\gamma x/2)$.  We omit the
subscript $\gamma$ when $\gamma=1$, or when its exact value does not
matter.  For simplicity, and with some abuse of terminology we refer
to $\sigma_{\gamma}$ as the sigmoid function, irrespective of the
value of $\gamma$, and use the term logistic function for
$\ell_{\gamma}$.  Examples of several instances of $\sigma_{\gamma}$
and their first-order derivatives are plotted in
Figure~\ref{Fig:Sigmoid}.

\begin{figure}
\centering
\begin{tabular}{cc}
\includegraphics[width=0.475\textwidth]{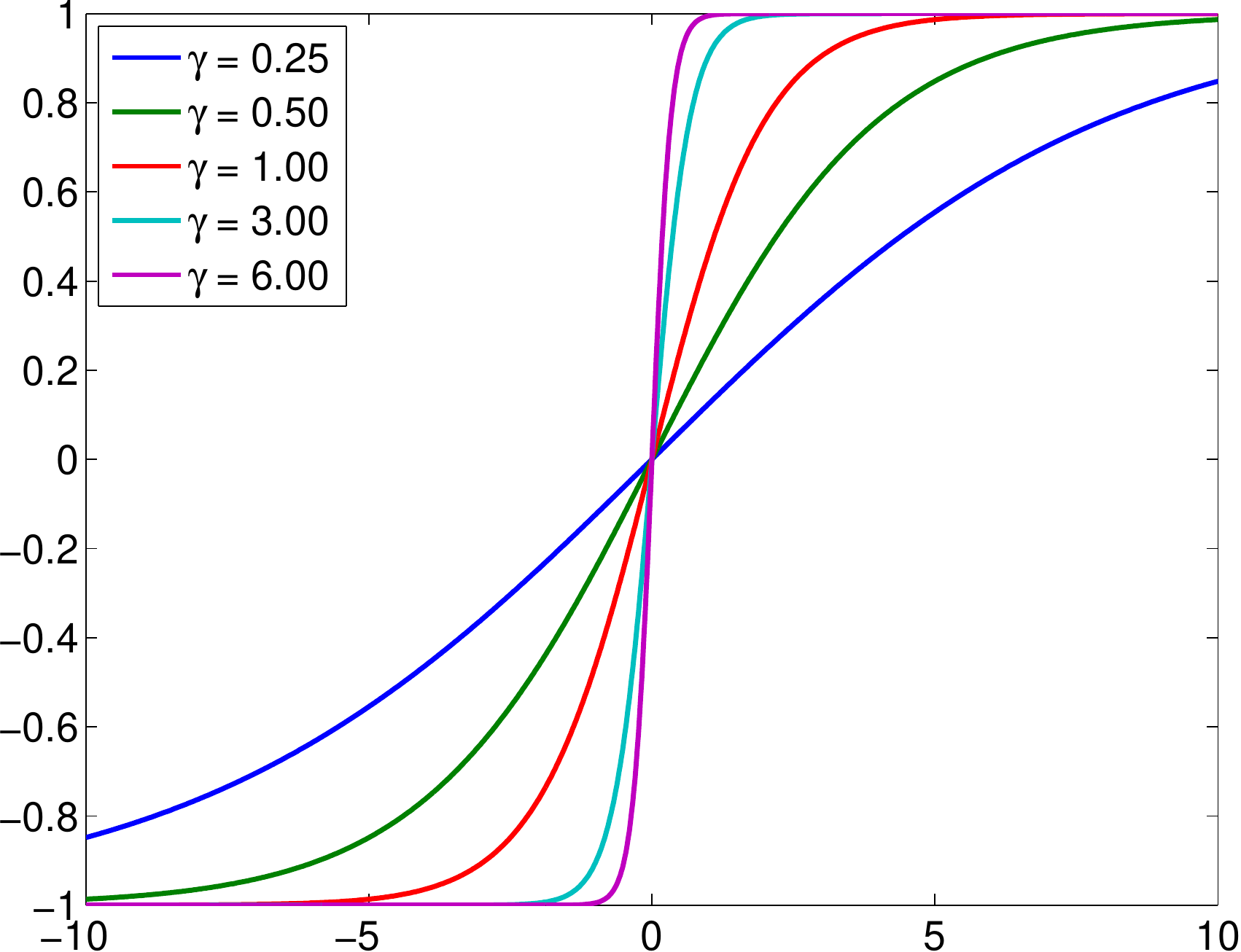}&
\includegraphics[width=0.475\textwidth]{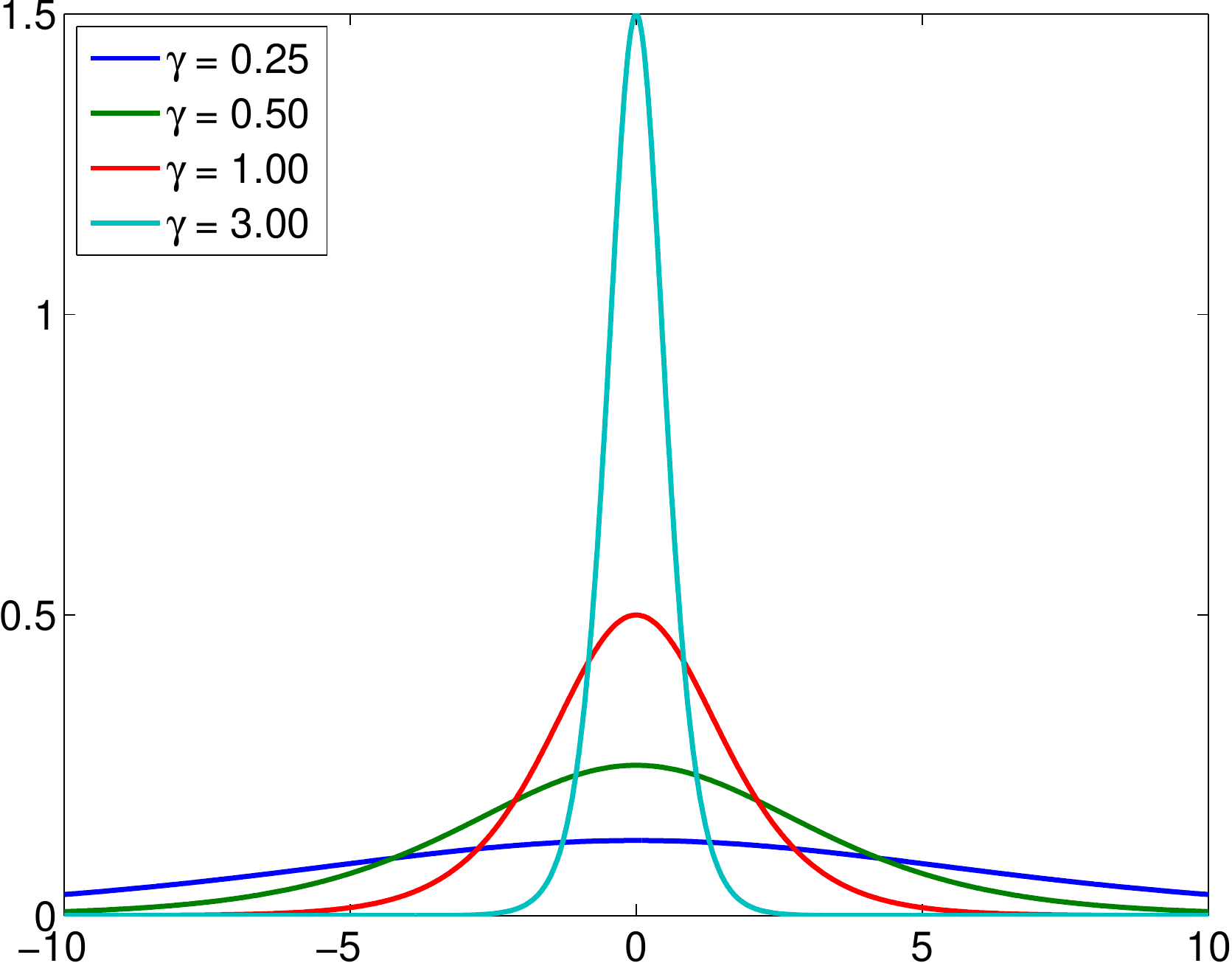}\\
({\bf{a}}) & ({\bf{b}})
\end{tabular}
\caption{Different instances of (a) the hyperbolic tangent function
  $\sigma_{\gamma}$ and (b) the derivatives.}\label{Fig:Sigmoid}
\end{figure}

The activation function in the last layer has the special purpose of
ensuring that the output of the neural network has a meaningful
interpretation. The softmax function is widely used and generates an
output vector whose entries are defined as
\begin{equation}\label{Eq:softmax}
[\mu(x)]_i= \frac{e^{x_{[i]}}}{\sum_{j=1}^{k}e^{x_{[j]}}},
\end{equation}
Exponentiation and normalization ensures that all output values are
nonnegative and sum up to one, and the output of node $i$ can therefore
be interpreted as an estimate of the posterior probability
$p(\textrm{class} = i \mid x)$. That is, we can define the estimated
probabilities as $\hat{p}_s(\textrm{class} = i \mid x) := [x_n(x)]_i$,
where $s$ is a vector containing of network weight and bias
parameters, and $x_n(x)$ is the output at layer $n$ corresponding to
input $x_0 = x$. The network parameters $s$ are typically learned from
domain-specific training data. In supervised training for multiclass
classification this training data comes in the form of a set of tuples
$\mathcal{T} = \{(x,c)\}$, each consisting of a sample feature vector
$x \in \mathbb{R}^d$ and its associated class label $c$. Training is
done by minimizing a suitably chosen loss function, such as
\begin{equation}\label{Eq:NNTraining}
\minimize{s}\quad \phi(s) := \frac{1}{\vert \mathcal{T}\vert}\sum_{(x,c) \in
  \mathcal{T}} f(s; x,c).
\end{equation}
with the cross-entropy function
\[
f(s; x,c) = -\log \hat{p}_s(c \mid x),
\]
which we shall use throughout the paper.  We denote the class $c$
corresponding to feature vector $x \in\mathbb{R}^d$ as $c(x)$, which,
in practice, is known only for all points in the training set.  For
notational convenience we also write $f(x)$ to mean $f(s; x, c(x))$.
The loss function $\phi(s)$ is highly nonconvex in $s$ making
\eqref{Eq:NNTraining} particularly challenging to solve. However, even
if it could be solved, care needs to be taken not to overfit the data
to ensure that the network generalizes to unseen data. This can be
achieved, for example, through regularization, early termination, or
by limiting the model capacity of the network.

The outline of the paper is as follows. In
Section~\ref{Sec:DecisionRegions} we review the definition of
halfspaces and the formation of decision regions.  In
Section~\ref{Sec:Regions} we look at combinatorial properties of the
decision regions, their ability to separate or approximate different
classes, and possible generalizations.  Section~\ref{Sec:Gradients}
analyzes the connection between the decision regions and the gradient
with respect to the different network parameters. Topics related to
the training of neural networks including backpropagation,
regularization, the contribution of individual training samples to the
gradient, and importance sampling are discussed in
Section~\ref{Sec:Training}. We conclude the paper with a discussion
and future work in Section~\ref{Sec:Discussion}.

Throughout the paper we use the following notational
conventions. Matrices are indicated by capitals, such as $A$ for the
weight matrices; vectors are denoted by lower-case roman letters. Sets
are denoted by calligraphic capitals. Subscripted square brackets
denote indexing, with $[x]_j$, $[A]_i$, and $[A]_{i,j}$ denoting
respectively the $j$-th entry of vector $x$, the $i$-th row of $A$ as
a column vector, and the $(i,j)$-th entry of matrix $A$. Square
brackets are also used to denote vector or matrix instances with
commas separating entries within one row, and semicolon separating
rows in in-line notation.  When not exponentiated $e$ denotes the
vector of all ones. The largest singular value of $A$ is denoted
$\sigma_{\max}(A)$, from the context it will be clear that this is not
a particular instance of the sigmoidal function $\sigma_{\gamma}$. The
vector $\ell_1$, and $\ell_2$ norms refer to the one- and two norms;
that is, the sum of absolute values and the Euclidean norm,
respectively. There should be no confusion between this and the
$\ell_{\gamma}$ logistic function.

\section{Formation of decision regions}\label{Sec:DecisionRegions}
Decision regions can be described as those regions or sets of points
in the feature space that are classified as a certain class.
Classification in neural networks is soft in the sense that it comes
as a vector of posterior probabilities and, in case that is desirable,
it is therefore not immediately obvious how to assign points to one
class or another. Two possible definitions of decision regions for
class $c$ are the set of points where the posterior probability is
highest among the classes:
\[
\mathcal{C}_c := \{ x \in \mathbb{R}^d \mid \hat{p}_s(c \mid x) = \max_j
\hat{p}_s(j \mid x)\},
\]
or exceeds a given threshold:
\begin{equation}\label{Eq:DefnDecisionRegion}
\mathcal{C}_c := \{x \in\mathbb{R}^d \mid \hat{p}_s(c \mid x) \geq \tau\}.
\end{equation}
Although this section discusses the formation and role of decision
regions and its boundaries, we will not use any formal definition of
decision regions. However, the intuitive notion used closely follows
definition~\eqref{Eq:DefnDecisionRegion}.

As we will see in this section, decision regions are formed as input
is propagated through the network. Even though the form
\eqref{Eq:LayerDefn} of all the layers is identical, we can
nevertheless identify two distinct stages in region formation. The
first stage defines a collection of halfspaces and takes place in the
first layer of the network. The second stage takes place over the
remaining layers in which intermediate regions are successively
combined to form the final decision regions, starting with the initial
set of halfspaces. The generation of halfspaces or hyperplanes in the
first layer of the neural network and their combination in subsequent
layers is well known (see for example~\cite{BIS1995a,MAK1989SEa}). The
formation of soft decision boundaries and some of their properties
does not appear to have been studied widely.  Some of the notions
discussed next form the basis for subsequent sections, and we
therefore review the two separate stages mentioned above in some
detail, with a particular emphasis on the role of the sigmoidal
activation function.

\subsection{Definition of halfspaces}\label{Sec:Halfspaces}

The output of the first layer in the network can be written as $y =
\sigma(Ax - b)$ with $x \in \mathbb{R}^d$. For an individual unit $j$
this reduces to $\sigma(\langle a,x\rangle - \beta)$, where $a \in
\mathbb{R}^d$ corresponds to $[A]_j$, the $j$-th row of $A$, and
$\beta = [b]_j$. When applied over all points of the feature space,
the affine mapping $\langle a,x\rangle -\beta$ generates a linear
gradient, as shown in Figure~\ref{Fig:Example1}(a).  The output of
a unit is then obtained by applying the sigmoid function to
these intermediate values. When doing so, assuming throughout that
$a\neq 0$, two prominent regions form: one with values close to $-1$
and one with values close to $+1$. In between the two regions there is
a smooth transition region, as illustrated in
Figure~\ref{Fig:Example1}(b). The center of the transition region
consists of all feature points whose output value equal zero. It can
be verified that this set is given by all points $x = \beta
a/\norm{a}_2^2 + v$ such that $\langle a,v\rangle = 0$, and therefore
describes a hyperplane. The normal direction of the hyperplane is
given by $a$, and the exact location of the hyperplane is determined
by a shift along this normal, controlled by both $\beta$ and
$\norm{a}_2$.  The region of all points that map to nonnegative values
forms a halfspace, and because the linear functions can be chosen
independently for each unit, we can define as many halfspaces are
there are units in the first layer. As the transition between the
regions on either side of the hyperplane is gradual it is convenient
to work with soft boundaries and interpret the output values as a
confidence level of set membership with values close to $+1$
indicating strong membership, those close to $-1$ indicating strong
non-membership, and with decreasing confidence levels in between. For
simplicity we use the term halfspace for both the soft and sharp
versions of the region.

\begin{figure}
\centering
\begin{tabular}{ccc}
\includegraphics[width=0.3\textwidth]{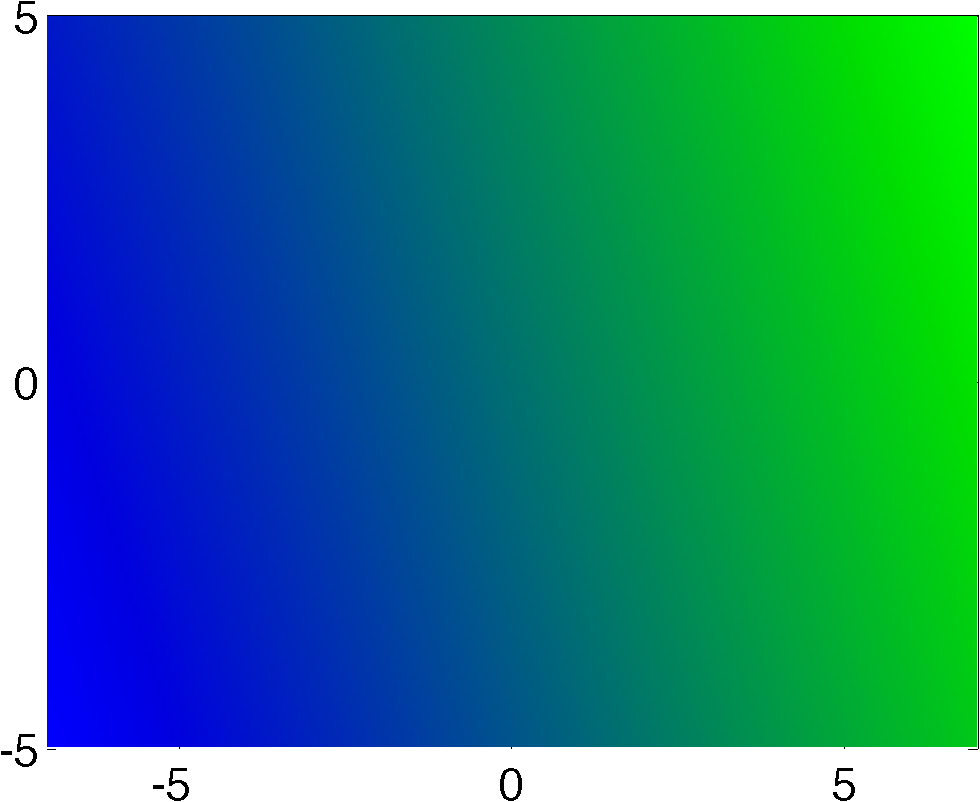}&
\includegraphics[width=0.3\textwidth]{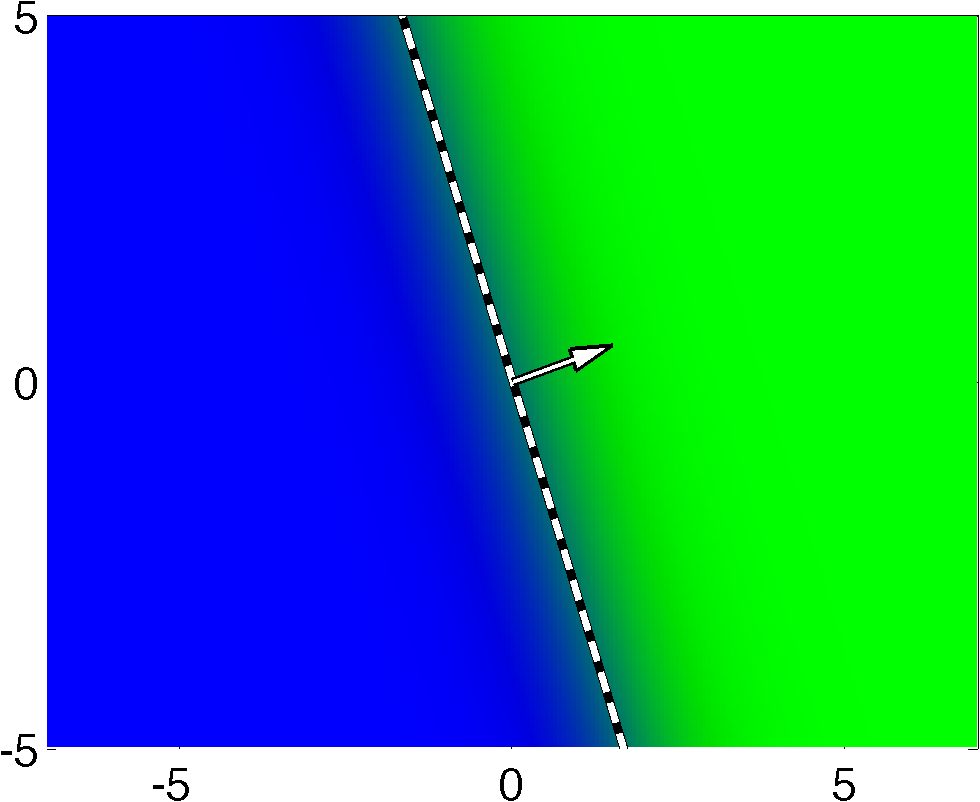}%
\begin{picture}(0,0)(0,0)%
\put(-61,75){\color{black}$a$}
\put(-34,60){\color{black}$\mathcal{R}$}
\put(-115,60){\color{white}$\mathcal{R}^c$}
\end{picture}&
\includegraphics[width=0.3\textwidth]{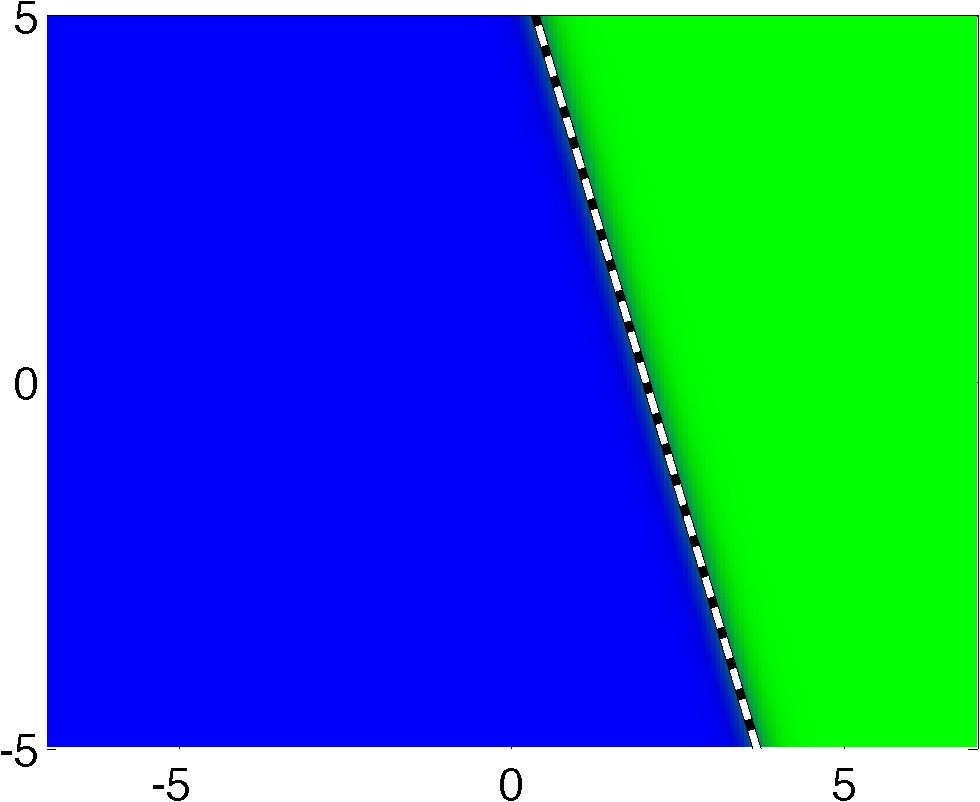}\\
\includegraphics[width=0.3\textwidth]{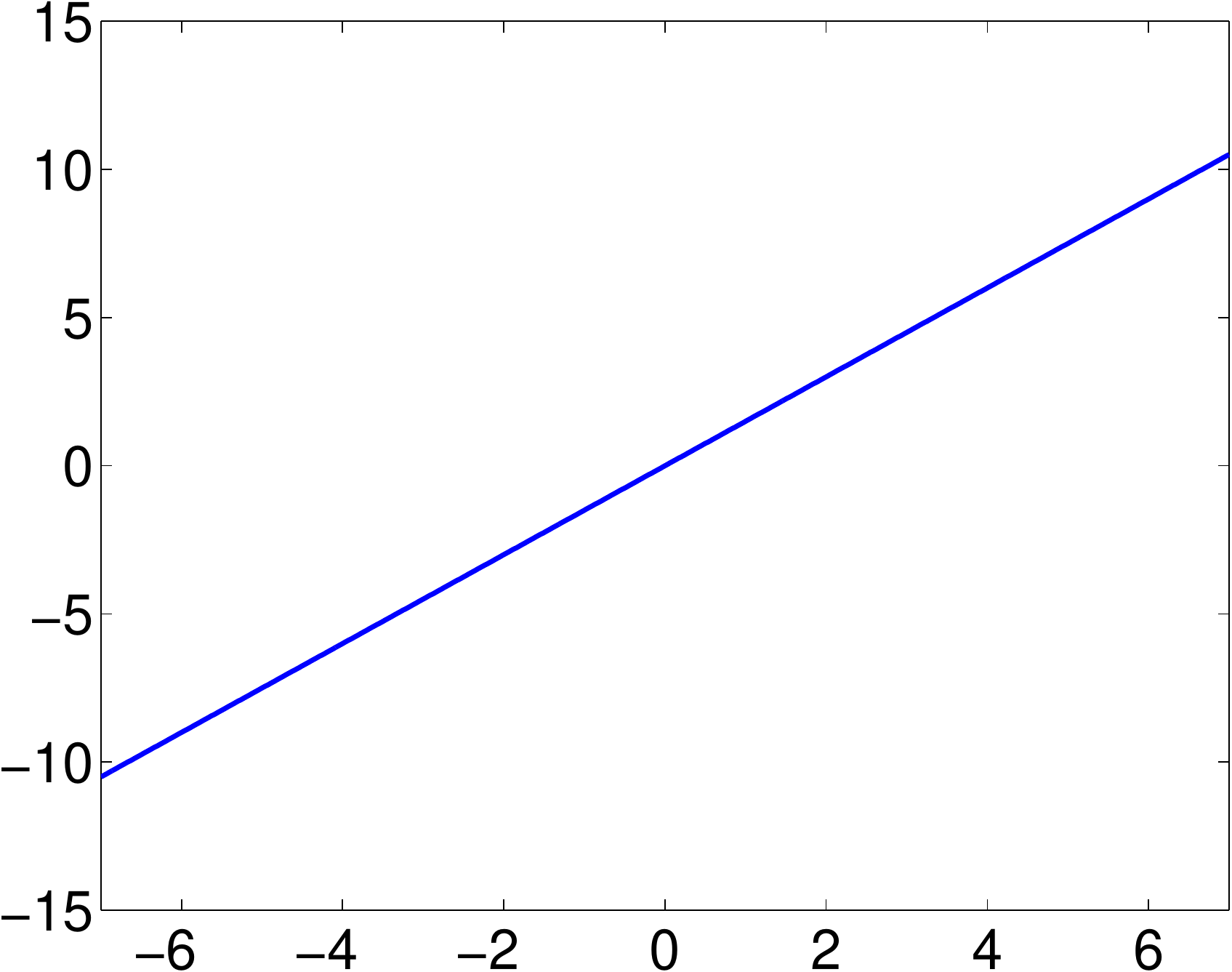}&
\includegraphics[width=0.3\textwidth]{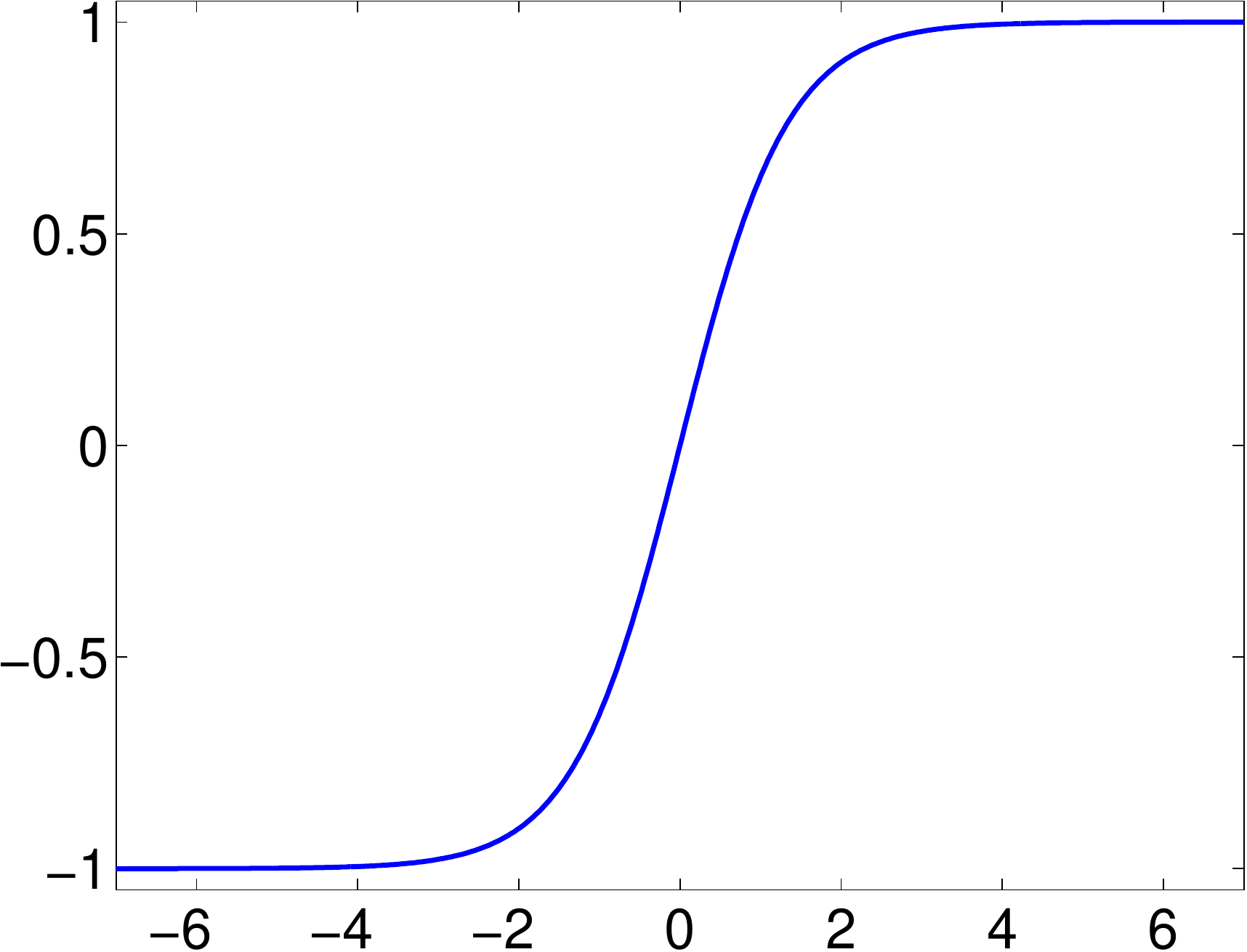}&
\includegraphics[width=0.3\textwidth]{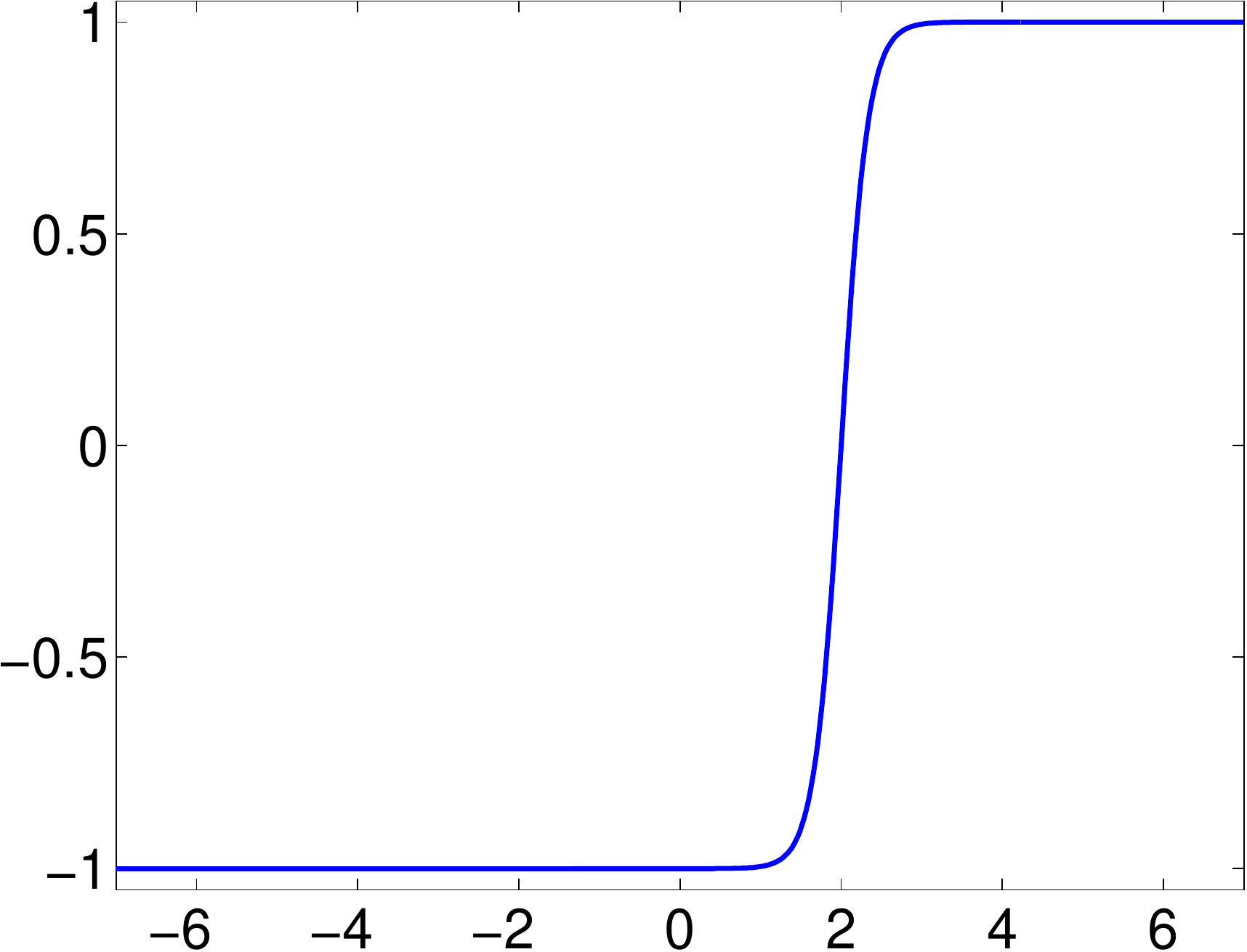}\\
({\bf{a}}) & ({\bf{b}}) & ({\bf{c}})
\end{tabular}
\caption{The mapping of points $x$ in the feature space using (a) the
  linear transformation $\langle a,x\rangle - \beta$ with $a = [1.5,
  0.5]^T$ and $\beta = 0$; (b) the nonlinearity $\sigma(\langle
  a,x\rangle -\beta)$ with the same values for $a$ and $\beta$; and
  (c) the nonlinearity $\sigma(\langle a',x\rangle -\beta)$ with $a' =
  4a$ and $\beta = 12$. Show are the output values for points in a
  rectangular region of the feature space (top row), and for points
  $x$ with $[x]_2 = 0$ (bottom row). }\label{Fig:Example1}
\end{figure}

In addition to normal direction and location, halfspaces are
characterized by the sharpness of the transition region. This property
can be controlled in two similar ways (see also
Section~\ref{Sec:ConstraintsRegularization}). The first is to scale
both $a$ and $\beta$ by a positive scalar $\gamma$. Doing so does not
affect the location or orientation of the hyperplane but does scale
the input to the sigmoid by the same quantity. As a
consequence, choosing $\gamma > 1$ shrinks the transition region,
whereas choosing $\gamma < 1$ causes it to widen. The second way is to
replace the activation function $\sigma$ by $\sigma_{\gamma}$.
Scaling only $a$ affects the sharpness of the transition in the same
way, but also results in a shift of the hyperplane along the normal
direction whenever $\beta \neq 0$.  Note however that the activation
functions are typically fixed and the properties of the halfspaces are
therefore controlled only by the weight and bias
terms. Figure~\ref{Fig:Example1}(c) illustrates the sharpening of the
halfspace and the use of $\beta$ to change its location.

\subsection{Combination of intermediate regions}\label{Sec:SetOperations}

The second layer combines the halfspace regions defined in the first
layer resulting in new regions in each of the output nodes. In case
step-function activation functions are used, the operations used to
combine the regions correspond to set operations including complements
($^c$), intersection ($\cap$), and unions ($\cup$). The same
operations are used in subsequent layers, thereby enabling the
formation of increasingly complex regions. The use of a sigmoidal
function instead of the step function does not significantly change
the types of operations, although some care needs to be taken.

Some operations are best explained when working with input coming from
the logistic function (with values ranging from 0 to 1) rather than
from the sigmoid function (ranging from -1 to 1).  Note however that
output $x$ from a sigmoid function can easily be mapped to the output
$x' = (x-1)/2$ from a logistic function, and vice versa, by
appropriately scaling the weight and bias terms in the next layer.
Any linear operation $Ax' - b$ on the logistic output then becomes
$A(x - 1)/2 - b = \tilde{A}x - \tilde{b}$ with $\tilde{A} = A/2$ and
$\tilde{b} = b + Ae/2$. In other words, with appropriate changes in
$A$ and $b$ we can always choose which of the two activation functions
the input comes from, regardless of which function was actually used.

\subsubsection{Elementary Boolean operations}\label{Sec:ElementaryBoolean}

To make the operations discussed in this section more concrete we
apply them to input generated by a first layer with the following
parameters:
\[
A_1 = \left[\begin{array}{rr}9 & 1 \\ -2 & 6\end{array}\right],\quad
b_1 = \left[\begin{array}{r}-2\\-1\end{array}\right],\quad\mbox{and}\quad
\nu_1 = \sigma_3.
\]
The two resulting halfspace regions $\mathcal{R}_1$ and
$\mathcal{R}_2$ are illustrated in Figures~\ref{Fig:Example3}(a) and
\ref{Fig:Example3}(b). For simplicity we denote the parameters for the
second layer by $A$ and $b$, omitting the subscripts. In addition, we
omit all entries that are not relevant to the operation and apply
appropriate padding with zeros where needed is implied.

\begin{figure}[t]
\centering
\begin{tabular}{ccc}
\includegraphics[width=0.31\textwidth]{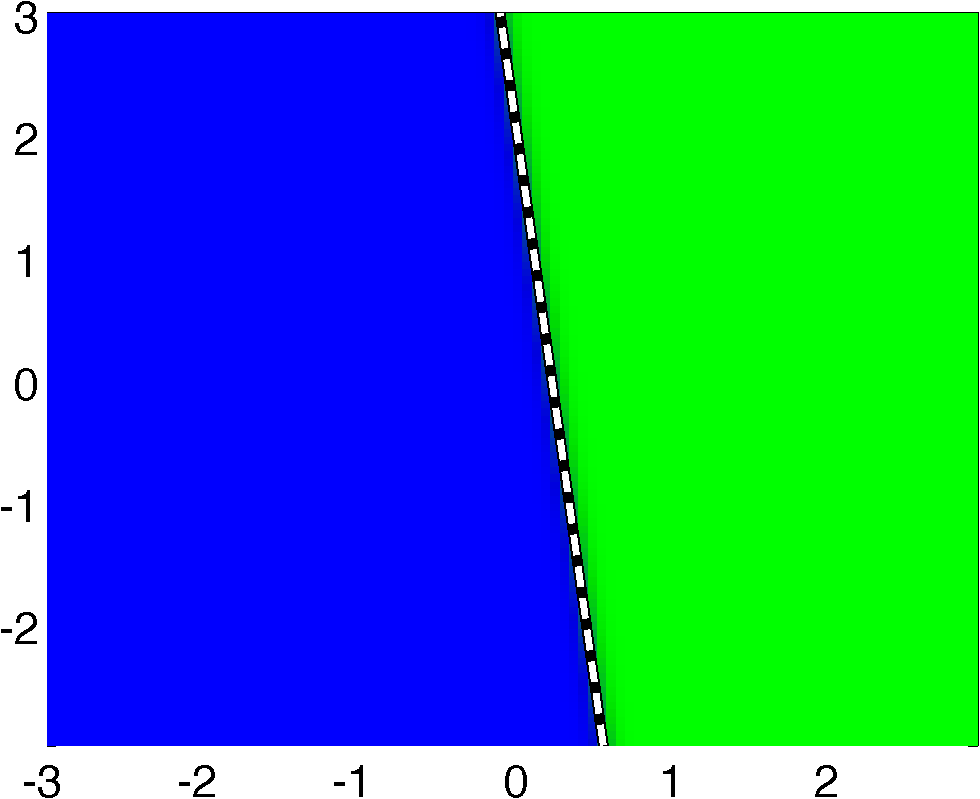}%
\begin{picture}(0,0)(0,0)%
\put(-39,65){\color{black}$\mathcal{R}_1$}
\end{picture}&
\includegraphics[width=0.31\textwidth]{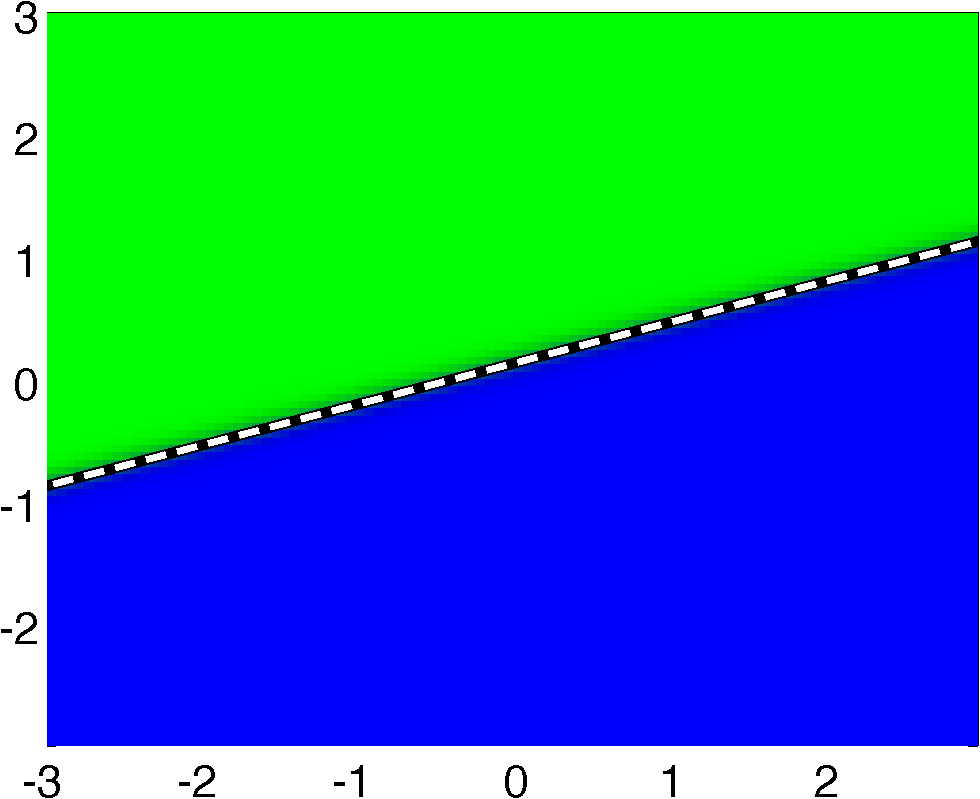}%
\begin{picture}(0,0)(0,0)%
\put(-84,96){\color{black}$\mathcal{R}_2$}
\end{picture}&
\includegraphics[width=0.31\textwidth]{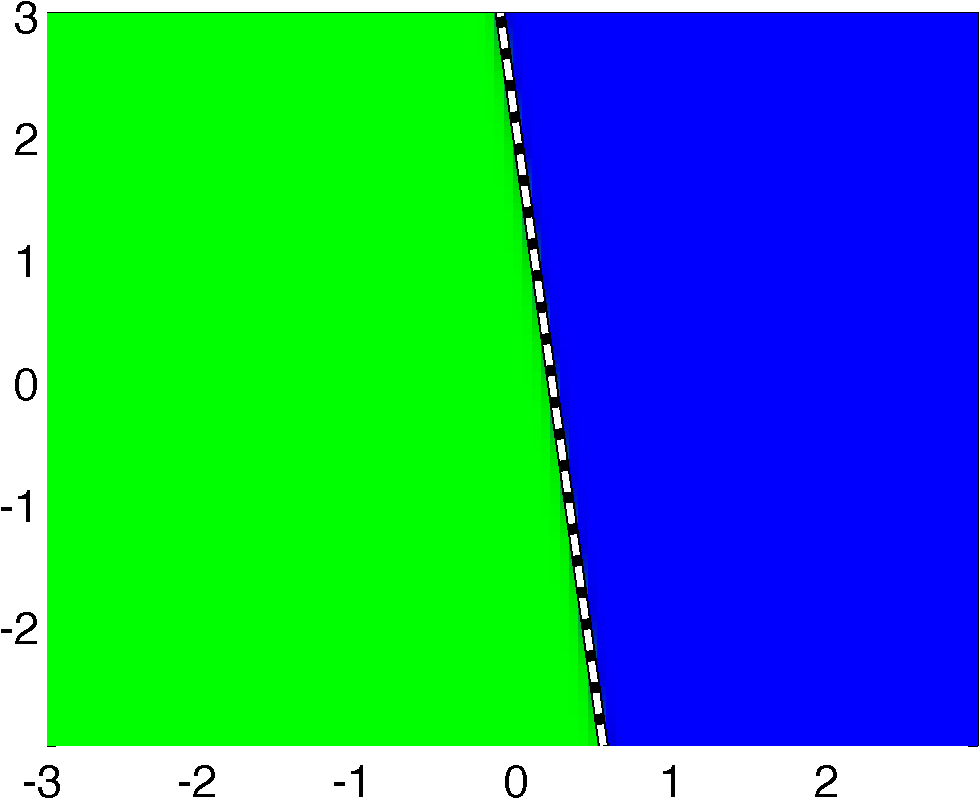}\\
({\bf{a}}) Region $\mathcal{R}_1$ &
({\bf{b}}) Region $\mathcal{R}_2$&
({\bf{c}}) Complement: $({\mathcal{R}}_1)^c$ \\[6pt]
\includegraphics[width=0.31\textwidth]{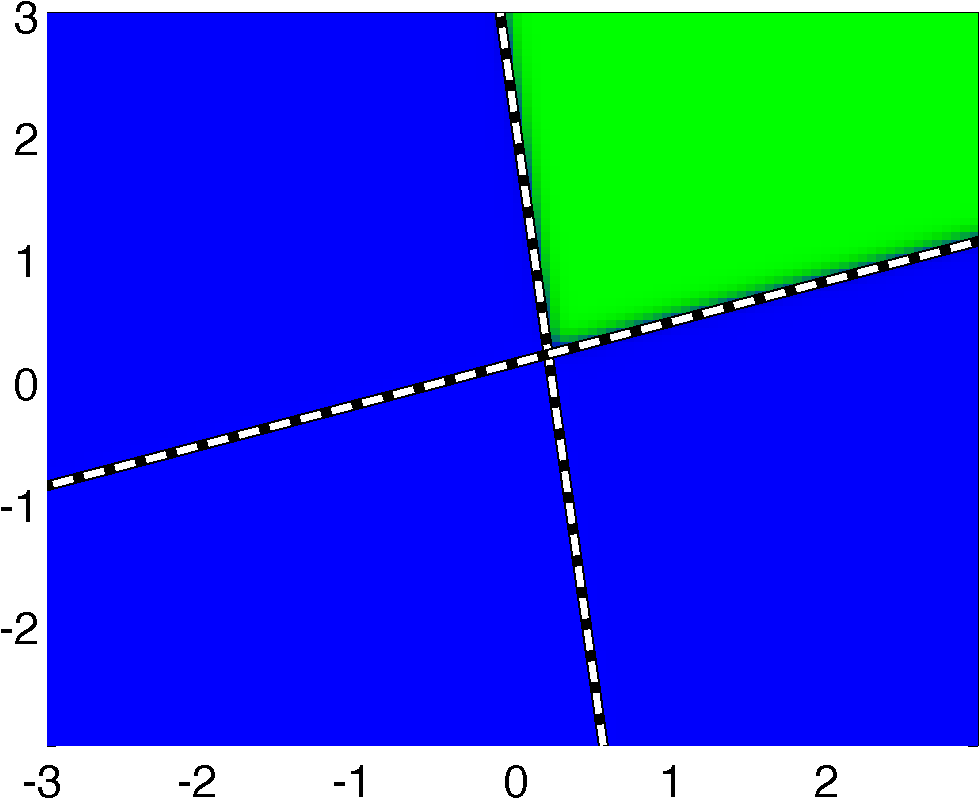}&
\includegraphics[width=0.31\textwidth]{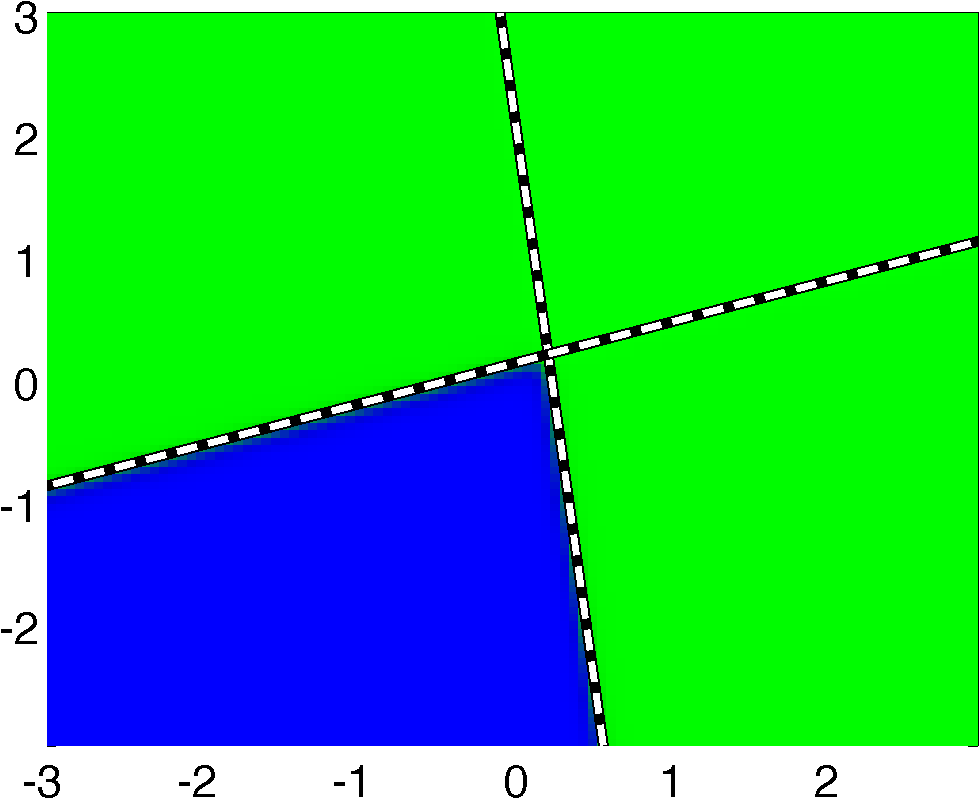}&
\includegraphics[width=0.31\textwidth]{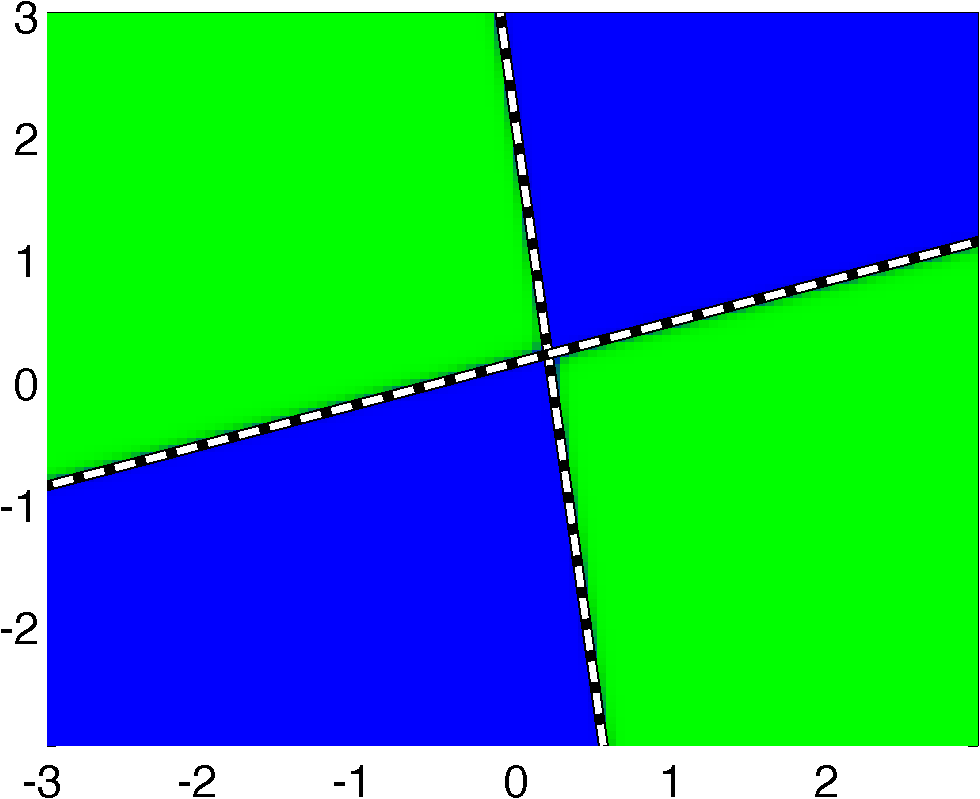}\\
({\bf{d}}) $\mathcal{R}_1 \cap \mathcal{R}_2$ &
({\bf{e}}) $\mathcal{R}_1 \cup \mathcal{R}_2$ &
({\bf{f}}) $((\mathcal{R}_1)^c \cap \mathcal{R}_2) \cup (\mathcal{R}_1
\cap (\mathcal{R}_2)^c)$ \\
&&
$=$ ($\mathcal{R}_1$ {\sc{xor}} $\mathcal{R}_2$)
\end{tabular}
\caption{Illustration of (a),(b) regions defined by the first neural
  network layer, and (c)--(f) various Boolean set operations applied
  to them in subsequent layers.}\label{Fig:Example3}
\end{figure}

\paragraph{Constants}

Constants can be generated by choosing $A = 0$ and choosing a
sufficiently large positive or negative offset values. For example,
choosing $b = 100$ gives a region that spans the entire domain
(representing the logical {\sc{true}}), whereas choosing $b = -100$
results in the empty set (or logical {\sc{false}}).

\paragraph{Unary operations}

The simplest unary operation, the (Boolean) identity function, can be defined as
\[
A_{I} = 1, \qquad b_{I} = 0.\tag{Identity}
\]
This function works well when used in conjunction with a step
function, but has an undesirable damping effect when used with the
sigmoid function: input values up to 1 are mapped to output values up
to $\sigma(1) \approx 0.46$, and likewise for negative values. While
such scaling may be desirable in certain cases, we would like to
preserve the clear distinction between high and low confidence
regions. We can do this by scaling up $A$, which amplifies the input
to the sigmoid function and therefore its output.  Choosing $A_{I}=3$,
for example, would increase the maximum confidence level to $\sigma(3)
\approx 0.91$.  As noted towards the end of
Section~\ref{Sec:Halfspaces}, the same can be achieved by working with
the activation function $\sigma_3$, and to avoid getting distracted by
scaling issues like these we will work with $\nu_2 = \sigma_3$
throughout this section. We note that the identity function can be
approximated very well by scaling down the input and taking advantage
of the near-linear part of the sigmoid function around zero. The
output can then be scaled up again in the next layer to achieve the
desired result.
Similar to the identity function, we define the complement of a set as
\[
 A_{c} = -1,\qquad b_{c} =0.\tag{Complement}
 \]
 The application of this operator to $\mathcal{R}_1$ is illustrated in
 Figure~\ref{Fig:Example3}(c). Just to be clear, note that in this
 case the full parameters to the second layer would be $A = [-1,0]$
 and $b = 0$.

\paragraph{Binary operations}

When taking the intersection of regions $\mathcal{R}_1$ and
$\mathcal{R}_2$ we require that the output values of the corresponding
units in the network sum up to a value close to two. This is
equivalent to saying that when we subtract a relatively high value,
say $1.5$, from the sum, the outcome should remain positive. This
suggests the following parameters for binary intersection:
\[
A_{\,\cap} = [1,1],\qquad b_{\,\cap} = 1.5.\tag{Intersection}
\]

We now combine the intersection and complement operations to derive the
union of two sets, and to illustrate how complements of sets can be
applied during computations. By De Morgan's law, the union operator
can be written as $\mathcal{R}_1 \cup \mathcal{R}_2 =
((\mathcal{R}_1)^c \cap (\mathcal{R}_2)^c)^c$. Evaluation of this
expression is done in three steps: taking the individual complements
of $\mathcal{R}_1$ and $\mathcal{R}_2$, applying the intersection, and
taking the complement of the result. This can be written in linear form as
\[
A_{c}\left(A_{\,\cap}\left(\left[\begin{array}{cc}A_{I} & \\ &
         A_{c}\end{array}\right]x + \left[\begin{array}{c}b_{I}\\ b_{c}\end{array}\right]\right) + b_{\,\cap}\right) + b_c.
\]
Substituting the weight and bias terms and simplifying yields
parameters for the union:
\[
A_{\,\cup} = [1,1],\qquad b_{\,\cup} = -1.5.\tag{Union}\label{Eq:Union}
\]

It can be verified that the intersection can similarly be derived from
the union operator based using $\mathcal{R}_1 \cap \mathcal{R}_2 =
((\mathcal{R}_1)^c \cup (\mathcal{R}_2)^c)^c$ Results obtained with
both operators are shown in Figures~\ref{Fig:Example3}(e)
and~\ref{Fig:Example3}(f).

\subsubsection{General $n$-ary operations}\label{Sec:KofN}

We now consider general operations that combine regions from more than
two units. It suffices to look at a single output unit $\sigma(\langle
a,x\rangle - \beta)$ with weight vector $a$ and bias term $\beta$. Any
negative entry in $a$ mean that the corresponding input region is
negated and that its complement is used, whereas zero valued entries
indicate that the corresponding region is not used. Without loss
of generality we assume that all input regions are used and normalized
such that all entries in $a$ can be taken strictly positive. We again
start by looking at the idealized situation where inputs are generated
using a step function with outputs -1 or 1.  When $a$ is the vector of
all ones, and $k$ out of $n$ inputs are positive we have $\langle
a,x\rangle = k - (n-k) = 2k-n$. Choosing activation level $\beta =
2k-n-1$ therefore ensures that the output of the unit is positive
whenever at least $k$ out of $n$ inputs are positive. As extreme cases
of this we obtain the $n$-ary intersection with $k = n$, and the
$n$-ary union by choosing $k = 1$. Weights can be adjusted to indicate
how many times each region gets counted.

\begin{figure}[t]
\centering
\begin{tabular}{ccc}
\includegraphics[width=0.31\textwidth]{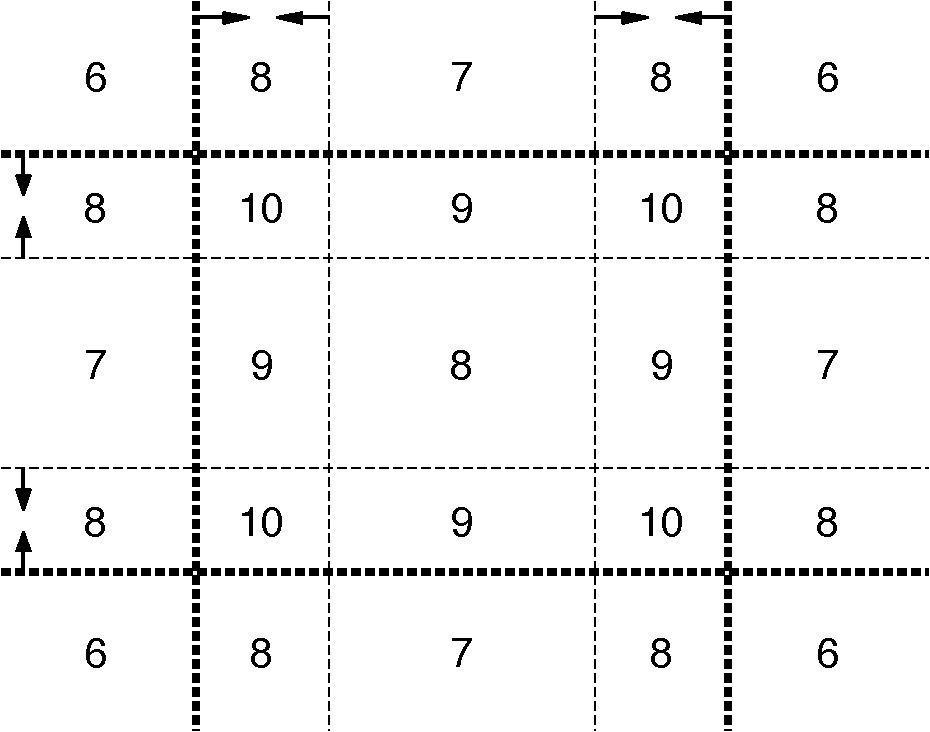}&
\includegraphics[width=0.31\textwidth]{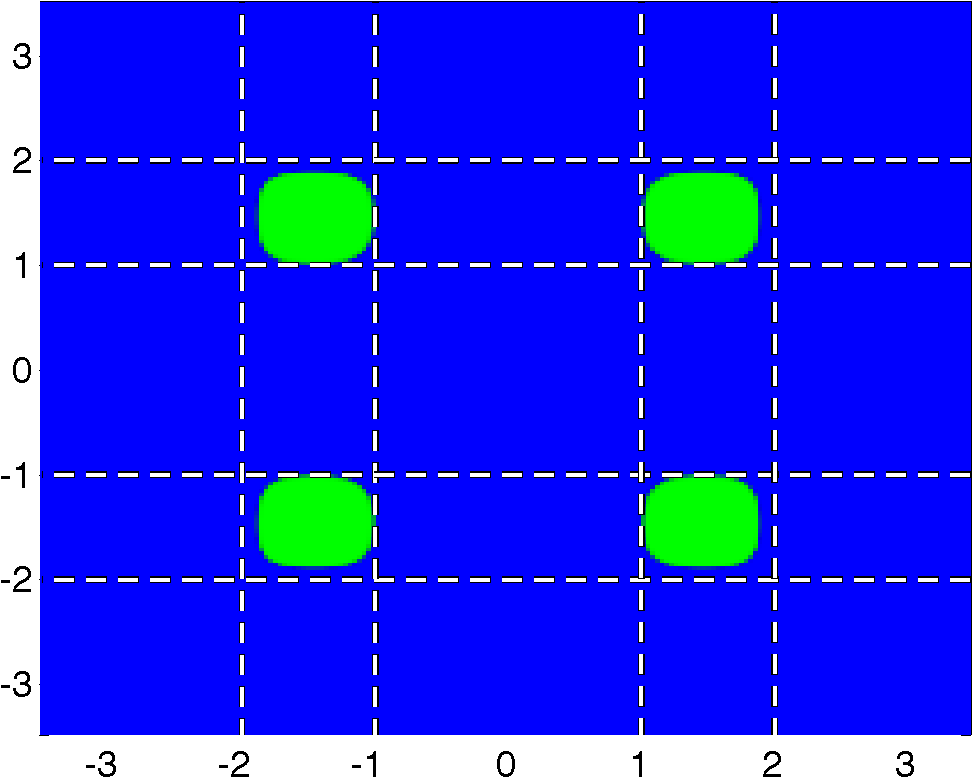}&
\includegraphics[width=0.31\textwidth]{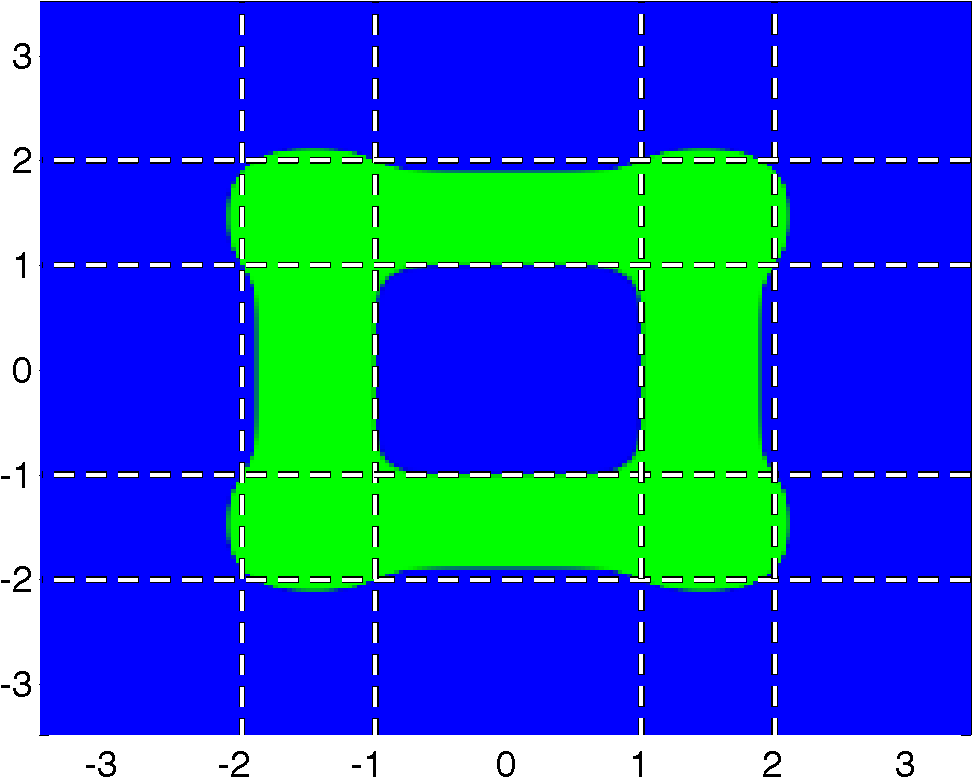}\\
({\bf{a}}) & ({\bf{b}}) & ({\bf{c}}) \\[6pt]
\includegraphics[width=0.31\textwidth]{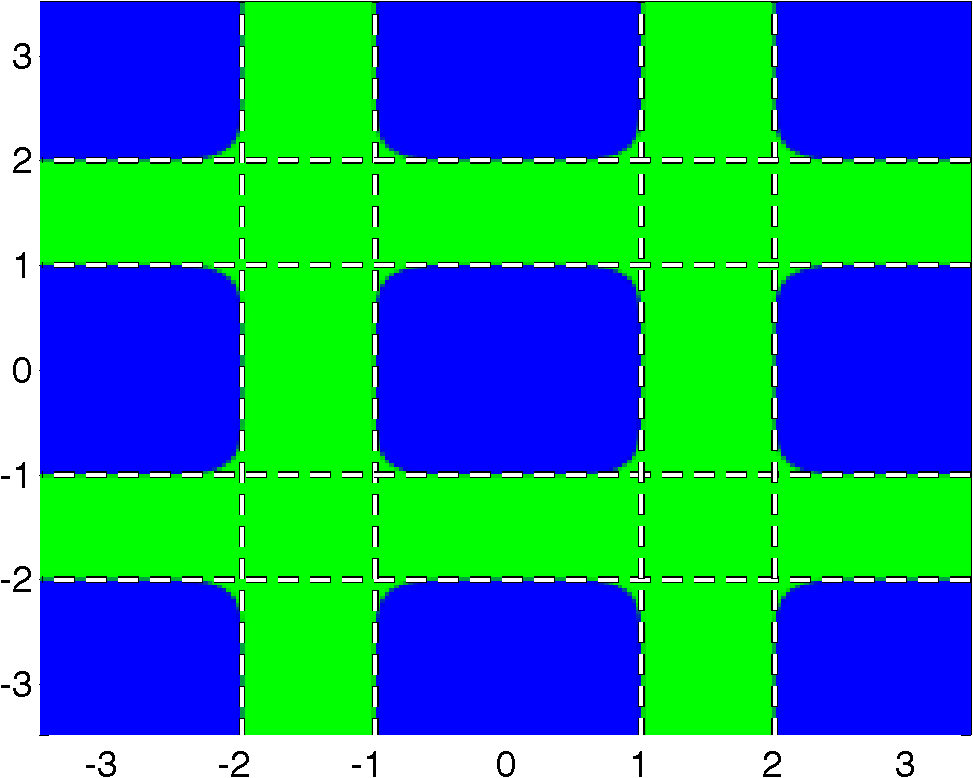}&
\includegraphics[width=0.31\textwidth]{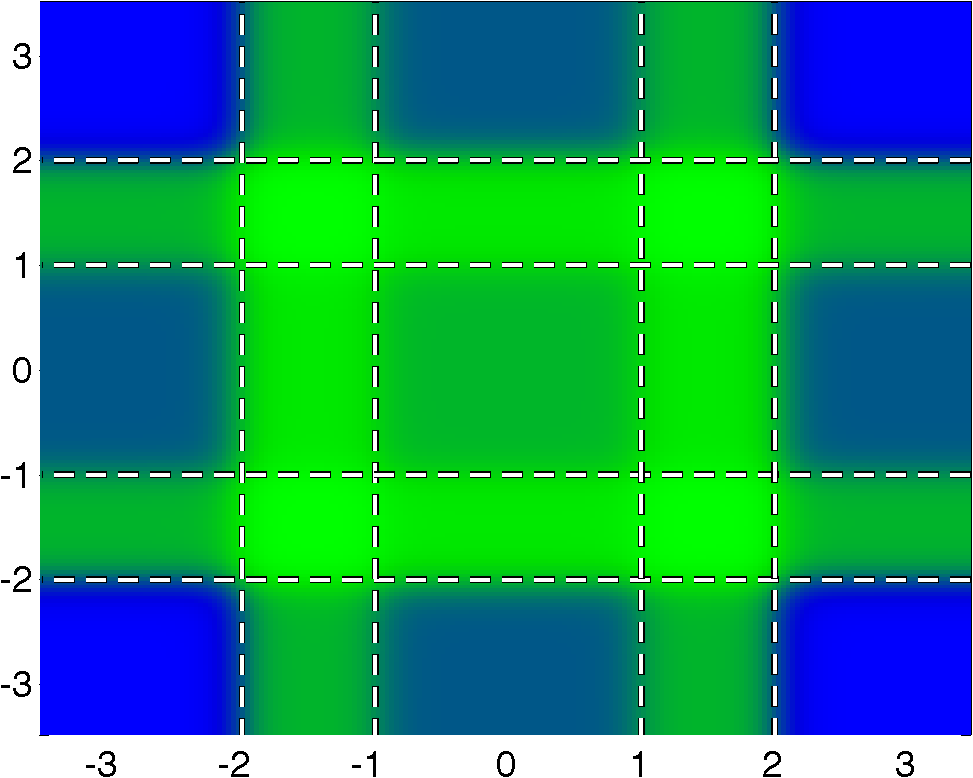}&
\includegraphics[width=0.31\textwidth]{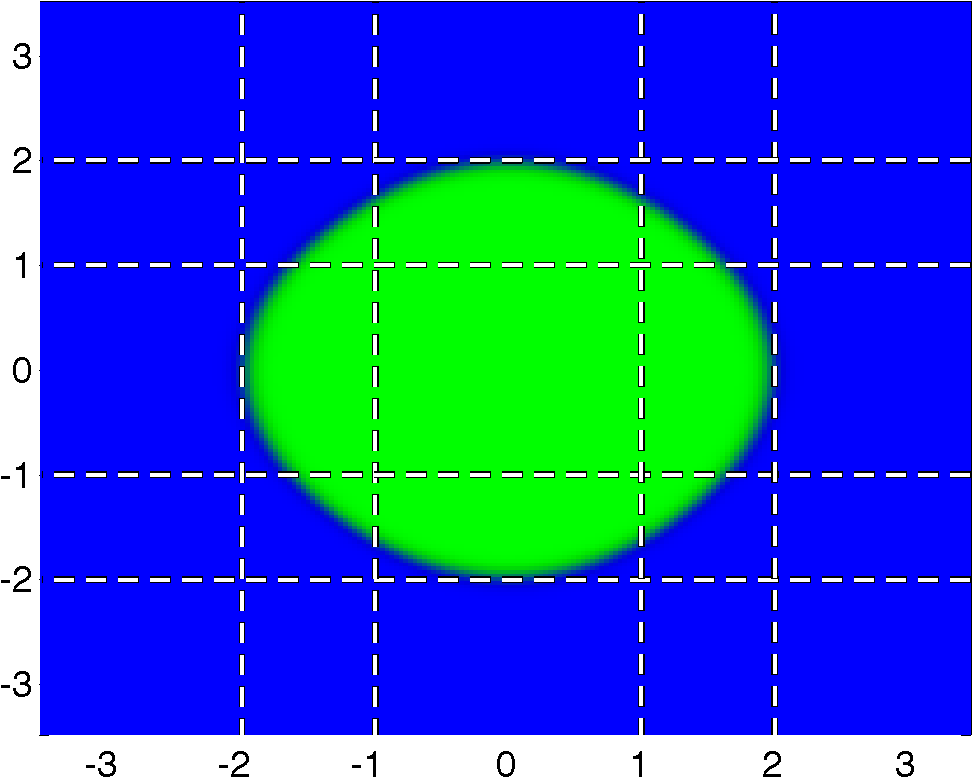}\\
({\bf{d}}) & ({\bf{e}}) & ({\bf{f}})\\[6pt]
\end{tabular}
\caption{Application of the $n$-ary operator. Plot (a) shows the
  location and orientation of the eight hyperplanes and the total
  weight for each of the cells when using weight two for the outer
  hyperplanes (thick dashed line) and unit weight for the inner
  hyperplanes (thin dashed line). Plots (b) and (c) show the regions
  formed when choosing $\beta = 9.5$ and $\beta = 8.5$ respectively,
  with activation function $\nu_1 = \ell_{10}$ and $\nu_2 =
  \sigma_{50}$. Plot (d) shows the region obtained when assigning unit
  weights to each hyperplanes and using $\beta = 4.5$. In plot (e) we
  change the settings from plot (b) by replacing the second activation
  function to $\sigma_1$ and using $\beta = 6.5$. The lack of
  amplification results in a region with four different confidence
  levels. Plot (f) illustrates the formation of a smooth circular
  region using only the outer four hyperplanes together with
  activation functions $\nu_1 = \ell_1$ and $\nu_2 = \sigma_{50}$, and
  threshold $\beta = 6.5$.}\label{Fig:ExampleKofN}
\end{figure}

It was noted by Huang and Littmann~\cite{HUA1988La} that complicated
and highly non-intuitive regions can be formed with the general
$n$-ary operations, even in the second layer. As an example, consider
the eight hyperplane boundaries plotted in
Figure~\ref{Fig:ExampleKofN}(a).  The weight assigned to each
hyperplane determines the contribution to each cell that lies within
the enclosed halfspace. The total contributions for each cell shown in
Figure~\ref{Fig:ExampleKofN}(a) represent the total weight obtained
when using weight two to the outer hyperplanes and a unit weight for
the inner hyperplanes, combined with step function input from
0 to 1. Adding up values for so many regions in a single step worsens
the scaling issue mentioned for the unitary operator: In this case
choosing a threshold of $\beta = 9.5$ leads to values ranging from
$-3.5$ to $0.5$ before application of the sigmoid function. Using
$\nu_1 = \ell_{10}$ and $\nu_2 = \sigma_{50}$ for amplification with
different weight vectors and threshold values we obtain the regions
shown in Figures~\ref{Fig:ExampleKofN}(b) to~\ref{Fig:ExampleKofN}(d).

Removing the large amplification factor in the second layer can lead
to regions with low or varying confidence levels. For the mixed
weights example, using $\nu_2 = \sigma_1$ and threshold $\beta = 6.5$
causes the intended region to have four distinct confidence levels, as
shown in Figure~\ref{Fig:ExampleKofN}(e). Low weights can also be
leveraged to obtain a parsimonious representation of smooth regions
that would otherwise require the many more halfspaces. An example of
this is shown in Figure~\ref{Fig:ExampleKofN}(f) in which the four
outer halfspaces with soft boundaries are combined to form a smooth
circular region.

\subsection{Boolean function representation using two layers}

As seen from Section~\ref{Sec:ElementaryBoolean} neural networks can
be used to take the union of intersections of (possibly negated)
sets. In Boolean logic this form is called disjunctive normal form
(DNF), and it is well known that any Boolean function can be expressed
in this form (see also \cite{ANT2003a}). Likewise we could reverse the
order of the union and intersection operators and arrive at
conjunctive normal form (CNF), which is equally powerful. Two-layer
networks are, in fact, far stronger than this and can be used to
approximate general smooth functions. More information on this can be
found in \cite[Sec. 4.3.2]{BIS1995a}.

\subsection{Boundary regions and amplification}\label{Sec:BoundaryShifts}

\begin{figure}
\centering
\includegraphics[width=0.55\textwidth]{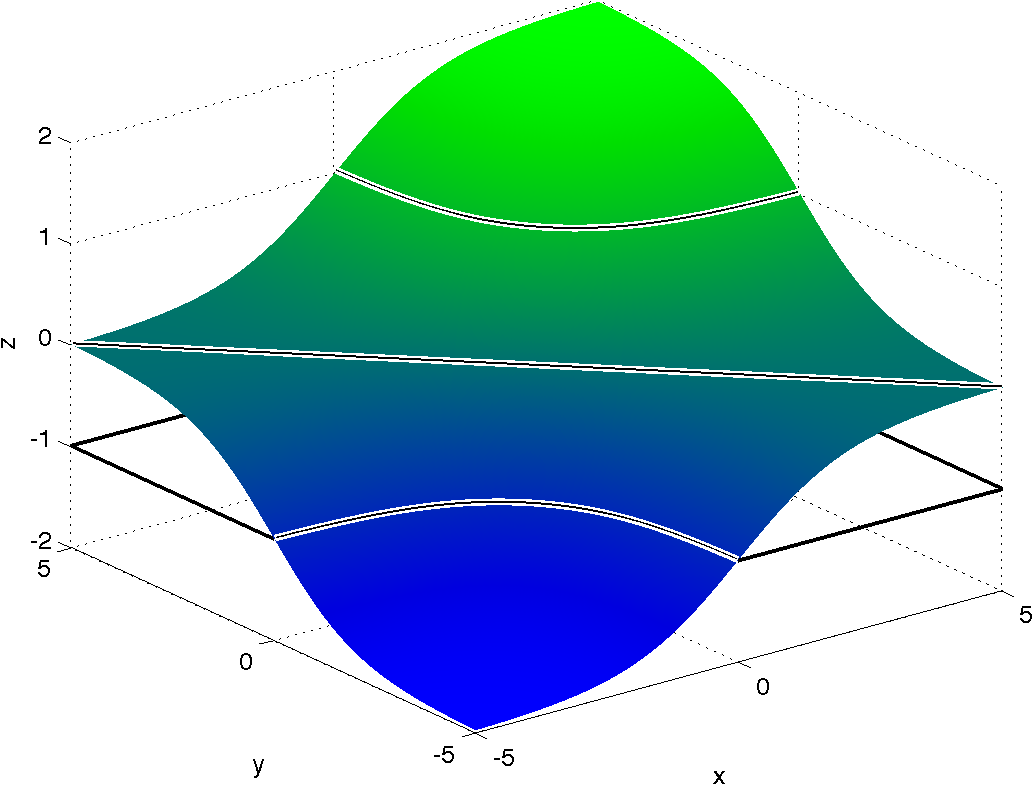}%
\begin{picture}(0,0)(0,0)
\put(-58,160){$\beta=1$}
\put(-38,112){$\beta=0$}
\put(-80,49){$\beta=-1$}
\end{picture}
\caption{Level sets of $z = \sigma(x) + \sigma(y)$ at values $\beta =
  -1,0,1$ along with a slice at $\beta = -1$.}\label{Fig:Levelsets2}
\end{figure}

The use of sigmoidal nonlinearity functions leads to continuous
transitions between different regions. The center of the transition
regions for a node can be defined as the set of feature points for
which the output of that node is zero. Given input $x_{k-1}$ for some
node at depth $k$ we first form $\langle a,x_{k-1}\rangle$, then
subtract the bias $\beta$, and apply the sigmoid function. The output
is zero if and only if $\langle a,x_{k-1}\rangle = \beta$, and the
transition center therefore corresponds to the level set of $\langle
a,x_{k-1}\rangle$ at $\beta$. For a fixed $a$ we can thus control the
location of the transition by changing $\beta$. As an example consider
a two-level neural network with the first layer parameterized by $A_1
= I$, $b_1 = 0$, and the second layer by $A_2 = [1,1]$, $b =
\beta$. Writing the input vector as $x_0 = [x,y]$ it can be seen that
$A_2 x_1 = \sigma(x) + \sigma(y)$, as illustrated in
Figure~\ref{Fig:Levelsets2}. All values greater than $\beta$ will be
mapped to positive values and, as discussed in
Section~\ref{Sec:ElementaryBoolean}, we again see that choosing $\beta
> 0$ approximates the intersection of the regions $x \geq 0$ and $y
\geq 0$, whereas choosing $\beta < 0$ approximates the union
(indicated in the figure by the lines at $z = -1$). What we are
interested in here is the location of the transition center. Clearly,
making the intersection more stringent by increasing $\beta$ causes
the boundary to shift and the resulting region to become
smaller. Another side effect is that the output range of the second
layer, which is given by $[\sigma(-2 - \beta),\sigma(2 - \beta)]$,
changes.  Choosing $\beta$ close to $2$, the supremum of the input
signal, means that the supremum of the output is close to zero,
whereas the infimum nearly reaches -1. To obtain larger positive
confidence levels in the output, without shifting the transition
center, we need to amplify the input by scaling $A_2$ and $b_2$ by
some $\gamma > 1$. In Figure~\ref{Fig:Levelsets} we study several
aspects of the boundary region corresponding to the setting used for
Figure~\ref{Fig:Levelsets2}, with the addition of scaling parameter
$\gamma$. For a given $\beta$ we choose $\gamma$ such that the maximum
output of $\sigma(\gamma(2 - \beta))$ is
$0.995$. Figures~\ref{Fig:Levelsets}(a)--(c) show the transition
region with values ranging from $-0.95$ and $0.95$ along with the
center of the transition with value $0$ and the region with values
exceeding $0.95$. Figure~\ref{Fig:Levelsets}(d) shows the required
scaling factors.

The ideal intersection of the two regions coincides with the positive
orthant and we define the shift in the transition boundary as the
limit of the $y$-coordinate of the zero crossing as $x$ goes to
infinity, giving
\[
\lim_{x\to\infty} \sigma^{-1}(\beta - \sigma(x)) = \sigma^{-1}(\beta - 1).
\]
The resulting shift values are show in
Figure~\ref{Fig:Levelsets}(e). Another property of interest is the
width of the transition region. Similar to the shift we quantify this
as the difference between the asymptotic $y$-coordinates of the
$-0.95$ and $0.95$ level set contours as $x$ goes to infinity. We
plot the results for several multiples of $\gamma$ in
Figure~\ref{Fig:Levelsets}(f). As expected, we can see that larger
amplification reduces the size of the transition intervals. The vertical
dashed line indicates the critical value of $\beta$ at which the
$-0.95$ contour becomes diagonal ($y = -x$) causing the transition
width to become infinite. The same phenomenon happens at smaller
$\beta$ when the multiplication factor is higher. Note that this break
down is due only to the definition of the transition width; the
transition region itself remains perfectly well defined throughout.

\begin{figure}[t]
\centering
\begin{tabular}{ccc}
\includegraphics[width=0.31\textwidth]{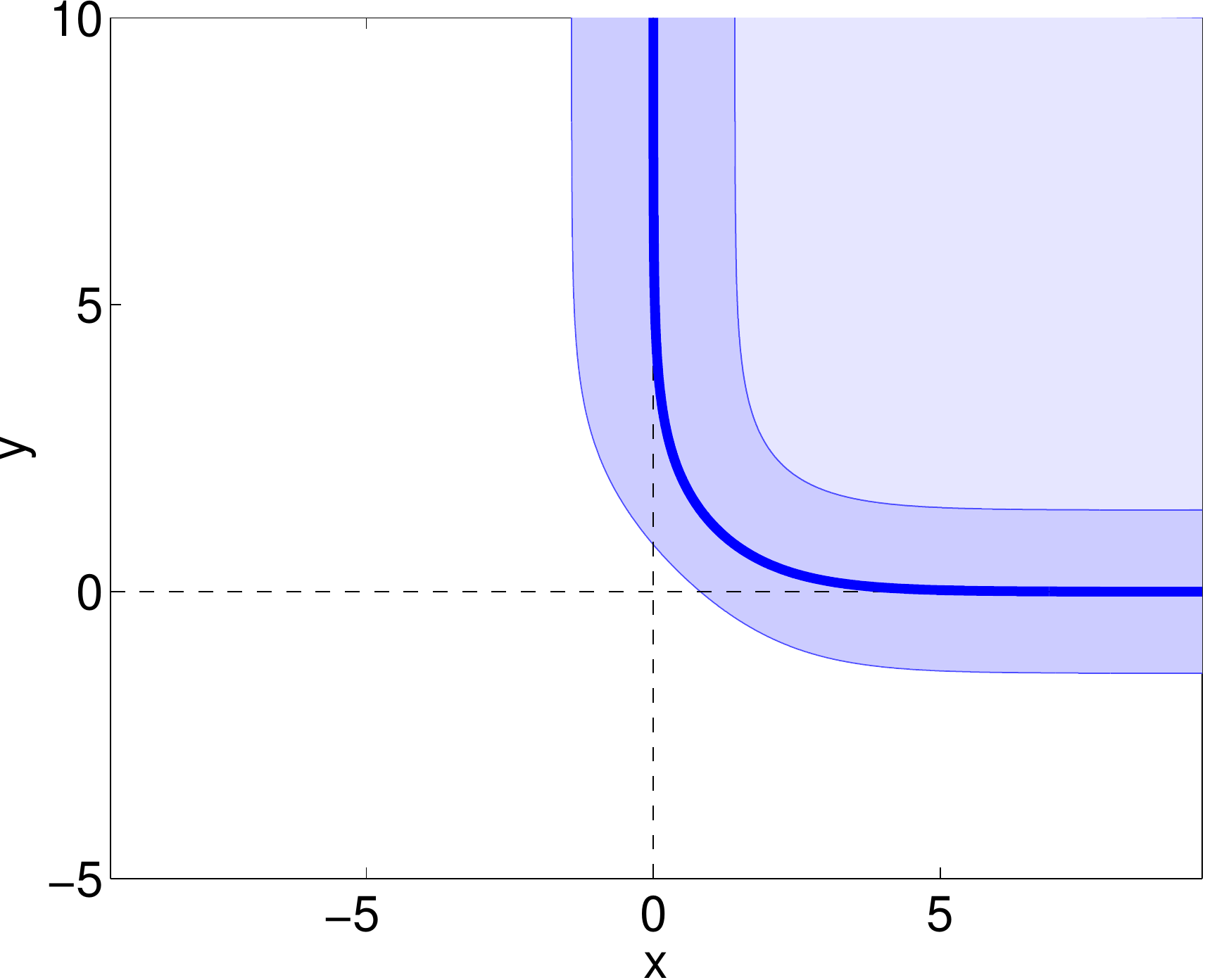}&
\includegraphics[width=0.31\textwidth]{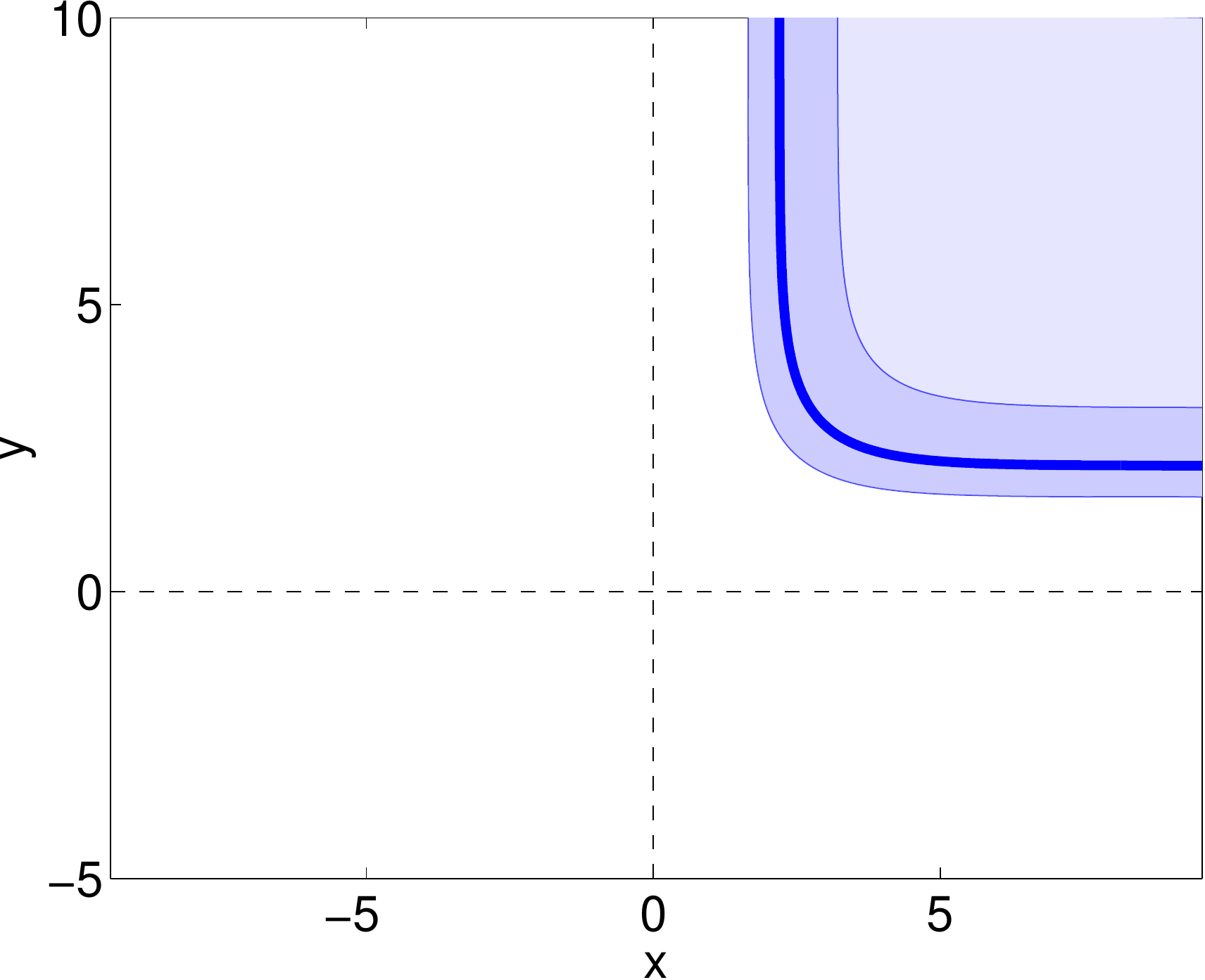}&
\includegraphics[width=0.31\textwidth]{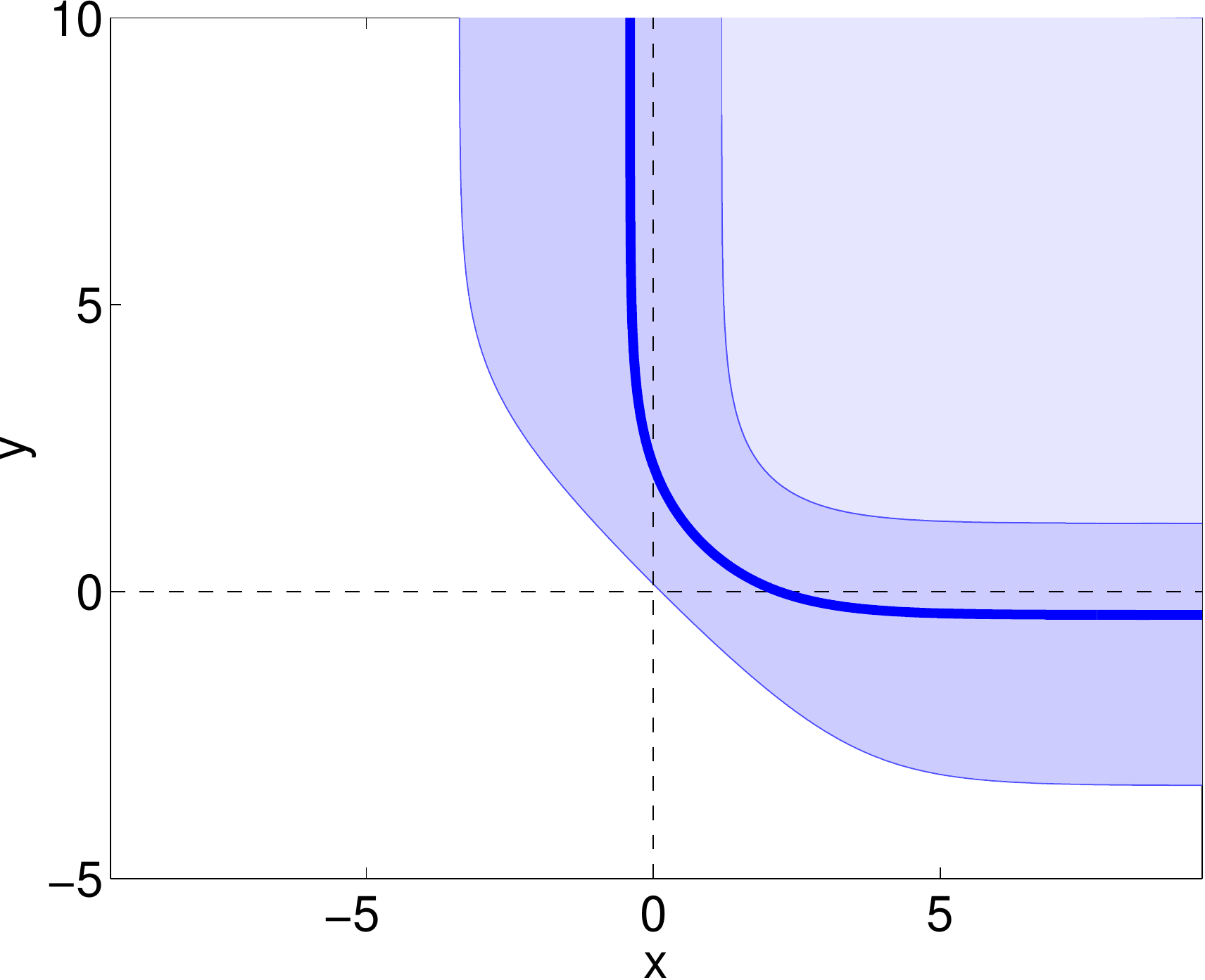}\\
({\bf{a}})  & ({\bf{b}})  & ({\bf{c}})\\[3pt]
\includegraphics[width=0.31\textwidth]{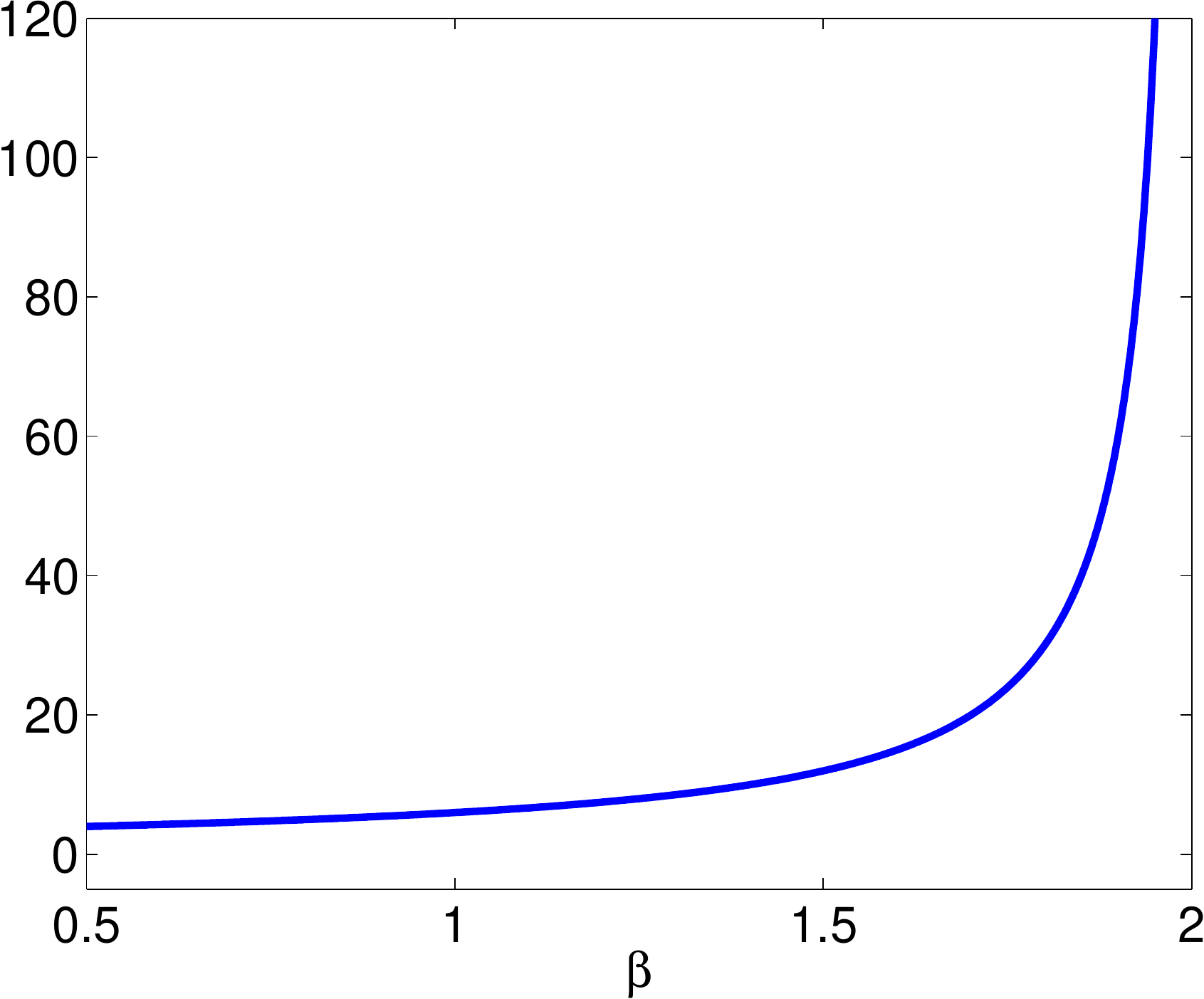}&
\includegraphics[width=0.31\textwidth]{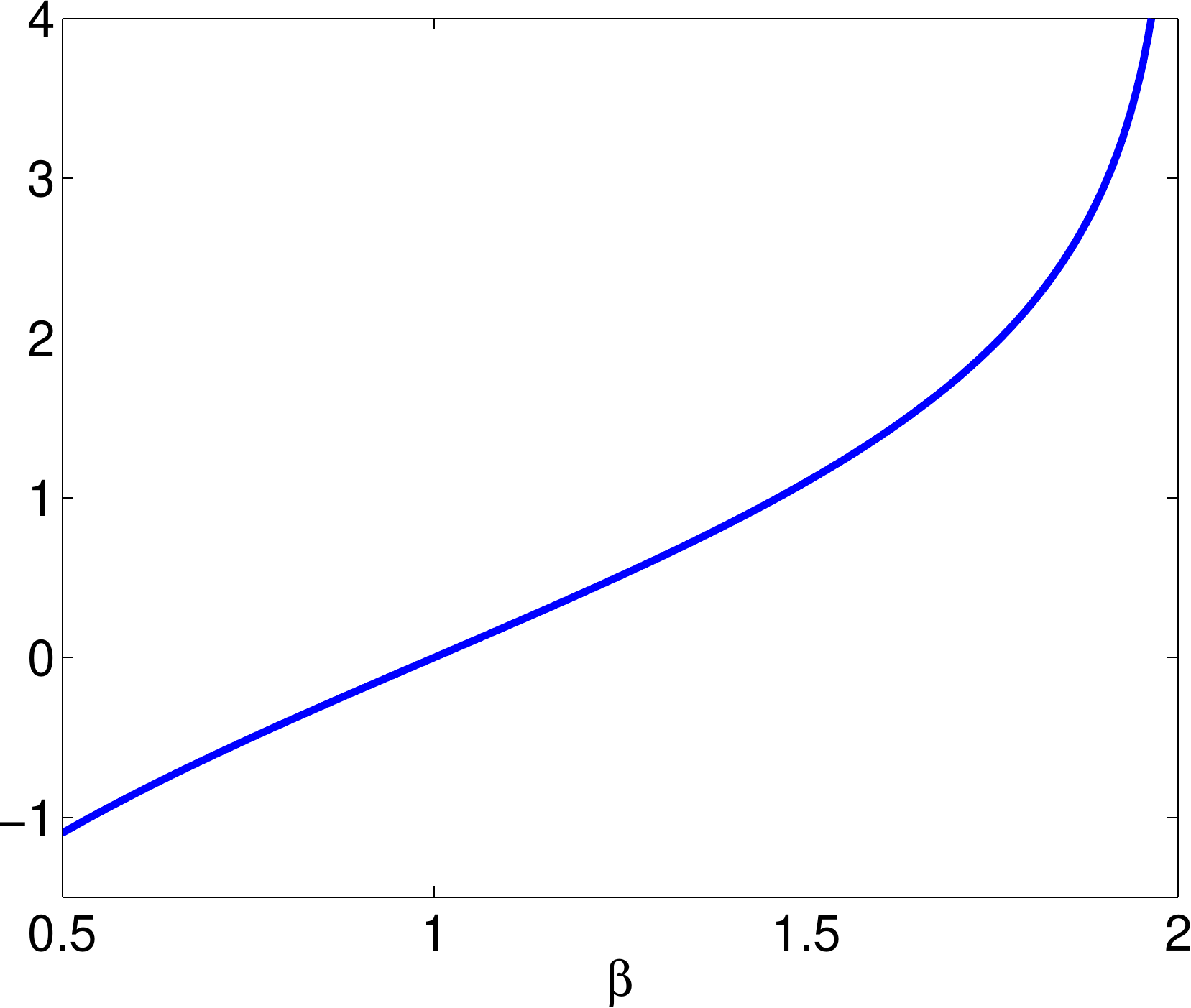} &
\includegraphics[width=0.31\textwidth]{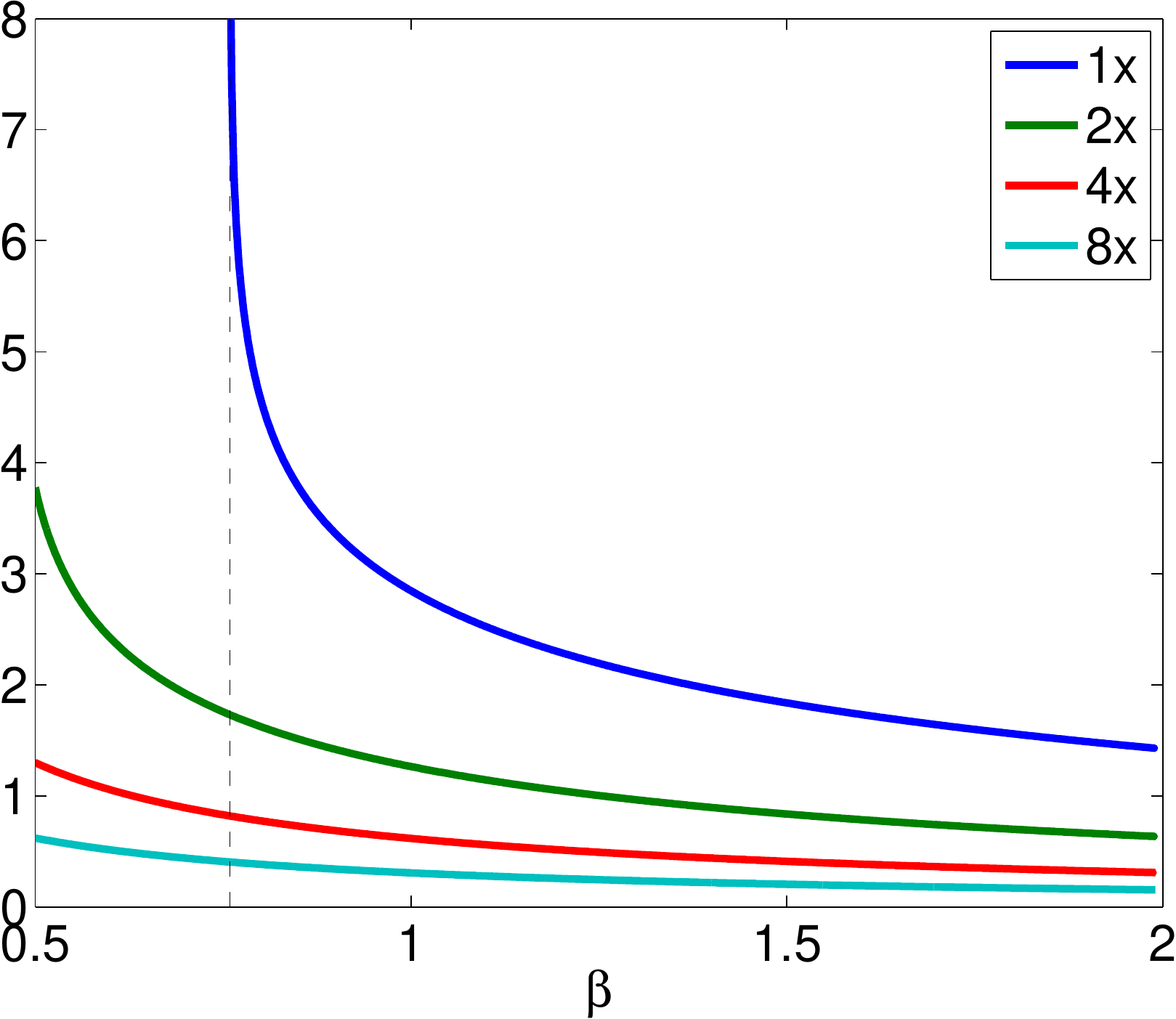}
\\
({\bf{d}}) & ({\bf{e}}) & ({\bf{f}})
\end{tabular}
\caption{Transition regions for
  $\sigma_\gamma(\sigma(x)+\sigma(y)-\beta)$ with contour lines at $0$
  and $0.95$ of the minimum and maximum values for (a) $\beta = 1$,
  (b) $\beta = 1.8$, and (c) $\beta = 0.6$. The value of scaling
  factor $\gamma$ is chosen such that output range reaches at least
  $\pm 0.995$. Plot (d) shows the required scaling factor as a
  function of $\beta$. The shift in the transition region is plotted
  in (e). Plot (f) shows the transition width as a function of
  $\beta$, using different multiples of $\gamma$.}\label{Fig:Levelsets}
\end{figure}

\subsection{Continuous to discrete}

The level-set nature of applying the nonlinearity as illustrated in
Figure~\ref{Fig:Levelsets2} allows the generation of decision
boundaries that look very different from any one of those used for its
input. One example of this was shown in
Figure~\ref{Fig:ExampleKofN}(f) in which a circular region was
generated by four axis-aligned hyperplanes, and we now describe
another. Consider the two hyperplanes in
Figures~\ref{Fig:RotateBoundary}(a,b), generated in the first layer
with respectively $a = [0.1,-0.1]$, $b = 0$, and $a = [0.1,0.1]$,
$b=0$. The small weights and the limited domain size cause the input
values to the nonlinearity to be small. As a result, the sigmoid
operates in its near-linear region around the origin and therefore
resembles scalar multiplication. Consequently, because the normals of
the first layers form a basis, we can use the second layer to
approximate any operation that would normally occur in the first
layer. For example we can choose $a_2 = [\cos(\alpha + \pi/4),
\sin(\alpha + \pi/4)]$ and $b_2 = 0$ to generate a close approximation
of a hyperplane at angle $\alpha$ (up to a scaling factor this weight
matrix is formed by multiplying the desired normal vector $a$ by the
rotation on the inverse of the weight matrix of the first layer). The
resulting regions of the second layer are shown in
Figures~\ref{Fig:RotateBoundary}(c,d) for $\alpha = 90^\circ$ and
$\alpha = 70^\circ$, respectively. This illustrates that, although
somewhat contrived, it is technically possible, at least locally, to
change hyperplane orientation after the first layer.

As decision regions propagate and form through one or more layers with
modest or large weights, their boundaries become sharper and we see a
gradual transition from continuous to discrete network behavior. In
the continuous regime, where the transitions are still gradual, the
decision boundaries emerge as level sets of slowly varying smooth
functions and therefore change continuously and considerably with the
choice of bias term. As the boundary regions become sharper the
functions tend to piecewise constant causing the level sets to change
abruptly only at several critical values while remaining fairly
constant otherwise, thus giving more discrete behavior. In
Figures~\ref{Fig:RotateBoundary}(e,f) we show intermediate stages in
which we scale the weights in the first layer
Figures~\ref{Fig:RotateBoundary}(d) by a factor of 10 and 20,
respectively. In addition, it can be seen that scaling in this case
does not just sharpen the boundaries, but actually severely distorts
them. Finally, it can be seen that the resulting region becomes
increasingly diagonal (similar to its sharpened input) as the weights
increase. This again emphasizes the more discrete nature of region
combinations once the boundaries of the underlying regions are sharp.

\begin{figure}
\centering
\begin{tabular}{ccc}
\includegraphics[width=0.31\textwidth]{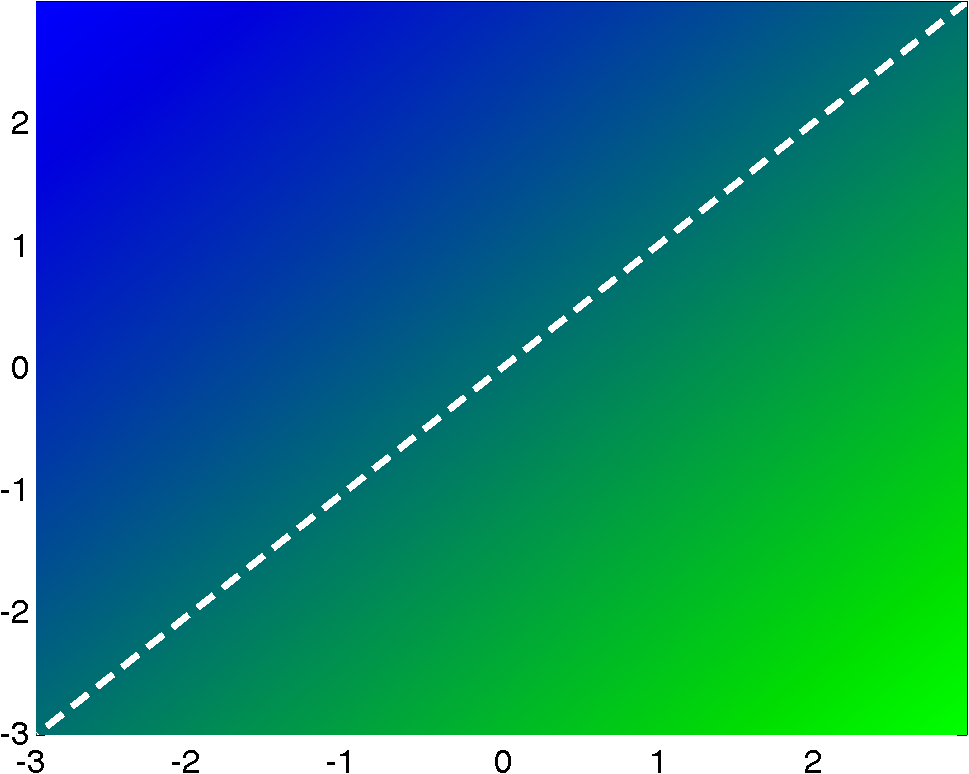}&
\includegraphics[width=0.31\textwidth]{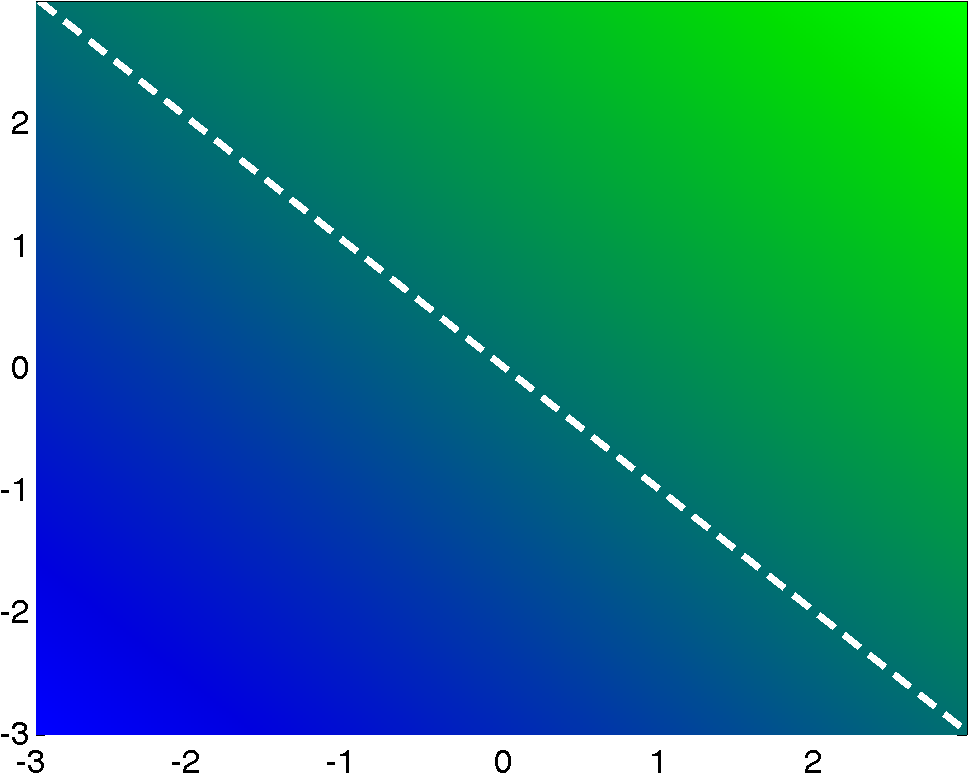}&
\includegraphics[width=0.31\textwidth]{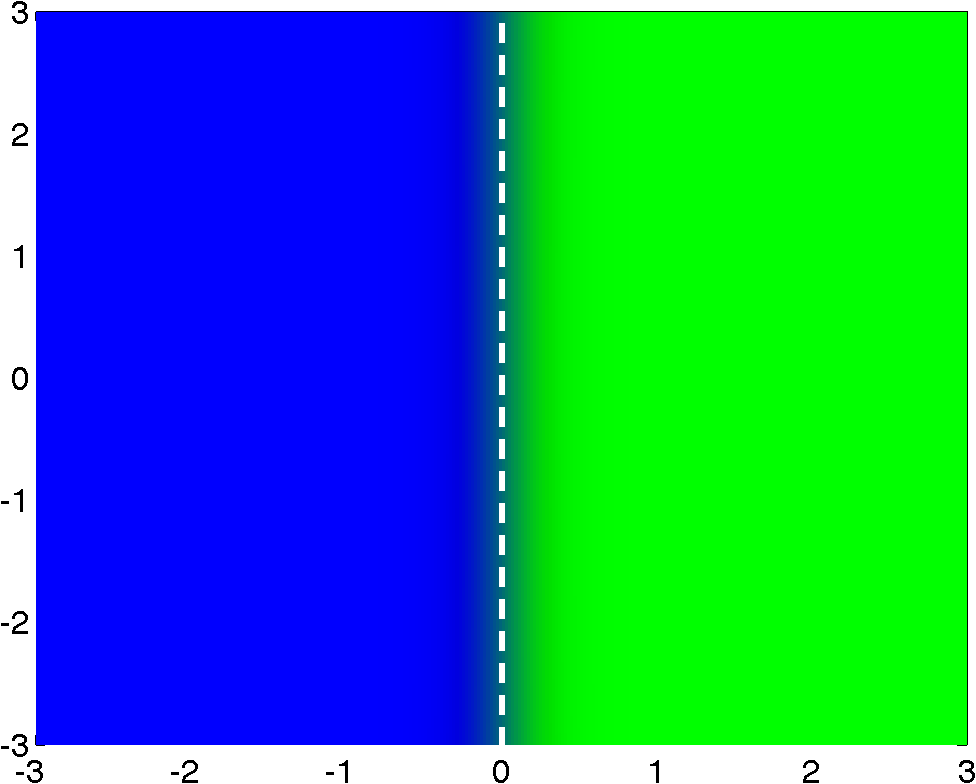}\\
({\bf{a}}) & ({\bf{b}}) & ({\bf{c}}) \\[4pt]
\includegraphics[width=0.31\textwidth]{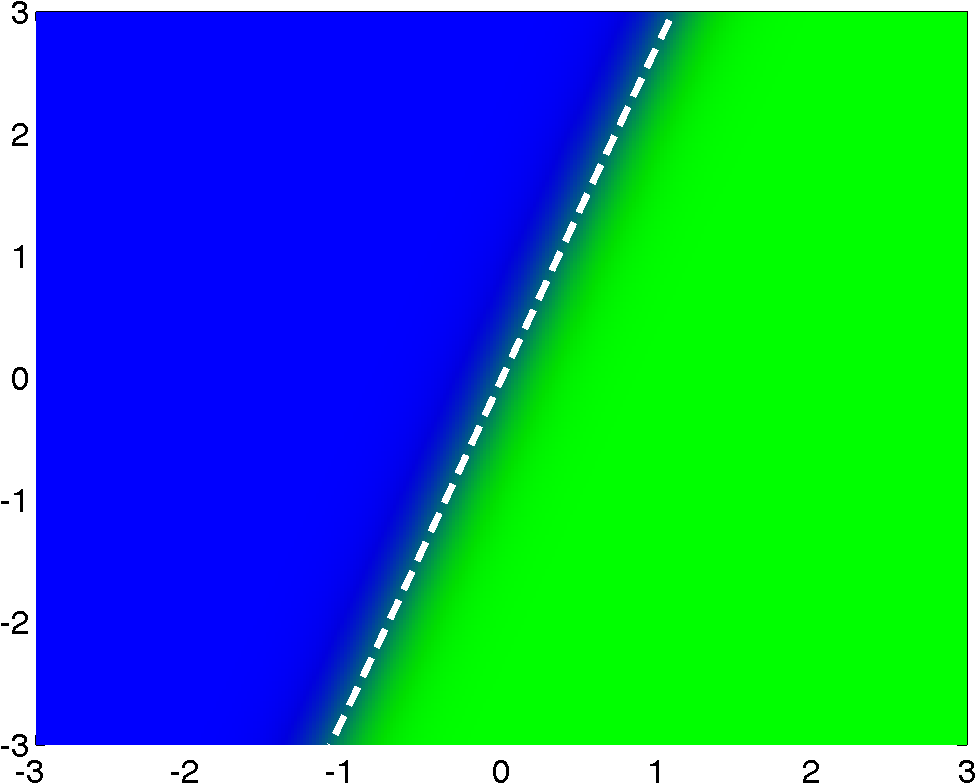}&
\includegraphics[width=0.31\textwidth]{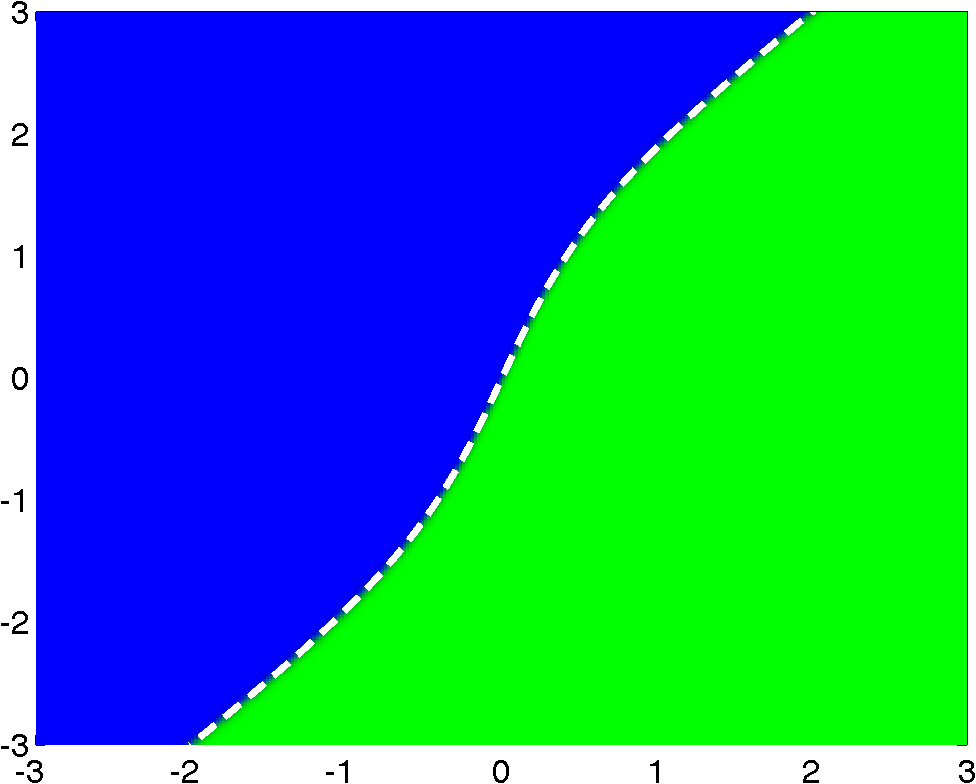}&
\includegraphics[width=0.31\textwidth]{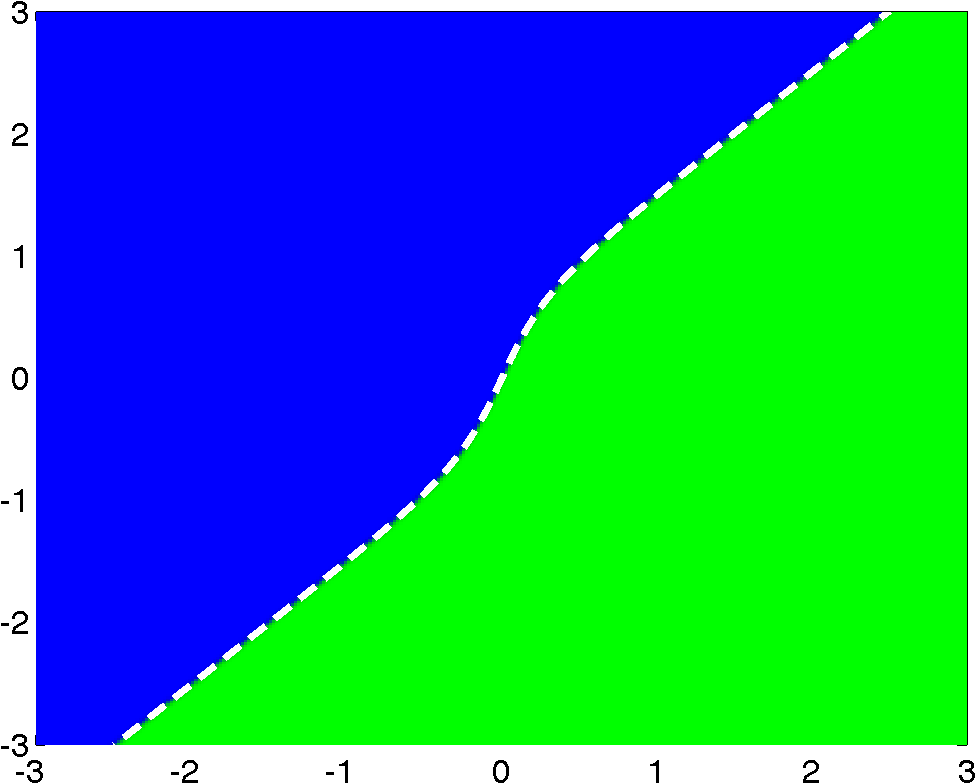}\\
({\bf{d}}) & ({\bf{e}}) & ({\bf{f}})
\end{tabular}
\caption{Combination of (a,b) two smooth regions defined by a diagonal
  hyperplane into (c) a vertical region, and (d) a region at an angle
  of $70$ degrees. Using the setup for (d) with scaled weights in the
  first layer gives the region shown in (e) for weight factor 10, and (f) for
  weight factor 20.}\label{Fig:RotateBoundary}
\end{figure}

\subsection{Generalized functions for the first layer}

The nodes in the first layer define geometric primitives, which are
combined in subsequent layers. Depending on the domain it may be
desirable to work with primitives other than halfspaces, or to provide
a set of different types. This can be achieved by replacing the inner
products in the first layer by more general functions $f_{\theta}(x)$
with training examples $x$ and (possibly shared) parameters
$\theta$. The traditional hyperplane is given by
\[
f_{\theta}(x) = \langle a, x\rangle + \beta,\qquad \theta = (a,\beta)
\]
For ellipsoidal regions we could then use
\[
f_{\theta}(x) = \alpha \norm{Ax-b}_2^2 + \beta,\qquad
\theta = (A,b,\alpha,\beta).
\]
More generally, it is possible to redefine the entire unit by
replacing both the inner-product and the nonlinearity with a general
function to obtain, for example, a radial-basis function unit
\cite{LEC1998BOMa}. In Figure~\ref{Fig:Primitives} we illustrate how a
mixture of two types of geometric primitives can form regions that
cannot be expressed concisely with either type alone.

\begin{figure}
\centering
\begin{tabular}{ccc}
\includegraphics[width=0.3\textwidth]{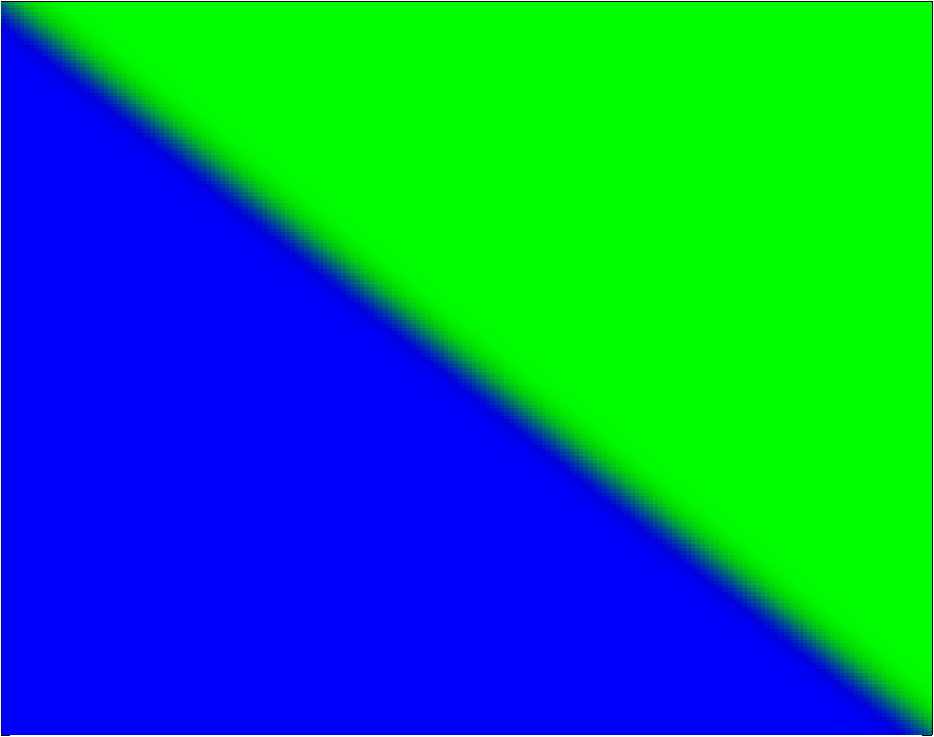}&
\includegraphics[width=0.3\textwidth]{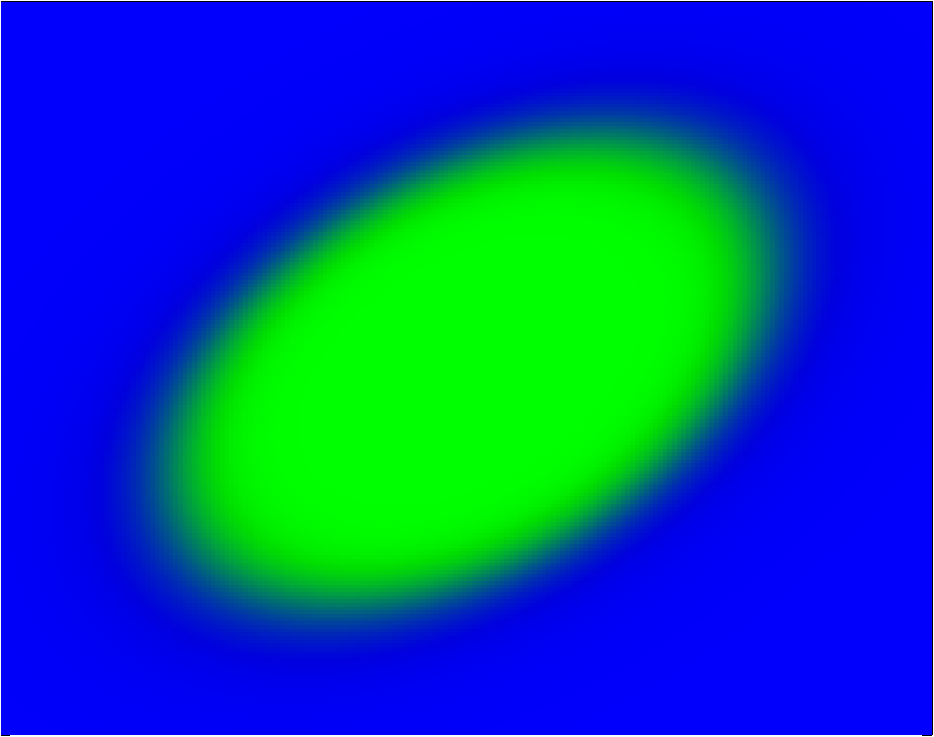}&
\includegraphics[width=0.3\textwidth]{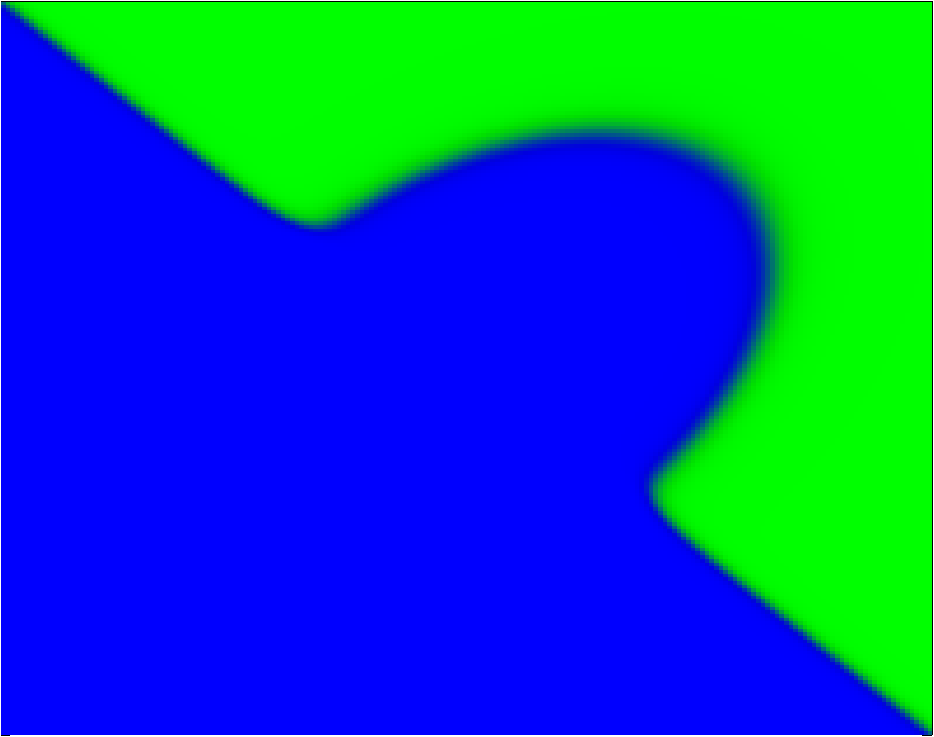}\\
({\bf{a}}) Linear, $\mathcal{R}_1$ &
({\bf{b}}) Gaussian, $\mathcal{R}_2$ &
({\bf{c}}) $\mathcal{R}_1 \cap \mathcal{R}_2^c$ \\
\end{tabular}
\caption{Shape primitives of type (a) halfspace, and (b) Gaussian,
  are combined to obtain (c).}\label{Fig:Primitives}
\end{figure}

\section{Region properties and approximation}\label{Sec:Regions}
The hyperplanes defined by the first layer of the neural network
partition the space into different regions. In this section we discuss
several combinatorial and approximation theoretic properties of these
regions.

\subsection{Number of regions}\label{Sec:NumberOfRegions}

One of the most fundamental properties to consider is the maximum
number of regions into which $\mathbb{R}^d$ can be partitioned using
$n$ hyperplanes. The exact maximum is well known to be
\begin{equation}\label{Eq:MaximumNumberOfRegions}
r(n,d) = \sum_{i=0}^{d}{n \choose i},
\end{equation}
and is attained whenever the hyperplanes are in general position
\cite[p.39]{SCH1901a}. With the hyperplanes in place, the subsequent
logic layers in the neural network can be used to identify each of
these regions by taking the union of (complements of)
halfspaces. Individual regions can then be combined using the union
operator.

%
%
%
%
%
%
%
%
%
%

\subsection{Approximate representation of classes}

\subsubsection{Polytope representation}
When the set of points $\mathcal{C} \subset \mathbb{R}^d$ belonging to
a class form a bounded convex set, we can approximate it by a polytope
$\mathcal{P}$ given by the bounded intersection of a finite number of
halfspaces. The accuracy of such an approximation can be expressed as
the Hausdorff distance between the two sets, defined as:
\[
\rho_{H}(C,\mathcal{P}) := \max\left[\sup_{x\in\mathcal{C}}
d(x,\mathcal{P}),\ \sup_{x\in\mathcal{P}} d(x,\mathcal{C})\right],
\]
with
\[
d(x,\mathcal{S}) := \inf_{y \in \mathcal{S}} \norm{x-y}_2.
\]
For a given class of convex bodies $\Sigma$, denote
$\delta_{H}(\mathcal{C},\Sigma) := \inf_{\mathcal{V}\in\Sigma}
\rho_{H}(\mathcal{C},\mathcal{V})$. We are interested in
$\delta_H(\mathcal{C},\Sigma)$ when $\Sigma = \mathfrak{R}_{(n)}^d$,
the set of all polytopes in $\mathbb{R}^d$ with at most $n$ facets
(i.e., generated by the intersection of up to $n$ halfspaces), and in
particular how it behaves as a function of $n$. The following result
obtained independently by \cite{BRO1975Ia,DUD1974a} is given in
\cite{BRO2008a}. For every convex body $\mathcal{U}$ there exists a
constant $c(\mathcal{U})$ such that
\[
\delta_H(\mathcal{U},\mathfrak{R}_{(n)}^d) \leq \frac{c(\mathcal{U})}{n^{2/(d-1)}}.
\]
More interesting perhaps is a lower bound on the approximation
distance. For the unit ball $\mathcal{B}$ we have the following:

\begin{theorem}\label{Thm:HausdorffSphere}
  Let $\mathcal{B}$ denote the unit ball in $\mathbb{R}^d$. Then for
  sufficiently large $n$ there exists a constant $c_d$ such that
\[
\delta_H(\mathcal{B},\mathfrak{R}_{(n)}^{d}) \geq
\frac{c_d}{n^{2/(d-1)}},
\]
\end{theorem}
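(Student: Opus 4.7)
The plan is to reduce the claim to a surface-area covering inequality on the sphere $\partial\mathcal{B}$: I will show that each facet of any polytope $\mathcal{P} \in \mathfrak{R}_{(n)}^d$ at Hausdorff distance $\epsilon$ from $\mathcal{B}$ accounts for at most $O(\epsilon^{(d-1)/2})$ of the $(d-1)$-dimensional surface area of $\partial\mathcal{B}$. Since the $n$ facets together must cover $\partial\mathcal{B}$, whose total area $\omega_{d-1}$ is a fixed positive constant, this forces $n\,C_d\,\epsilon^{(d-1)/2} \geq \omega_{d-1}$, equivalently $\epsilon \geq c_d\,n^{-2/(d-1)}$.

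First I would extract geometric data from the Hausdorff hypothesis. Writing $\mathcal{P} = \bigcap_{i=1}^{n} H_i^{-}$ with outward unit normals $\nu_i$ and offsets $c_i$ (so $H_i = \{x:\langle x,\nu_i\rangle = c_i\}$ and $F_i := H_i \cap \mathcal{P}$), the inclusions $\mathcal{P} \subseteq (1+\epsilon)\mathcal{B}$ and $\mathcal{B} \subseteq \mathcal{P} + \epsilon\mathcal{B}$ implied by $\rho_H(\mathcal{B},\mathcal{P}) \leq \epsilon$ pass to support functions and yield $1-\epsilon \leq c_i \leq 1+\epsilon$; every facet hyperplane is $\epsilon$-close to tangent to $\partial\mathcal{B}$. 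In particular, for small $\epsilon$ the polytope contains the ball of radius $1-\epsilon$, so the radial projection $r(x) := x/\|x\|$ restricts to a bijection from $\partial\mathcal{P}$ onto $\partial\mathcal{B}$, and the $n$ images $U_i := r(F_i)$ cover $\partial\mathcal{B}$.

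Next I would bound the size of each $U_i$. Let $q_i := c_i\nu_i$ be the closest point of $H_i$ to the origin and decompose any $q \in F_i$ as $q = q_i + w$ with $w \perp \nu_i$. The containment $q \in (1+\epsilon)\mathcal{B}$ forces
\[
\|w\|^2 \;\leq\; (1+\epsilon)^2 - c_i^2 \;\leq\; (1+\epsilon)^2 - (1-\epsilon)^2 \;=\; 4\epsilon,
\]
so every facet has in-plane radius at most $2\sqrt{\epsilon}$ about $q_i$. For $q = q_i + w$, the angle between $r(q)$ and $\nu_i$ equals $\arctan(\|w\|/c_i)$, so $U_i$ lies in a spherical cap of angular radius $O(\sqrt{\epsilon})$, whose $(d-1)$-dimensional area is at most $C_d\,\epsilon^{(d-1)/2}$ by the standard cap-area expansion. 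Summing, $\omega_{d-1} \leq n\,C_d\,\epsilon^{(d-1)/2}$, and solving for $\epsilon$ delivers the theorem with $c_d = (\omega_{d-1}/C_d)^{2/(d-1)}$.

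The main obstacle is keeping the geometric estimates uniform. The support-function step pins the facet hyperplanes to within $\pm\epsilon$ of tangency, but the bound on the $r$-image of a facet relies on $c_i$ being bounded away from zero and on the angular estimate being valid up to the facet boundary, both of which require handling small-$\epsilon$ corner cases carefully. If the radial-projection bookkeeping becomes fiddly, a safe fallback is to replace $U_i$ by the intersection of the slab $\{x : c_i - 2\epsilon \leq \langle x,\nu_i\rangle \leq c_i + \epsilon\}$ with $\partial\mathcal{B}$, which is a genuine spherical zone of height $O(\epsilon)$ and carries the same $\epsilon^{(d-1)/2}$ area bound, so only the constant $c_d$ would shift. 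Either route produces the asymptotic lower bound claimed.
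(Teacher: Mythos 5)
Your argument is correct and is essentially the paper's own proof in slightly different packaging: the paper associates to each facet the sphere point with matching outward normal, shows every sphere point lies within $2\sqrt{\delta}$ of one of these (an $\epsilon$-net with $\epsilon = 2\sqrt{\delta}$), and then invokes a cap-area lower bound on the net's cardinality (Lemma~\ref{Lemma:EpsilonNet}), which is exactly the covering-by-caps area count you perform directly via radial projection of the facets. The key quantitative steps coincide (facet in-plane radius $\leq 2\sqrt{\epsilon}$ from $1-\epsilon \leq c_i$ and $\mathcal{P} \subseteq (1+\epsilon)\mathcal{B}$, caps of angular radius $O(\sqrt{\epsilon})$ with area $O(\epsilon^{(d-1)/2})$, total area constant), so your proposal matches the paper's route.
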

\begin{proof}
  For $n$ large enough there exists a polytope $\mathcal{P} \in
  \mathfrak{R}_{(n)}^d$ with $n$ facets and $\delta :=
  \delta_H(\mathcal{B},\mathcal{P}) \leq 9/64$. Each of the $n$ facets
  in $\mathcal{P}$ is generated by a halfspace, and we can use each
  halfspace to generate a point on the unit sphere in $\mathbb{R}^d$
  such that the surface normal at that point matches the outward
  normal of the halfspace. We denote the set of these points by
  $\mathcal{N}$, with $\vert\mathcal{N}\vert = n$. Now, take any point
  $x$ on the unit sphere. From the definition of $\delta$ it follows
  that the maximum distance between $x$ and the closest point on one
  of the hyperplanes bounding the halfspaces is no greater than
  $\delta$. From this it can be shown that the distance to the nearest
  point in $\mathcal{N}$ is no greater than $\epsilon :=
  2\sqrt{\delta}$. Moreover, because the choice of $x$ was arbitrary,
  it follows that $\mathcal{N}$ defines an $\epsilon$-net of the unit
  sphere. Lemma~\ref{Lemma:EpsilonNet} below shows that the
  cardinality $\vert\mathcal{N}\vert \geq
  \frac{c'}{\epsilon^{(d-1)}}$.
%
%
Substituting $\epsilon = 2\sqrt{\delta}$ gives
\[
n \geq \frac{c'}{2^{(d-1)} \delta^{(d-1)/2}},\quad\mbox{or}\quad
\delta \geq \frac{c_d}{n^{2/(d-1)}}.
\]
\end{proof}

\begin{lemma}\label{Lemma:EpsilonNet}
  Let $\mathcal{N}$ be an $\epsilon$-net of the unit sphere
  $\mathcal{S}^{d-1}$ in $\mathbb{R}^{d}$ with $\epsilon \leq 3/4$,
  then
\[
\vert\mathcal{N}\vert \geq \sqrt{2\pi (d-1)/d}\cdot\epsilon^{1-d}.
\]
\end{lemma}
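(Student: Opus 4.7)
The plan is a surface-area covering argument: the $\epsilon$-net hypothesis forces the balls $B(x,\epsilon)$, $x\in\mathcal{N}$, to cover $\mathcal{S}^{d-1}$, so the total surface area of the induced spherical caps is at least the area $\omega_{d-1}$ of the whole sphere. Dividing $\omega_{d-1}$ by an upper bound on the area of one cap yields the desired lower bound on $|\mathcal{N}|$. We may radially project each point of $\mathcal{N}$ onto $\mathcal{S}^{d-1}$ without increasing distances, so we may assume $\mathcal{N}\subset\mathcal{S}^{d-1}$. Then $B(x,\epsilon)\cap\mathcal{S}^{d-1}$ is a spherical cap whose angular radius $\theta$ satisfies $2\sin(\theta/2)=\epsilon$, so $\sin\theta=\epsilon\sqrt{1-\epsilon^2/4}$ and $\cos\theta=1-\epsilon^2/2$; the hypothesis $\epsilon\leq 3/4$ gives $\theta<\pi/2$.

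The next step is a careful bound on the cap area. Writing $A_{\mathrm{cap}}(\theta)=\omega_{d-2}\int_0^\theta\sin^{d-2}\phi\,d\phi$ and using the identity $\sin^{d-2}\phi = \frac{1}{(d-1)\cos\phi}\frac{d}{d\phi}\sin^{d-1}\phi$ together with monotonicity of $\sec\phi$ on $[0,\theta]$, I obtain
\[
A_{\mathrm{cap}}(\theta)\;\leq\;\frac{\omega_{d-2}\,\sin^{d-1}\theta}{(d-1)\cos\theta}.
\]
Combined with the covering inequality $|\mathcal{N}|\,A_{\mathrm{cap}}(\theta)\geq\omega_{d-1}$ and the identities for $\sin\theta$ and $\cos\theta$, this yields
\[
|\mathcal{N}|\;\geq\;\frac{(d-1)(1-\epsilon^2/2)}{\epsilon^{d-1}\,(1-\epsilon^2/4)^{(d-1)/2}}\cdot\frac{\omega_{d-1}}{\omega_{d-2}}.
\]

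I would then estimate the sphere-area ratio. Writing $\omega_{k}=2\pi^{(k+1)/2}/\Gamma((k+1)/2)$ gives $\omega_{d-1}/\omega_{d-2}=\sqrt{\pi}\,\Gamma((d-1)/2)/\Gamma(d/2)$, and Wendel's inequality $\Gamma(x+1/2)\leq\sqrt{x}\,\Gamma(x)$ at $x=(d-1)/2$ yields $\omega_{d-1}/\omega_{d-2}\geq\sqrt{2\pi/(d-1)}$. Dropping the factor $(1-\epsilon^2/4)^{-(d-1)/2}\geq 1$ produces
\[
|\mathcal{N}|\;\geq\;(1-\epsilon^2/2)\,\sqrt{2\pi(d-1)}\;\epsilon^{1-d}.
\]
It remains to verify that $(1-\epsilon^2/2)\sqrt{d}\geq 1$ for $\epsilon\leq 3/4$ and $d\geq 2$, which reduces to checking $23/32\geq 1/\sqrt{2}$ (the worst case); the $d=1$ instance is vacuous since the right-hand side is zero.

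The main obstacle is the exponential sensitivity of the final constant to the cap-area estimate: the naive bound $\sin^{d-2}\phi\leq\phi^{d-2}$ (which integrates to $\omega_{d-2}\theta^{d-1}/(d-1)$) introduces a factor $(\theta/\epsilon)^{d-1}$ that grows geometrically in $d$ and destroys the precise constant. Using the integration-by-parts form above in tandem with Wendel's inequality is what keeps both estimates tight enough to deliver the stated $\sqrt{2\pi(d-1)/d}$ rather than some looser dimension-dependent prefactor.
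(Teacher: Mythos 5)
Your proof is correct, and it follows the same covering skeleton as the paper's: caps of angular radius $\theta$ with $\cos\theta = 1-\epsilon^2/2$ centered at the net points cover $\mathcal{S}^{d-1}$, so $\vert\mathcal{N}\vert$ is bounded below by the ratio of the sphere's area to a cap's area. The difference is in how the quantitative input is obtained. The paper simply cites a result of B\"or\"oczky and Wintsche for the cap-to-sphere area ratio, valid when $\varphi \leq \arccos(1/\sqrt{d})$, and then uses $1/\cos\varphi \leq \sqrt{d}$ and $\sin\varphi < \epsilon$; you instead derive the needed estimate from scratch, via the cap-area bound $\int_0^\theta \sin^{d-2}\phi\, d\phi \leq \sin^{d-1}\theta/\bigl((d-1)\cos\theta\bigr)$ (monotonicity of $1/\cos\phi$ on $[0,\theta]$) together with Wendel's inequality to get $\omega_{d-1}/\omega_{d-2} \geq \sqrt{2\pi/(d-1)}$. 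This buys a self-contained argument at the cost of re-proving a known cap estimate. Note that the hypothesis $\epsilon \leq 3/4$ plays the same numerical role in both proofs: in the paper it guarantees $\cos\varphi = 1-\epsilon^2/2 \geq 1/\sqrt{d}$ so that the cited corollary applies, while in yours it is used at the end to check $(1-\epsilon^2/2)\sqrt{d} \geq 1$, with the same worst case $d=2$, so your final weakening loses nothing relative to the stated constant.

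One caveat: the opening remark that you may radially project the net points onto $\mathcal{S}^{d-1}$ ``without increasing distances'' is false for points strictly inside the ball (projecting the origin to a point $p$ on the sphere increases its distance to points near $-p$ from $1$ to nearly $2$). It is also unnecessary: under the definition in force here (and in the paper's application, where the points are constructed on the sphere), an $\epsilon$-net of $\mathcal{S}^{d-1}$ is a subset of $\mathcal{S}^{d-1}$, so that sentence can simply be deleted. If you did want to allow centers off the sphere, you would have to work with the possibly larger cap $B(x,\epsilon)\cap\mathcal{S}^{d-1}$ (angular radius up to $\arcsin\epsilon$), and with these estimates that is no longer enough to recover the stated constant when $d=2$ and $\epsilon$ is close to $3/4$.
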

\begin{proof}
  By definition of the $\epsilon$-net, we obtain a cover for
  $\mathcal{S}^{d-1}$ by placing balls of radius $\epsilon$ at all $x
  \in \mathcal{N}$. The intersection of each ball with the sphere
  gives a spherical cap. The union of the spherical caps covers the
  sphere and $\abs{\mathcal{N}}$ times the area of each spherical cap
  must therefore be at least as large as the area of the sphere. A
  lower bound on the number of points in $\mathcal{N}$ is therefore
  obtained by the ratio $\nu$ between the area of the sphere and that
  of the spherical cap (see also \cite[Lemma 2]{WYN1967a}). Denoting
  by $\varphi = \arccos(1-\half \epsilon^2)$ the half-angle of the
  spherical cap it follows from \cite[Corollary 3.2(iii)]{BOR2003Wa}
  that $\nu$ satisfies
\[
1/\nu < \frac{1}{\sqrt{2\pi
    (d-1)}}\cdot\frac{1}{\cos\varphi}\cdot\sin^{d-1}\varphi,
\]
whenever $\varphi \leq \arccos 1/\sqrt{d}$. This bound can be
substituted into the second term above to obtain $\sqrt{d}$, and it
can be verified to hold whenever $\epsilon \leq 3/4$. It further holds
that $\sin \varphi < \epsilon$ which, after rewriting, gives the
desired result.
\end{proof}

\subsubsection{More efficient representations}

From Theorem~\ref{Thm:HausdorffSphere} we see that a large number of
supporting hyperplanes is needed to define a polytope that closely
approximates the unit $\ell_2$-norm ball. Approximating such a ball or
any other convex sets by the intersection of a number of halfspaces
can be considered wasteful, however, since it uses only a single
region out of the maximum $r(n,d)$ given by
\eqref{Eq:MaximumNumberOfRegions}. This fact was recognized by Cheang
and Barron \cite{CHE2000Ba}, and they proposed an alternative
representation for unit balls that only requires
$\mathcal{O}(d^2/\delta^2)$ halfspaces---far fewer than the
conventional $\mathcal{O}(1/\delta^{(d-1)/2})$. The construction is as
follows: given a set of $n$ suitably chosen halfspaces $\mathcal{H}_i$
and the indicator function $1_{\mathcal{H}_i}(x)$ which is one if $x
\in\mathcal{H}_i$ and zero otherwise. Typically these halfspaces are
used to define polytope $\mathcal{P} := \{ x \in \mathbb{R}^d \mid
\sum_i 1_{\mathcal{H}_i}(x) = n\}$, that is, the intersection of all
halfspaces. The (non-convex) approximation proposed in
\cite{CHE2000Ba} is of the form
\[
\mathcal{Q} := \{ x \in \mathbb{R}^d \mid
\sum_i 1_{\mathcal{H}_i}(x) \geq k\},
\]
which consists of all points that are contained in at least $k$
halfspaces. This representation is shown to provide far more efficient
approximations, especially in high dimensions.  As described in
Section~\ref{Sec:KofN}, this construction can easily be implemented as
a neural network. A similar approximation for the Euclidean ball,
which also takes advantage of smooth transition boundaries is shown in
Figure~\ref{Fig:ExampleKofN}(f).

\subsection{Bounds on the number of separating hyperplanes}

In many cases, it suffices to simply distinguish between the different
classes instead of trying to exactly trace out their boundaries. Doing
so may reduce the number of parameters and additionally help reduce
overfitting.  The bound in Section~\ref{Sec:NumberOfRegions} gives the
maximum number of regions that can be separated by a given number of
hyperplanes. Classes found in practical applications are extremely
unlikely to exactly fit these cells, and we can therefore expect that
more hyperplanes are needed to separate them. We now look at the
maximum number of hyperplanes that is needed.

\subsubsection{Convex sets}
In this section we assume that the classes are defined by convex sets
whose intersection is either empty or of measure zero.  We are
interested in finding the minimum number of hyperplanes needed such
that each pair of classes is separated by at least one of the
hyperplanes. In the worst case, a hyperplane is needed between any
pair of $n$ classes, giving a maximum of $n \choose 2$ hyperplanes,
independent of the ambient dimension. That this maximum can be reached
was shown by Tverberg \cite{TVE1979a} who provides a construction due
to K.P.~Villanger of a set of $n$ lines in $\mathbb{R}^3$ such that
any hyperplane that separates one pair of lines, intersects all
others. Here we describe a generalization of this construction for odd
dimensions $d \geq 3$.

\begin{theorem}\label{Thm:TverbergRd}
  Let $A = [A_1,A_2,\ldots,A_n]$ be a full-spark\cite{DON2003Ea} matrix with blocks
  $A_i$ of size $d \times (d-1)/2$, with odd $d \geq 3$. Let $b_i$,
  $i=1,\ldots,n$ be vectors in $\mathbb{R}^d$ such that
  $[A_i,A_j,b_i-b_j]$ is full rank for all $i\neq j$. The subspaces
\[
\mathcal{S}_i = \{ x \in \mathbb{R}^d \mid x = A_i v + b_i,\ v \in \mathbb{R}^{(d-1)/2}\}.
\]
are pairwise disjoint and any hyperplane separating $\mathcal{S}_i$
and $\mathcal{S}_j$, $i\neq j$, intersects all $\mathcal{S}_k$, $k\neq i,j$.
\end{theorem}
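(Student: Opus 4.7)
The plan is to handle the two assertions separately, starting with disjointness and then using a dimension count on the column range of $A$ to obtain the separation property.

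For pairwise disjointness, I would reason by contradiction: any common point of $\mathcal{S}_i$ and $\mathcal{S}_j$ gives vectors $v, w$ with $A_i v + b_i = A_j w + b_j$, i.e.\ $A_i v - A_j w = b_j - b_i$. This puts $b_i - b_j$ in the span of the columns of $[A_i, A_j]$, forcing the $d \times d$ matrix $[A_i, A_j, b_i - b_j]$ to have rank at most $d-1$, contrary to hypothesis.

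For the separation claim, I would first characterize the normal $n$ of any hyperplane $H = \{x : n^T x = c\}$ that separates $\mathcal{S}_i$ from $\mathcal{S}_j$. Because $\mathcal{S}_i$ is an entire affine subspace, the linear function $v \mapsto n^T(A_i v + b_i)$ is either unbounded in both directions or constant on $\mathcal{S}_i$; only the latter is compatible with containment in a closed halfspace bounded by $H$, and it forces $n^T A_i = 0$. The same argument applied to $\mathcal{S}_j$ gives $n^T A_j = 0$. Now suppose, for contradiction, that $H$ also fails to meet some $\mathcal{S}_k$ with $k \neq i, j$; the identical reasoning gives $n^T A_k = 0$. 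Hence $n$ is orthogonal to every column of the submatrix $[A_i, A_j, A_k]$ of $A$, which has $3(d-1)/2$ columns. Since $d \geq 3$ is odd, $3(d-1)/2 \geq d$, and the full-spark hypothesis guarantees that any $d$ of these columns are linearly independent and hence span $\mathbb{R}^d$. This forces $n = 0$, contradicting the fact that $H$ is a hyperplane.

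The main obstacle, and the place where care is needed, is the potentially degenerate case in which $n^T b_i = n^T b_j = c$, so that $H$ actually contains both $\mathcal{S}_i$ and $\mathcal{S}_j$ rather than properly separating them. Some conventions would not treat this as a separation at all, but the argument above is robust to this choice: the only ingredient used in the dimension count is that $n$ annihilates the columns of $A_i$ and $A_j$, which follows as soon as each $\mathcal{S}_\ell$ is contained in a closed halfspace bounded by $H$. Note also the division of labor between the two hypotheses: the full-rank condition on $[A_i, A_j, b_i - b_j]$ is what keeps the subspaces themselves disjoint, while the full-spark condition on $A$ alone drives the separating-hyperplane conclusion, since that conclusion depends only on the column ranges of the $A_i$ and not on the offsets $b_i$.
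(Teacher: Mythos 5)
Your proof is correct and follows essentially the same route as the paper: the same rank argument for disjointness, and the same observation that a separating hyperplane's normal must annihilate the columns of $A_i$ and $A_j$, with full spark then forcing the hyperplane to meet every other $\mathcal{S}_k$. The only cosmetic difference is that you run the last step by contradiction via a column count ($3(d-1)/2 \geq d$ columns annihilated forces $n=0$), while the paper concludes $a^T A_k \neq 0$ directly; your remark on the degenerate case where the hyperplane contains both subspaces is a nice extra touch of care.
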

\begin{proof}
  Any pair of subspaces $\mathcal{S}_i$ and $\mathcal{S}_j$ intersects
  only if there exist vectors $u$, $v$ such that
\[
A_i u + b_i = A_j v + b_j,\quad\mbox{or}\quad
\left[ A_i, A_j\right]\left[\begin{array}{c}u\\-v\end{array}\right] = b_j - b_i.
\]
It follows from the assumption that $[A_i,A_j,b_j-b_i]$ is full rank,
that no such two vectors exist, and therefore that all subspaces are
pairwise disjoint.

Any hyperplane $\mathcal{H}_{i,j}$ separating $\mathcal{S}_i$ and
$\mathcal{S}_j$ is of the form $a^Tx = \beta$. To avoid intersection
with $\mathcal{S}_i$ we must have $a^T(A_i v + b) \neq \beta$ for all
$v \in\mathbb{R}^{d-1}$, which is satisfied if an only if $a^T A_i =
0$. It follows that we must also have $a^T A_j = 0$, and therefore
that $a$ is a normal vector to the $(d-1)$-subspace spanned by
$[A_i,A_j]$. From the full-spark assumption on $A$ it follows that
$a^TA_k \neq 0$ for all $k\neq i,j$, which shows that
$\mathcal{H}_{i,j}$ intersects the corresponding $\mathcal{S}_k$. The
result follows since the choice of $i$ and $j$ was arbitrary.
\end{proof}

Random matrices $A$ and vectors $b_i$ with entries i.i.d. Gaussian
satisfy the conditions in Theorem~\ref{Thm:TverbergRd} with
probability one, thereby showing the existence of the desired
configurations. A simple extension of the construction to dimension
$d+1$ is obtained when generating subspaces $\mathcal{S}_i' \subset
\mathbb{R}^{d+1}$ by matrices $A_i'$, formed by appending a row of
zeros to $A_i$ and adding a column corresponding to the last column of
the $d\times d$ identify matrix, and vectors $b_i' = [b_i; 0]$. Pach
and Tardos \cite{PAC2001Ta} further show that the lines in the
construction described by Tverberg can be replaced by appropriately
chosen unit segments. Adding a sufficiently small ball in the
Minkowski sense then results in $n$ bounded convex sets with non-empty
interior whose separation requires the maximum $n \choose 2$
hyperplanes.

\subsubsection{Point sets}

When separating a set of $n$ points, the maximum number of hyperplanes
needed is easily seen to be $n-1$; we can cut off a single extremal
point of subsequent convex hulls until only a single point is
left. This maximum can be reached, for example when all points lie on
a straight line. For a set of points in general position, it is shown
in \cite{BOL1995Ua} that the maximum number $f(n,d)$ of hyperplanes
needed satisfies
\[
\lceil (n-1)/d\rceil \leq f(n,d) \leq \lceil (n - 2^{\lceil\log
  d\rceil})/d\rceil + \lceil \log d\rceil.
\]
Based on this we can expect the number of hyperplanes needed to
separate a family of unit balls to be much smaller than the maximum
possible $n \choose 2$, whenever $n > d+1$.

\subsubsection{Non-convex sets}

The interface between two non-convex sets can be arbitrarily complex,
which means that there are no meaningful bounds on the number of
hyperplanes needed to separate general sets.

\section{Gradients}\label{Sec:Gradients}
Parameters in the neural network are learned by minimizing a loss
function over the training set, using for example
stochastic gradient descent on
the formulation shown in \eqref{Eq:NNTraining}. The gradient of such a
loss function decouples over the training samples and can be written
as
\begin{equation}\label{Eq:GradientSum}
\nabla \phi(s) =
{\textstyle\frac{1}{\vert\mathcal{T}\vert}}\sum_{(x,c)\in\mathcal{T}}\nabla f(s; x,c)
\end{equation}
where each term can be evaluated using
backpropagation~\cite{RUM1986HWa}.  The idea of the section is to
explore how points contribute when they are part of a training
set. That is, for a given parameter set $s$, and with the class
information $c = c(x)$ assumed to be known, we are interested in
$\nabla f(x)$; the behavior of $\nabla f$ as a function of $x$ over
the entire feature space. We will see that some points in the training
set contribute more to the gradient than others. So, instead of just
looking at the total gradient, we look at the contribution to the
gradient of each point: points that have a large relative
contribution to the gradient can be said to be more informative than
those that do not contribute much (the amount of contribution of each
point typically changes during optimization).  Throughout this and the next
section we use the word `gradient' loosely and also use it to refer to
blocks of gradient entries corresponding to the parameters a layer,
individual entries, or the gradient field $\nabla f(x)$ of those
quantities over the entire feature space. The exact meaning should be
clear from the context.

\subsection{Motivational example}

We illustrate the relative importance of different training samples
using a simple one-dimensional example. We define a basic two-layer
neural network in which the first layer defines a hyperplane $\alpha x
= \beta$ with nonlinearity $\nu_1 = \sigma$, and in which the second
layer applies the identity function follow by nonlinearity $\nu_2 =
\ell_5$ for amplification (for simplicity we look only at one class
and use a logistic function instead of the softmax function). Choosing
$\alpha = 1$ and $\beta = 0$ defines the region shown in
Figure~\ref{Fig:ExampleGradient1D}(a). Now, suppose that all points $x
\in [-12,12]$ belong to the same class and should therefore be part of
this region. Intuitively, it can be seen that slight changes in the
location of the hyperplane or in the steepness of the transition will
have very little effect on the output of the neural network for input
points $\abs{x} \geq 5$, say, since values close to one or zero remain
so after the perturbation. As such, we expect that in these regions
the gradient with respect to $\alpha$ and $\beta$ will be small. For
points in the transition region the change will be relatively large,
and the gradient at those points will therefore be larger. This
suggests that training points away from the transition region provide
little information when deciding in which direction to move the
hyperplane and how sharp the transition should be; this information
predominantly comes from the training points in the transition region.

\begin{figure}
\centering
\begin{tabular}{ccc}
\includegraphics[height=125pt]{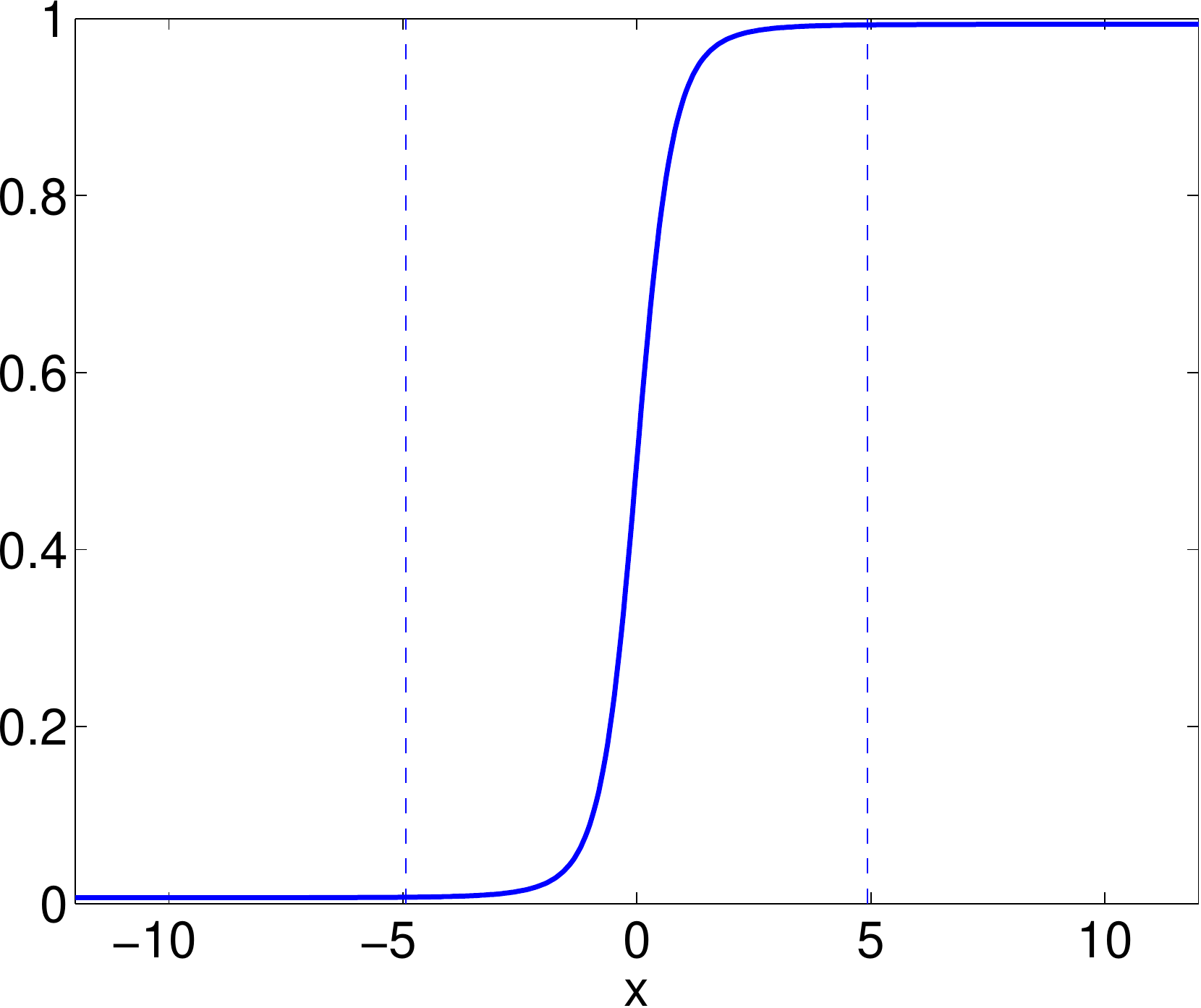}&
\includegraphics[height=125pt]{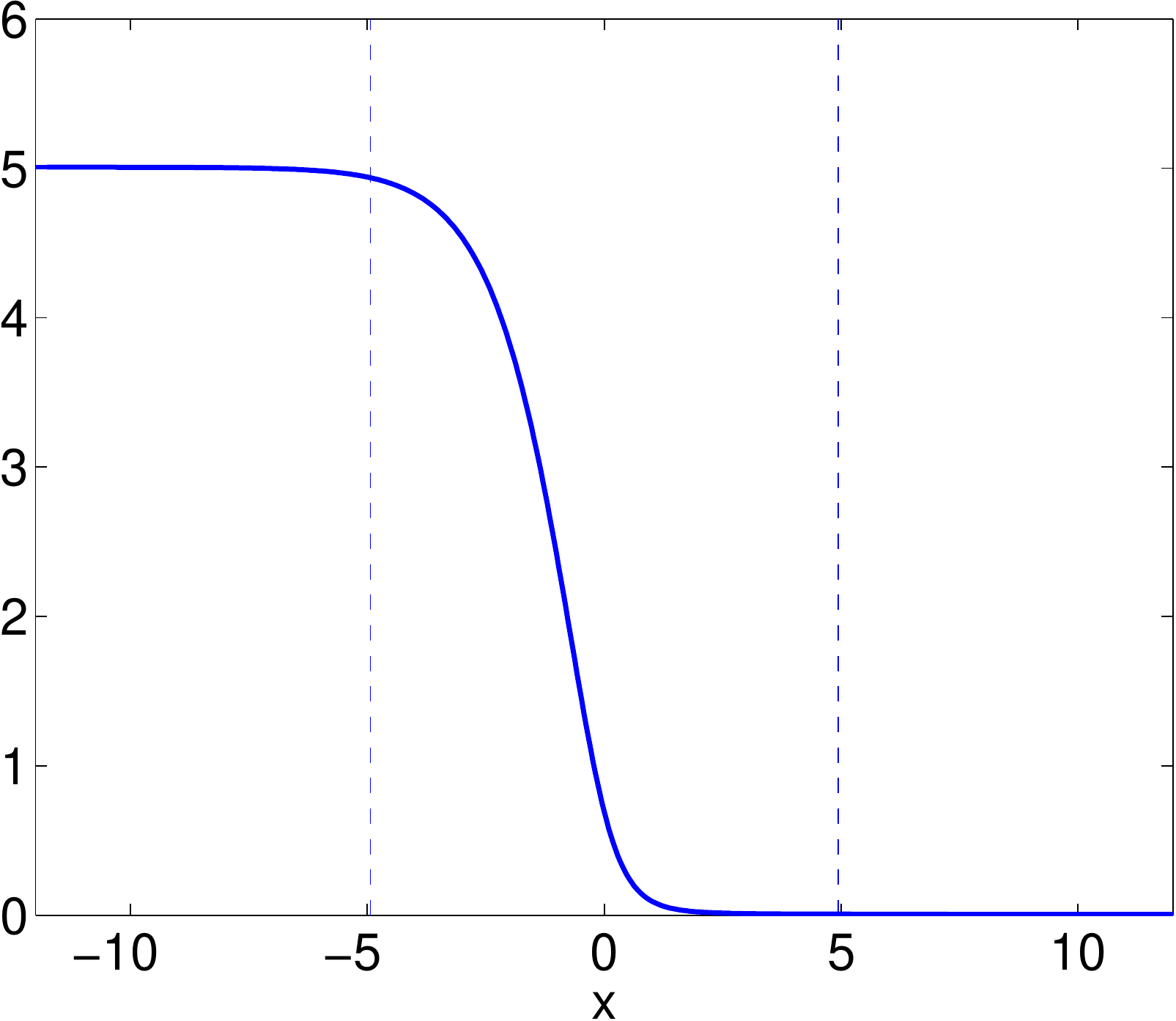}&
\includegraphics[height=125pt]{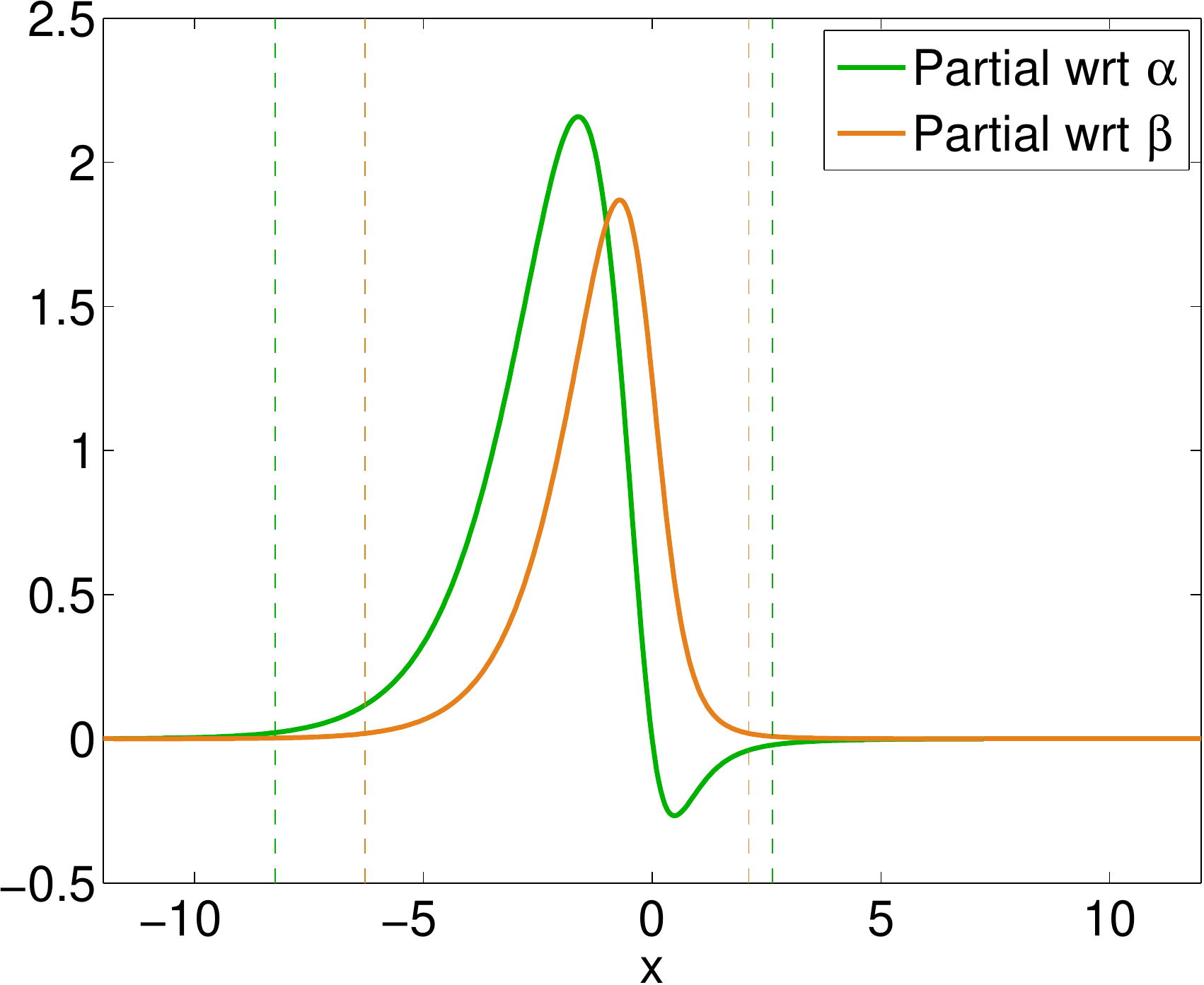}\\
({\bf{a}}) & ({\bf{b}}) & ({\bf{c}})
\end{tabular}
\caption{Classification of points on the $x$-axis, with (a) the
  decision region: $\hat{p}(x) = \ell_5(\sigma(\alpha x - \beta))$
  with $\alpha = 1$, $\beta = 0$; (b) the cross entropy:
  $f(x) = -\log \hat{p}(x)$; (c) partial derivatives of the loss
  function with respect to $\alpha$ and $\beta$}\label{Fig:ExampleGradient1D}
\end{figure}

More formally, consider the minimization of the negative log
likelihood loss function for this network, given by
\[
f(x) = -\log \hat{p}(x)\qquad\mbox{with}\qquad\hat{p}(x) =
\ell_5(\sigma(\alpha x - \beta)).
\]
For the gradient, we need the derivative of the sigmoid function,
$\sigma_\gamma'(x) = 2\ell_\gamma'(x)$ with
\[
\ell_\gamma'(x) = \frac{\gamma e^{-\gamma x}}{(1 + e^{-\gamma x})^2} =
\gamma\left( \frac{1+e^{-\gamma x}}{(1 + e^{-\gamma x})^2} -
  \frac{1}{(1 + e^{-\gamma x})^2}\right) =
\gamma[\ell_{\gamma}(x) - \ell_{\gamma}^2(x)],
\qquad 
\]
and the derivative of the negative log of the logistic function:
\[
\frac{d}{dx}\left[-\log(\ell_\gamma(x))\right]
= -\frac{\ell_{\gamma}'(x)}{\ell_\gamma(f(x))}
=  -\gamma \frac{\ell_{\gamma}(x) -
   \ell_{\gamma}^2(x)}{\ell_{\gamma}(x)} =
\gamma [\ell_{\gamma}(x) - 1].
\]
Combining the above we have
\begin{eqnarray*}
{\partial f}/{\partial \alpha} &=& \phantom{-}\gamma\left[\ell_{\gamma}(\sigma(\alpha x - \beta)) -
  1\right]\cdot \sigma'(\alpha x - \beta)\cdot x\\
{\partial f}/{\partial \beta} &=& -\gamma\left[\ell_{\gamma}(\sigma(\alpha x - \beta)) -
  1\right]\cdot \sigma'(\alpha x - \beta),
\end{eqnarray*}
with $\gamma = 5$. The loss function and partial derivatives with
respect to $\alpha$ and $\beta$ are plotted in
Figure~\ref{Fig:ExampleGradient1D}(b) and (c). The vertical lines in
plot (c) indicate where the gradients fall below one percent of their
asymptotic value. As expected, points beyond these lines do indeed
contribute very little to the gradient, regardless of whether they are
on the right or the wrong side of the hyperplane.

\subsection{General mechanism}

For the contribution of each sample to the gradient in general
settings we need to take a detailed look at the backpropagation
process. This is best illustrated using a concrete three-layer neural
network:
\begin{equation}\label{Eq:NNExample2D}
\begin{array}{lll}
A_1 = \left[\begin{array}{rr}1.0 & 0.3 \\ 0.4 &
    -1.0\end{array}\right]&
b_1 = \left[\begin{array}{r}-1.0 \\ 0.5 \end{array}\right]\ \ &
\nu_1 = \sigma_3,\\[17pt]
A_2 =
\left[\begin{array}{rr}-1&-1\\1&-1\\1&1\\-1&1\end{array}\right]&
b_2 = \left[\begin{array}{r} 1\\1\\ 1\\ 1\end{array}\right]&
\nu_2 = \sigma_3,\\[30pt]
A_3 = \left[\begin{array}{cccc}1&0&1&0\\0&1&0&1\end{array}\right]\ \ &
b_3 = \left[\begin{array}{c}-1.1\\ -1.1\end{array}\right]&
\nu_3 = \mu.
\end{array}
\end{equation}
Denoting by $f(x)$ the negative log likelihood of $\mu(x)$, the
forward and backward passes through the network can be written as
\begin{equation}\label{Eq:GradientComputation}
\begin{array}{ll}
v_1 = A_1x_0 - b_1 \qquad\qquad & \\[9pt]
x_1 = \sigma_3(v_1)
& \displaystyle y_1 = \frac{\partial x_1}{\partial v_1}\cdot z_2 =
\sigma_{3}'(v_1)\cdot z_2\\[9pt]
v_2 = A_2x_1 - b_2
& \displaystyle z_2 = \frac{\partial v_2}{\partial x_1}\cdot y_2 = A_2^T y_2 \\[9pt]
x_2 = \sigma_3(v_2)
& \displaystyle y_2 = \frac{\partial x_2}{\partial v_2}\cdot z_3 =
\sigma_{3}'(v_2)\cdot z_3\\[9pt]
v_3 = A_3x_2 - b_3
& \displaystyle z_3 = \frac{\partial v_3}{\partial x_2}\cdot y_3 = A_3^T y_3\\[9pt]
x_3 = f(v_3)
& \displaystyle y_3 = \frac{\partial x_3}{\partial v_3} =
\frac{\partial f}{\partial v_3} = \nabla f(v_3),
\end{array}
\end{equation}
where the left and right columns respectively denote the stages in the
forward and backward pass. The regions formed during the forward pass
are shown in Figure~\ref{Fig:NNExample2D}. With this, the partial
differentials with respect to weight matrices and bias vectors are of
the following form:
\begin{equation}\label{Eq:PartialDiffsAb}
\frac{\partial f}{\partial [A_3]_{i,j}} = \frac{\partial v_3}{\partial
  [A_3]_{i,j}}\cdot y_3 = [x_2]_{j}\cdot [y_3]_{i},\qquad\mbox{and}\qquad
\frac{\partial f}{\partial b_3} = \frac{\partial v_3}{\partial
  b_3}\cdot \frac{\partial f}{\partial
  v_3} = \frac{\partial v_3}{\partial
  b_3}\cdot y_3 = -y_3.
\end{equation}

\begin{figure}[t]
\centering
\begin{tabular}{ccc}
\includegraphics[width=0.265\textwidth]{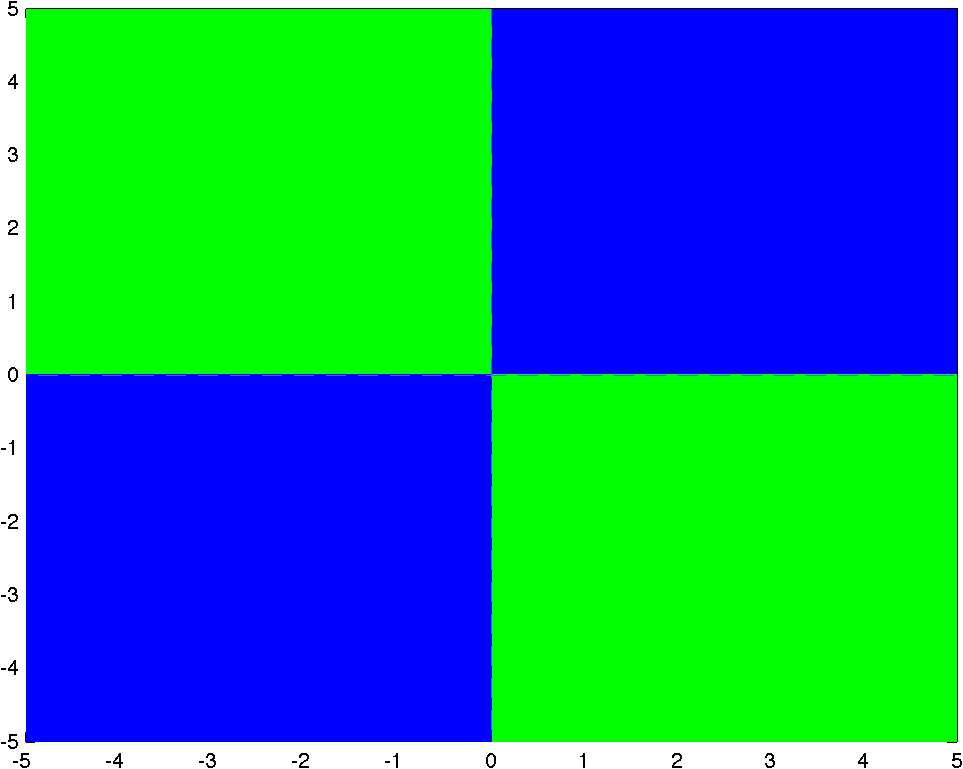}%
\begin{picture}(0,0)(0,0)%
\put(-101,76){$\mathcal{C}_1$}
\put(-37,76){{\color{white}$\mathcal{C}_2$}}
\put(-101,26){{\color{white}$\mathcal{C}_2$}}
\put(-37,26){$\mathcal{C}_1$}
\end{picture}&
\includegraphics[width=0.265\textwidth]{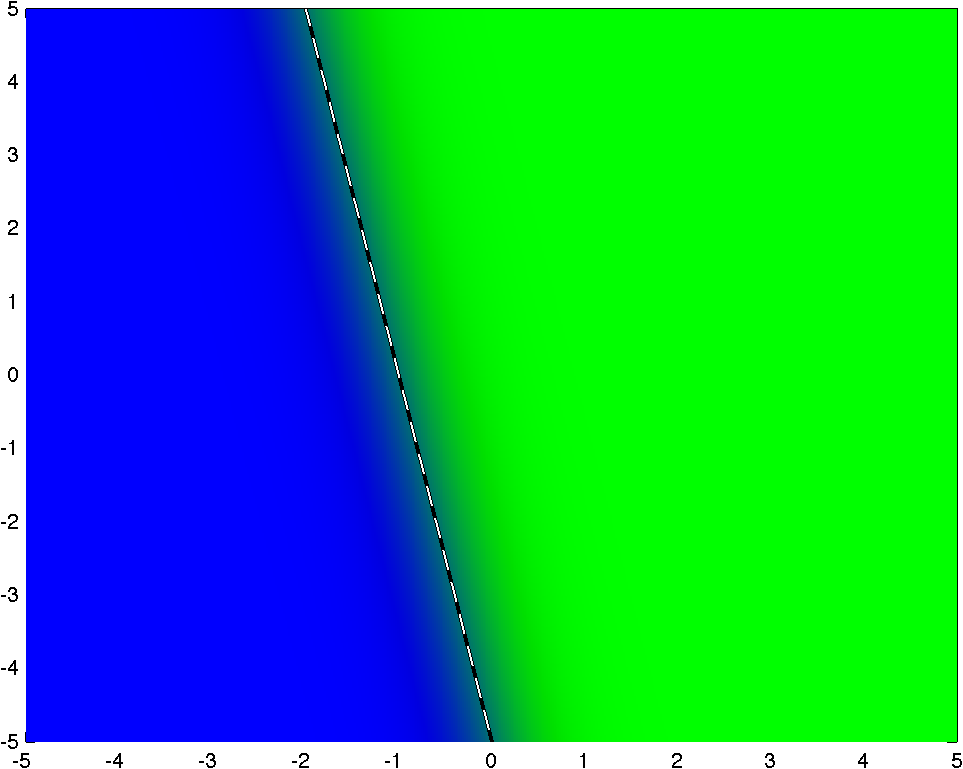}&
\includegraphics[width=0.265\textwidth]{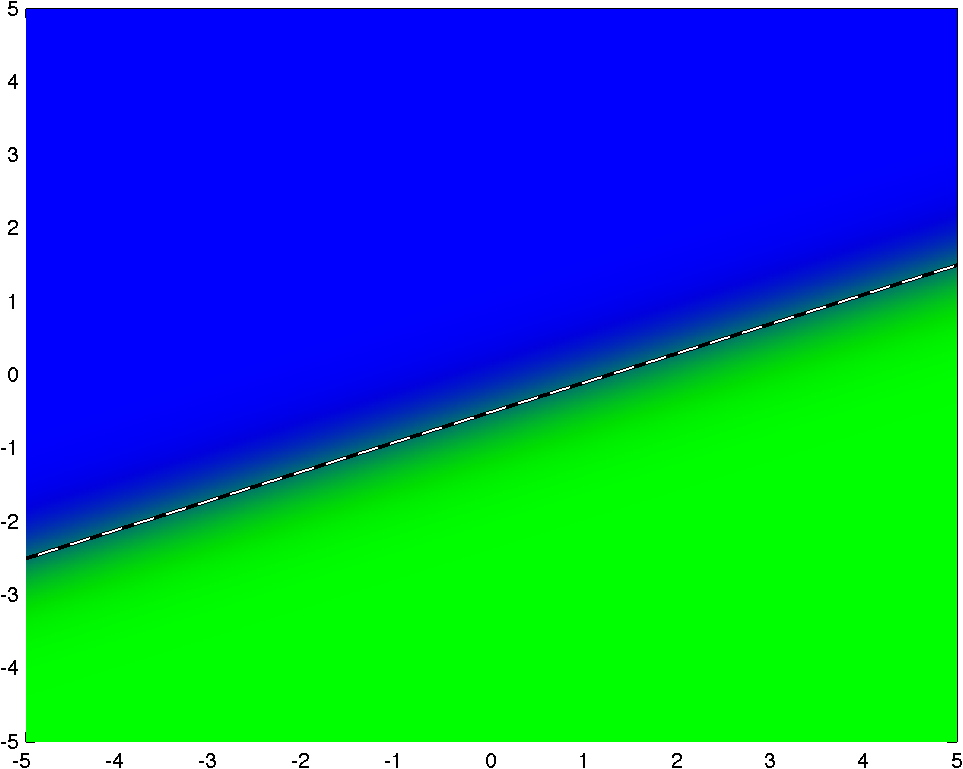}\\
({\bf{a}}) & ({\bf{b}}) $[x_1]_1$ & ({\bf{c}}) $[x_1]_2$\\[3pt]
\includegraphics[width=0.265\textwidth]{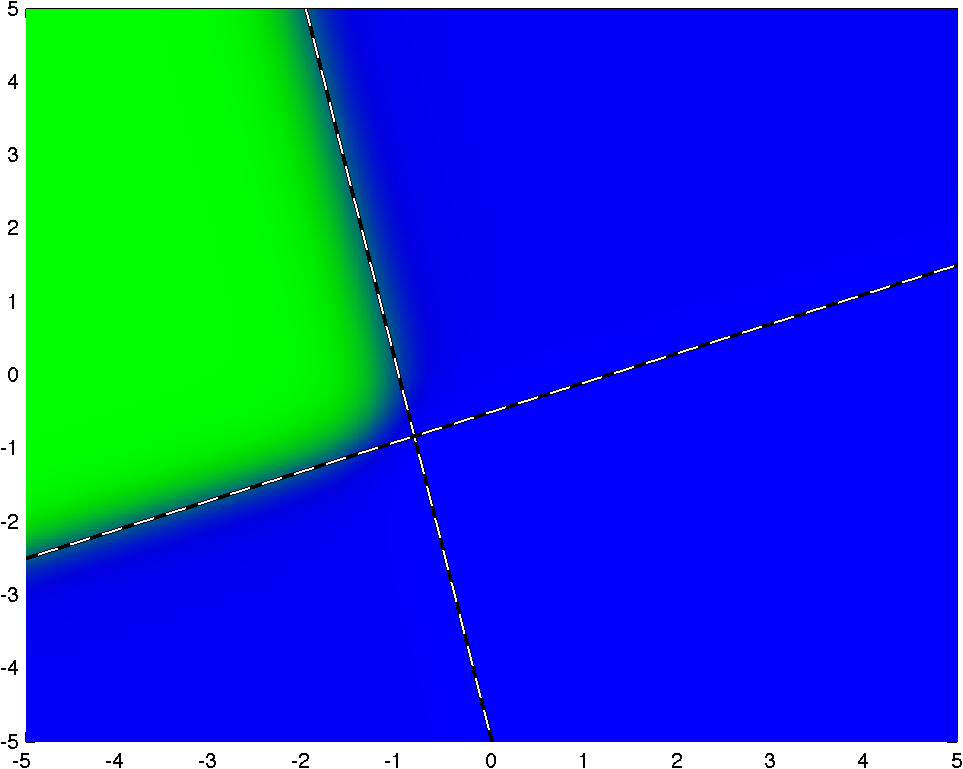}&
\includegraphics[width=0.265\textwidth]{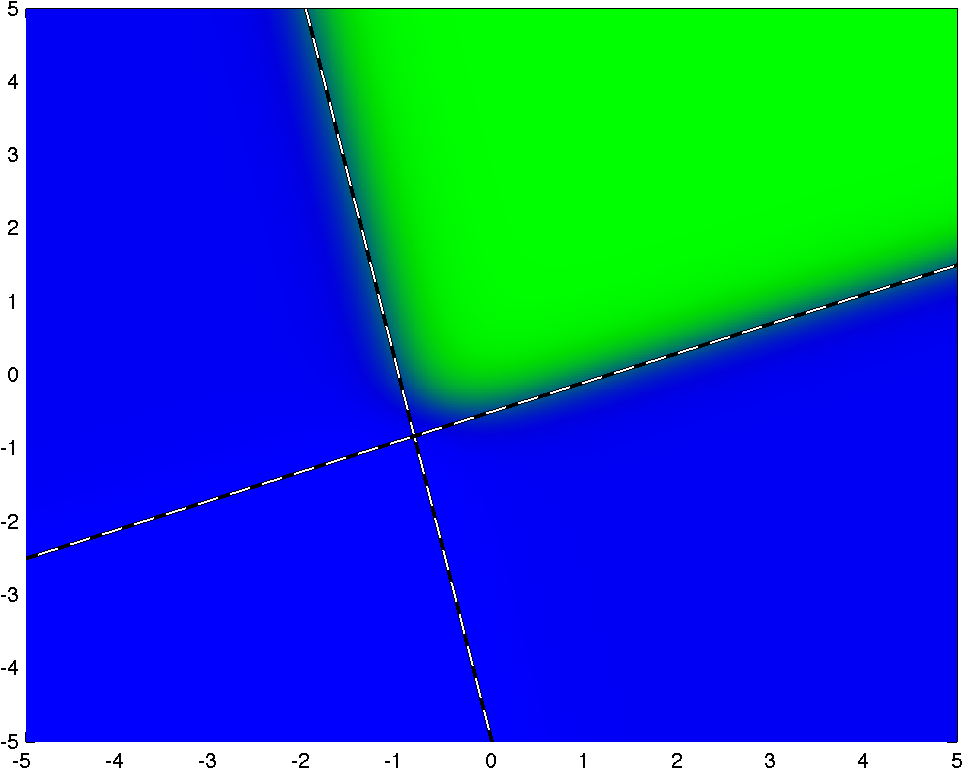}&
\includegraphics[width=0.265\textwidth]{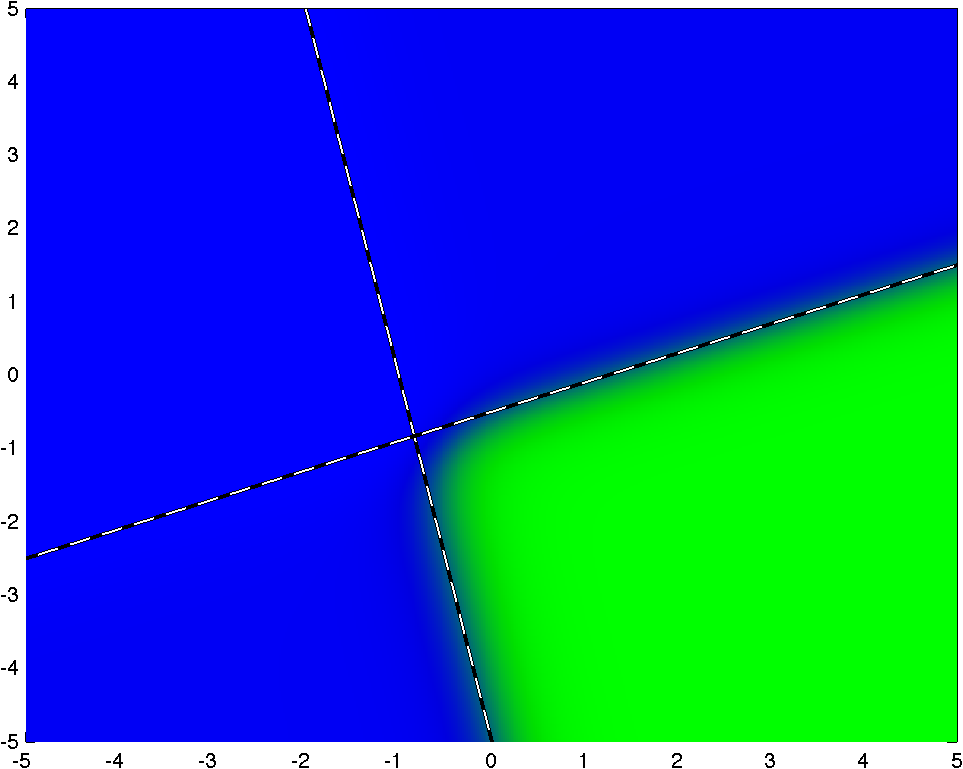}\\
({\bf{d}}) $[x_2]_1$ & ({\bf{e}}) $[x_2]_2$ & ({\bf{f}}) $[x_2]_3$\\[3pt]
\includegraphics[width=0.265\textwidth]{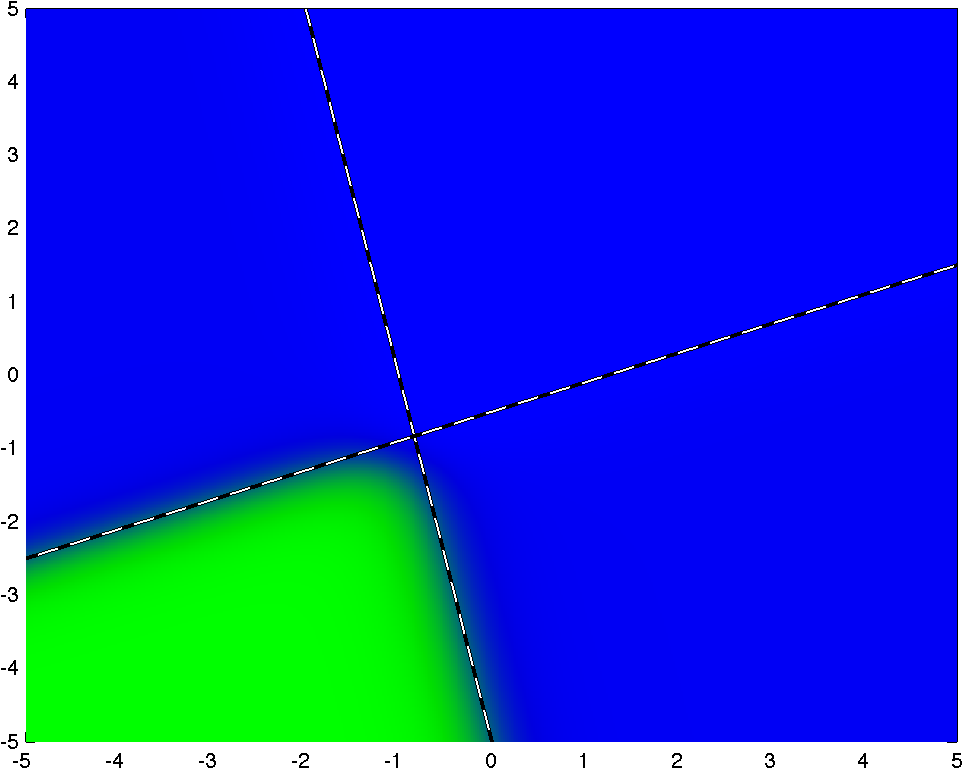}&
\includegraphics[width=0.265\textwidth]{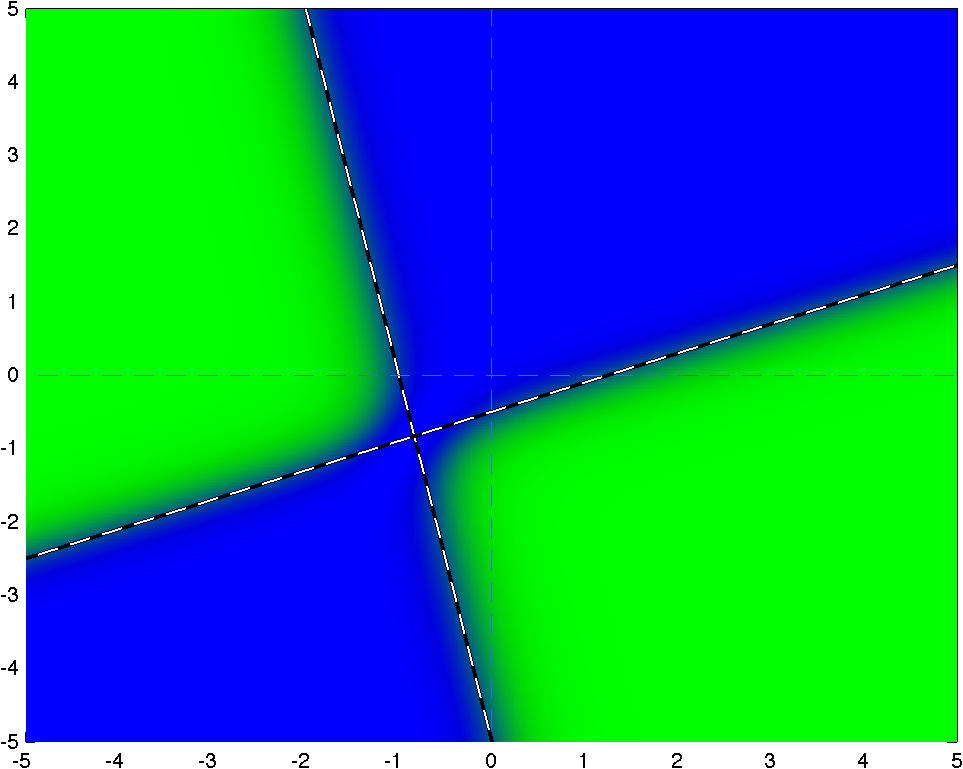}&
\includegraphics[width=0.265\textwidth]{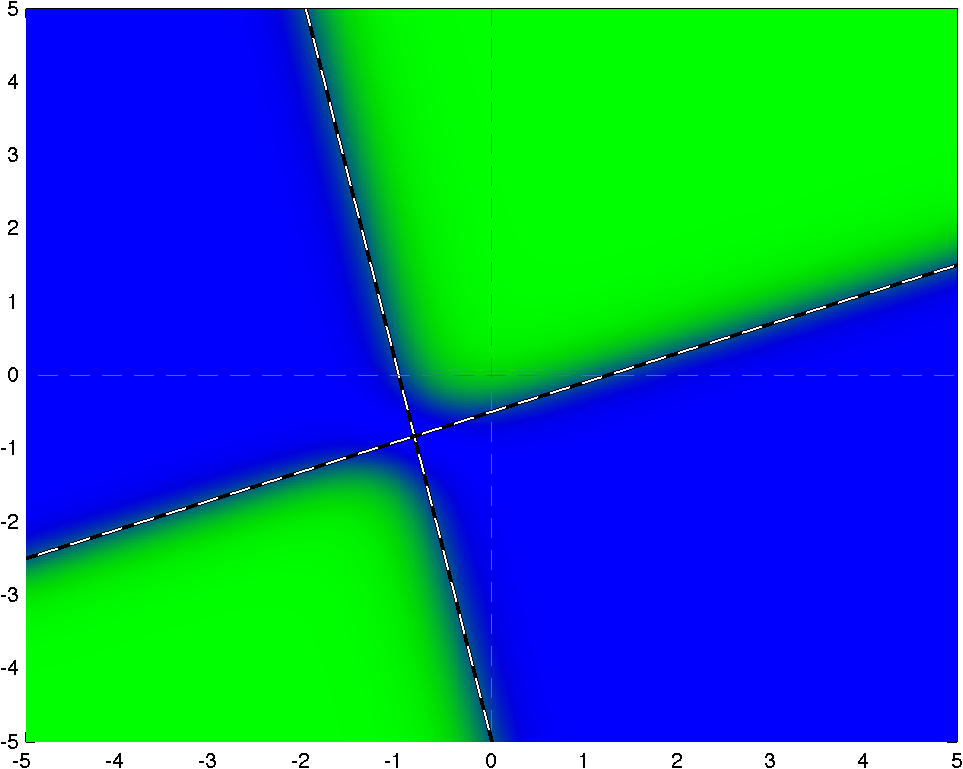}\\
({\bf{g}})  $[x_2]_4$ & ({\bf{h}}) $[\mu(v_3)]_1$ & ({\bf{i}}) $[\mu(v_3)]_2$
\end{tabular}
\caption{Regions corresponding to the neural network with weights
  defined by \eqref{Eq:NNExample2D}, with (a) the two ground-truth
  classes; (b,c) output regions of the first layer; (d--g) output
  regions of the second layer; and (h,i) output regions of the third
  layer (showing only the intermediate output $\mu(v_3)$ instead of
  the loss function output) with desired class boundaries
  superimposed.}\label{Fig:NNExample2D}
\end{figure}

We now analyze each of the backpropagation steps to explain the
relationship between the regions of high and low confidence at each of
the neural network layers and the gradient values or importance of
different points in the feature space. In all plots we only show the
absolute values of the quantities of interest because we are mostly
interested in the relative magnitudes over the feature space rather
than their signs. After a forward pass through the network we can
evaluate the loss function and its gradients, shown in
Figure~\ref{Fig:Backprop}(a).  In this particular example we have
$[y_3]_1 = -[y_3]_2$, so we only show the former.  Given $y_3$ we can
use~\eqref{Eq:PartialDiffsAb} to compute the partial differentials of
$f$ with respect to the entries in $A_3$ and $b_3$.  The partial
differential with respect to $b_3$ simply coincides with $-y_3$, and
is therefore not very interesting. On the other hand, we see that the
partial differential with respect to $[A_3]_{i,j}$ is formed by
multiplying $[y_3]_i$ with the output value $[x_2]_i$. When looking at
the feature space representation for the specific case of
$[A_3]_{1,2}$ and using absolute values, this corresponds to the
pointwise multiplication of the values in Figure~\ref{Fig:Backprop}(a)
with the mask shown Figure~\ref{Fig:Backprop}(b). This multiplication
causes the partial differential to be reduced in areas of low
confidence in $[x_2]_2$. In addition, it causes the partial
differential to vanish at points at the zero crossing of the boundary
regions, as illustrated by the white curve in the upper-right corner
of Figure~\ref{Fig:Backprop}(c).

\begin{figure}
\centering
\begin{tabular}{ccc}
\includegraphics[width=0.32\textwidth]{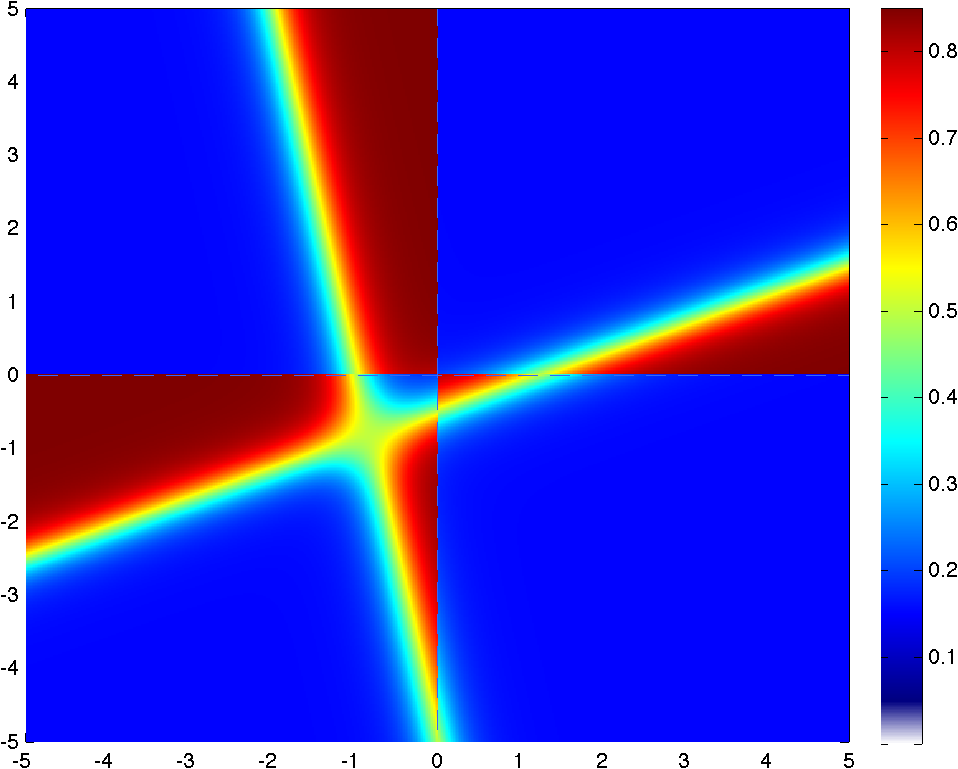} &
\includegraphics[width=0.32\textwidth]{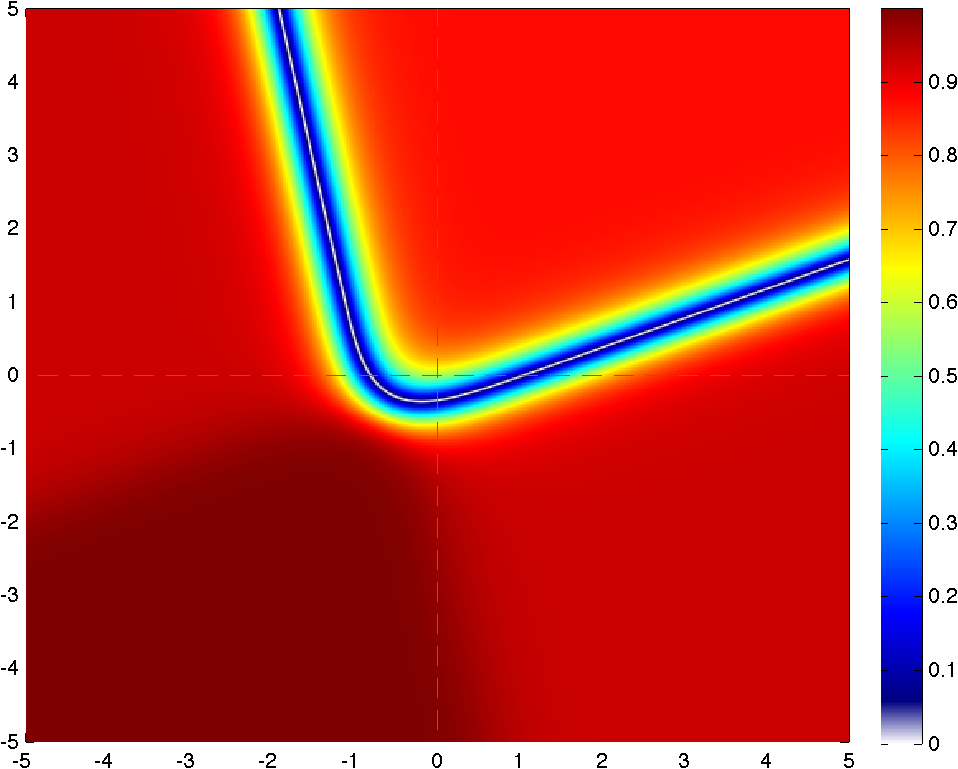} &
\includegraphics[width=0.32\textwidth]{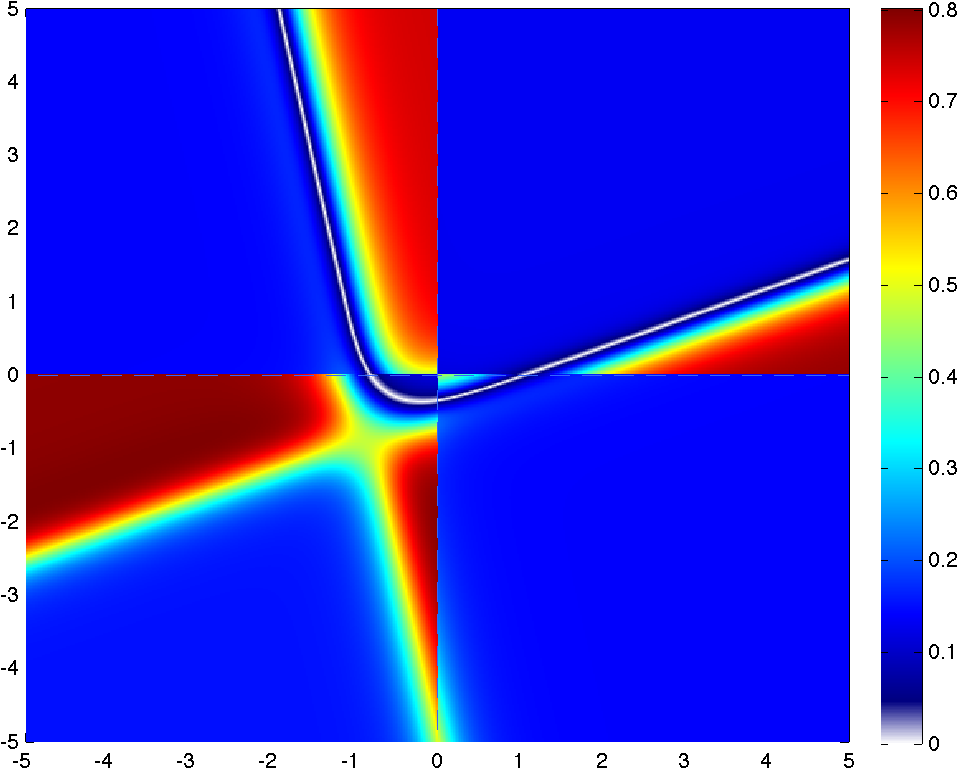}\\
 ({\bf{a}}) $\abs{[y_3]_1}$ & ({\bf{b}})
$\abs{[x_2]_2}$ & ({\bf{c}}) $\abs{\partial f/\partial [A_3]_{1,2}}$ \\[6pt]
\includegraphics[width=0.32\textwidth]{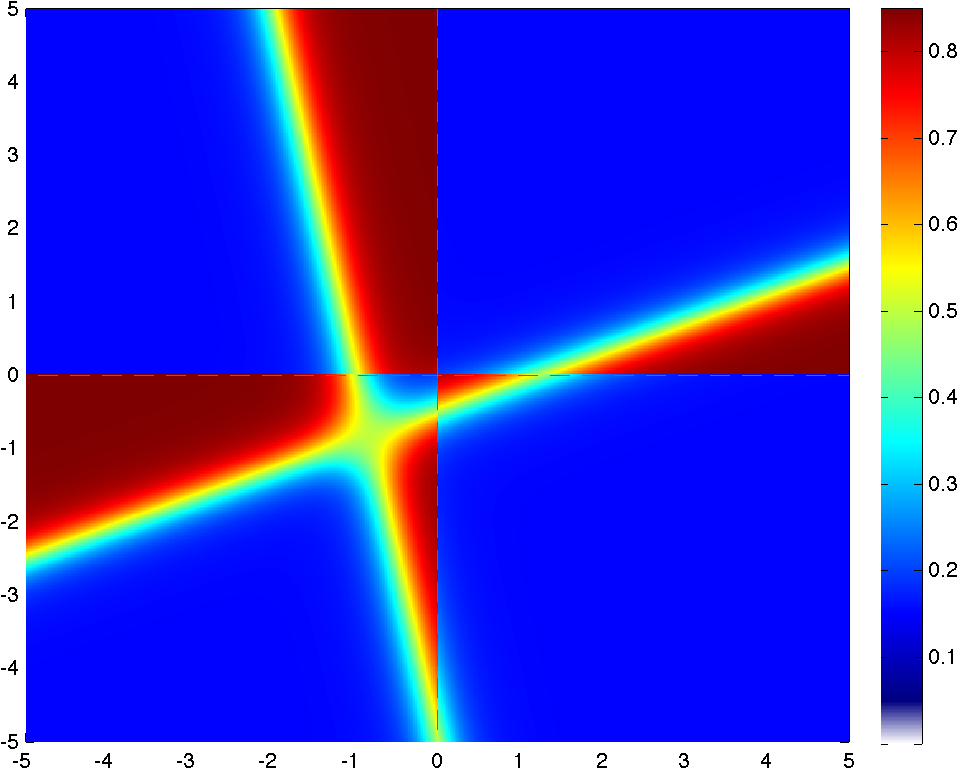} &
\includegraphics[width=0.32\textwidth]{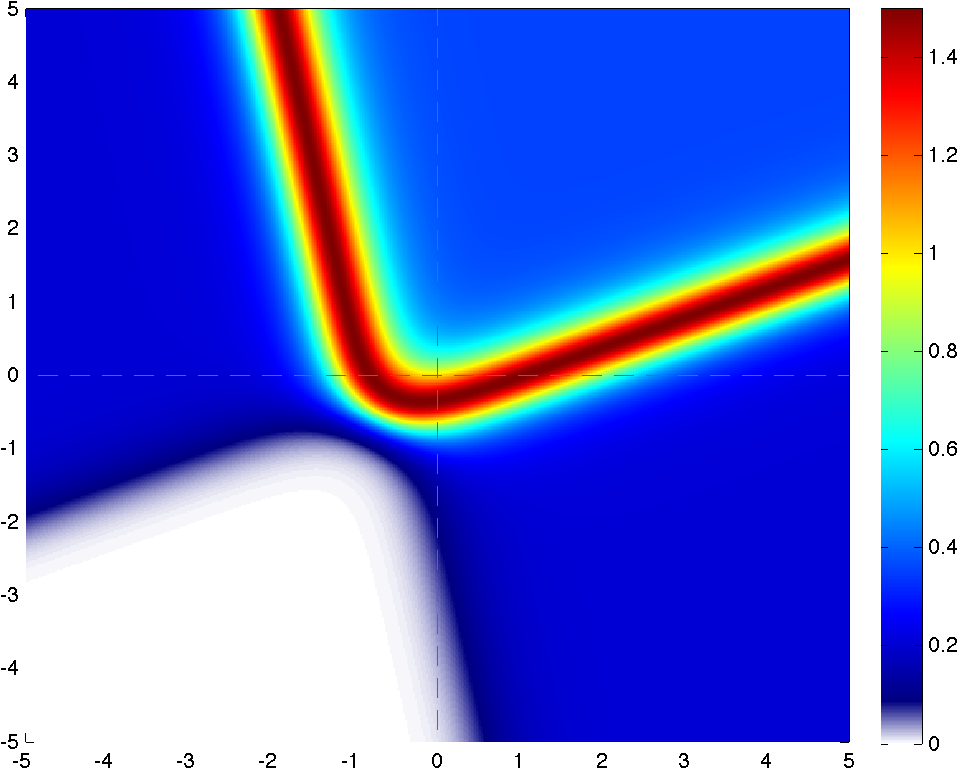} &
\includegraphics[width=0.32\textwidth]{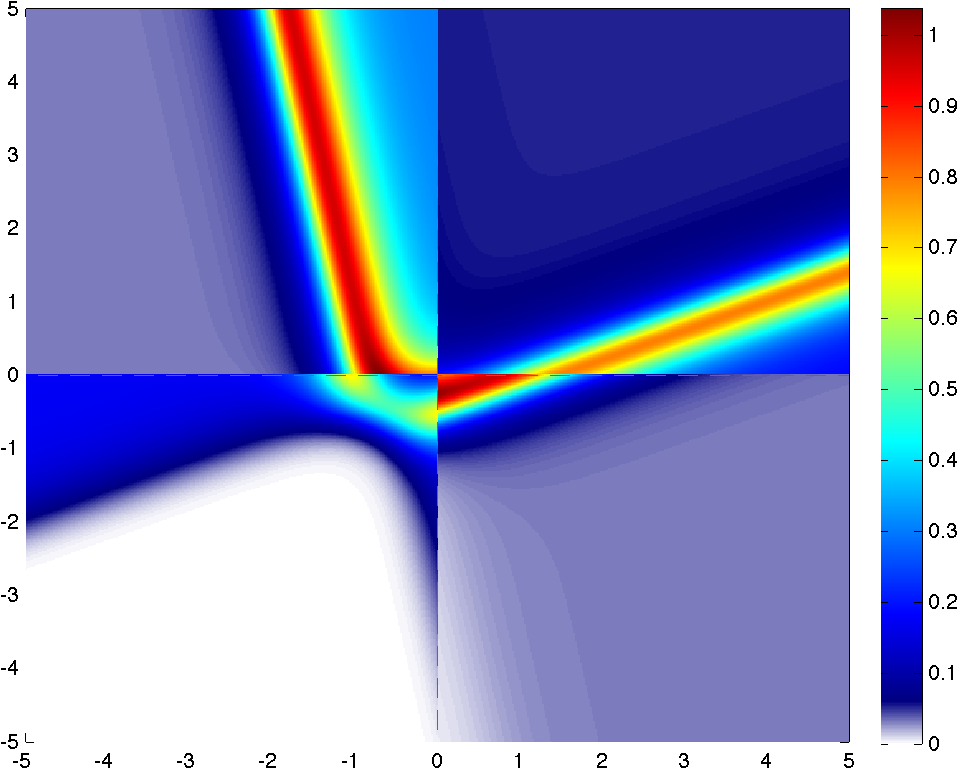} \\
({\bf{d}}) $\abs{[z_3]_2}$ & ({\bf{e}}) Mask: $[\sigma_3'(v_2)]_2$ & ({\bf{f}})
$\abs{[y_2]_2}$ \\[6pt]
\includegraphics[width=0.32\textwidth]{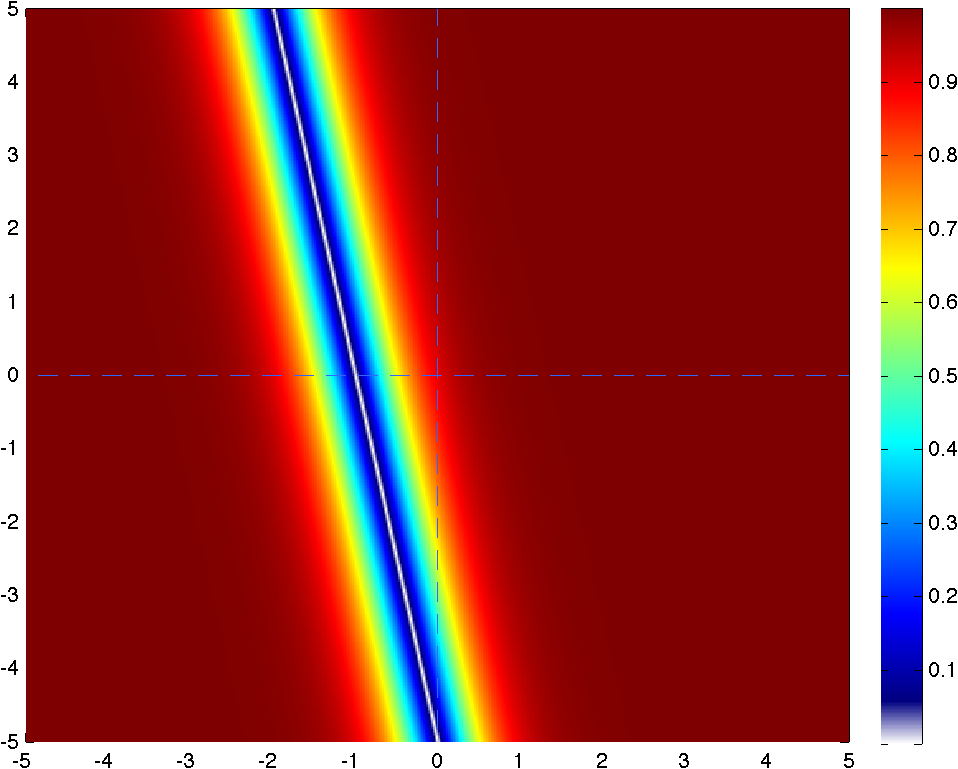} &
\includegraphics[width=0.32\textwidth]{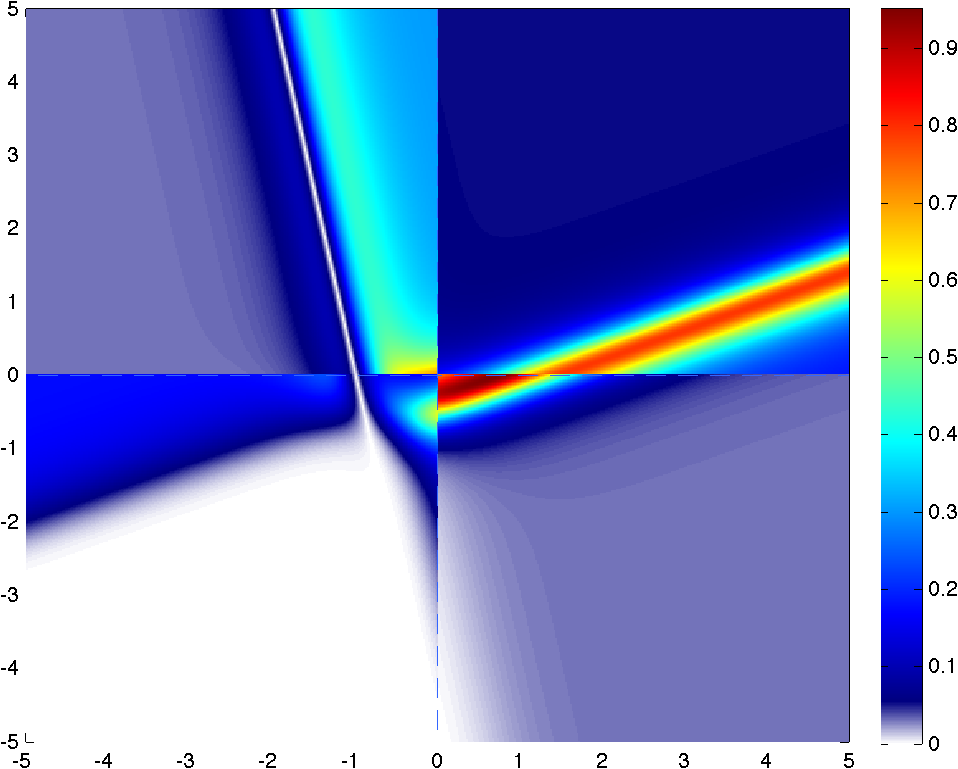} &
\includegraphics[width=0.32\textwidth]{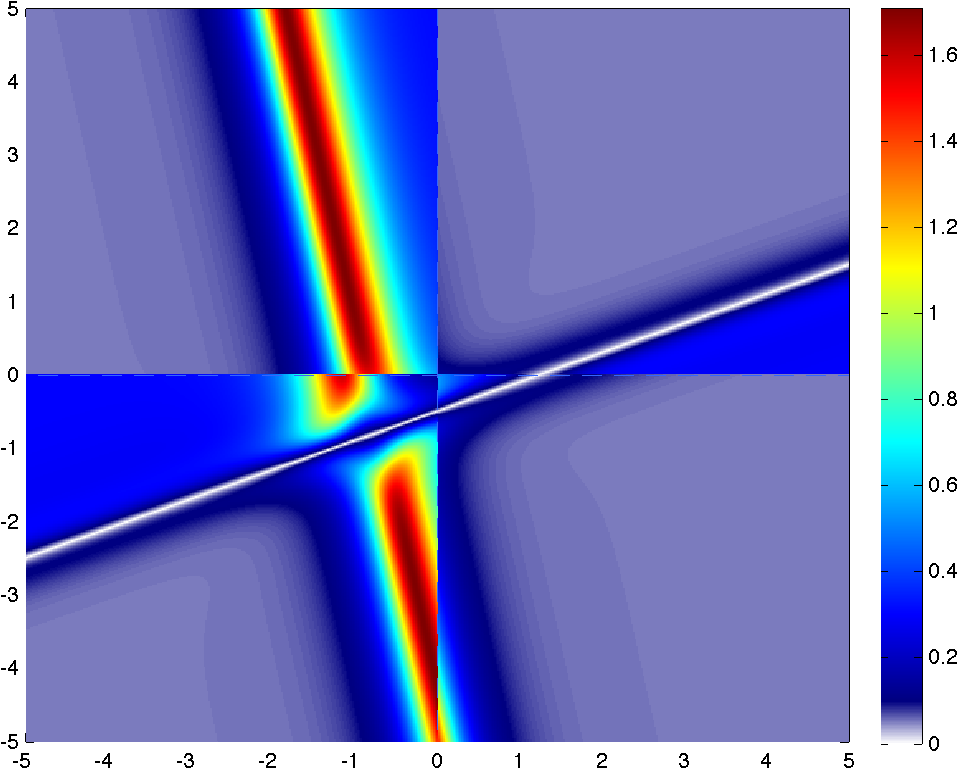} \\
 ({\bf{g}}) $\abs{[x_1]_1}$ & ({\bf{h}})
$\abs{\partial f/\partial [A_2]_{2,1}}$ & ({\bf{i}}) $\abs{[z_2]_1}$\\[6pt]
\includegraphics[width=0.32\textwidth]{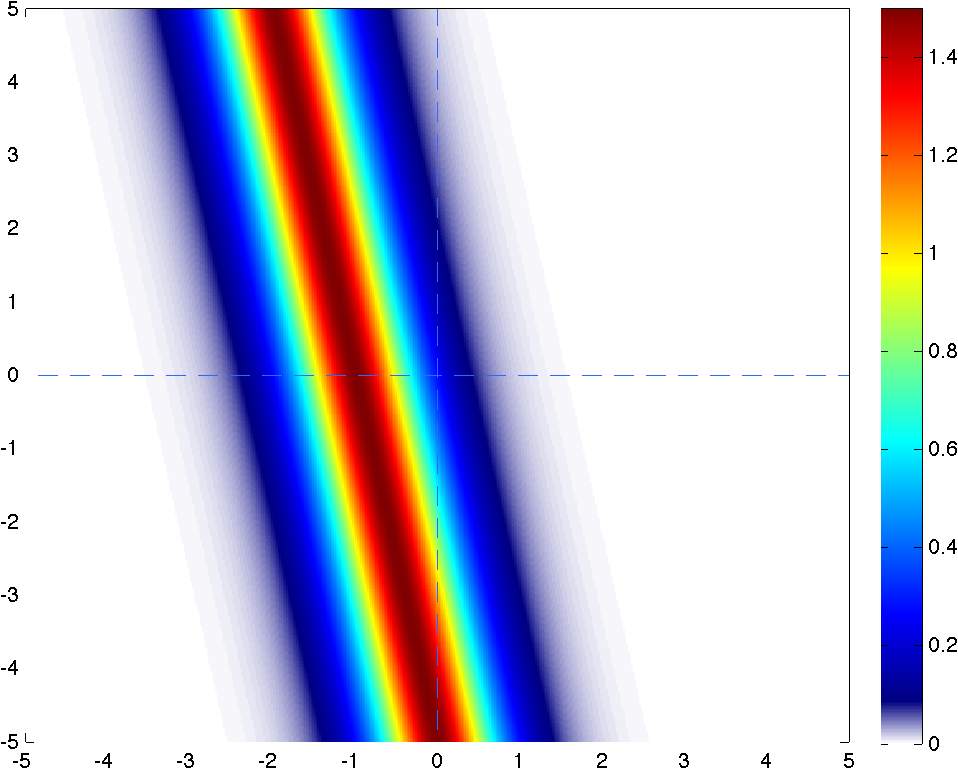} &
\includegraphics[width=0.32\textwidth]{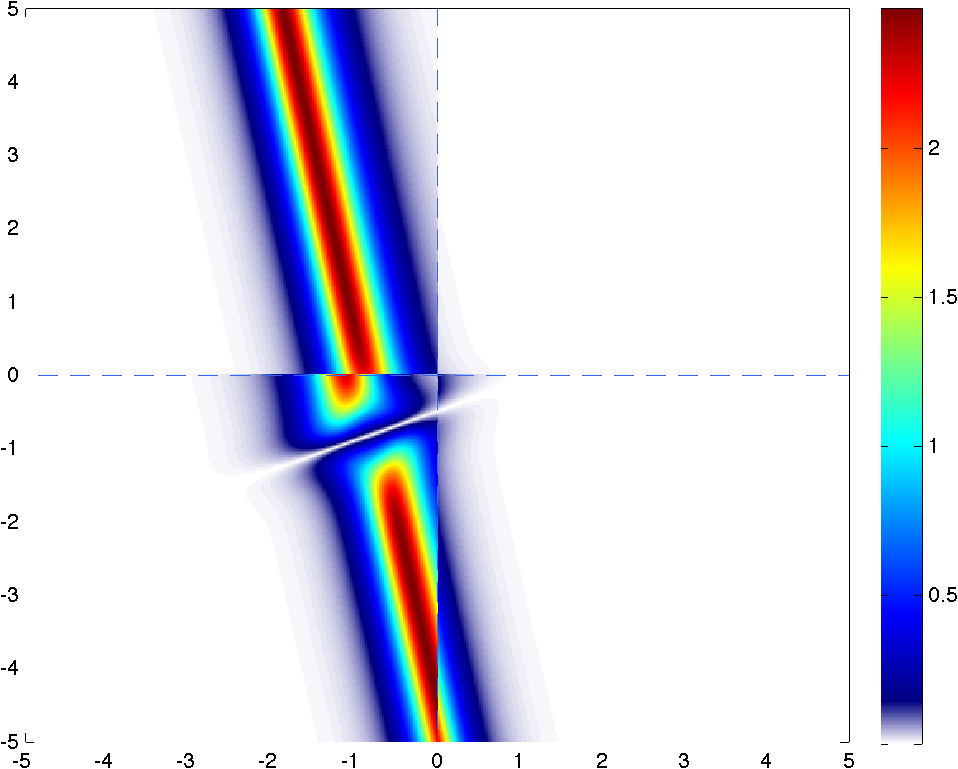}&
\includegraphics[width=0.32\textwidth]{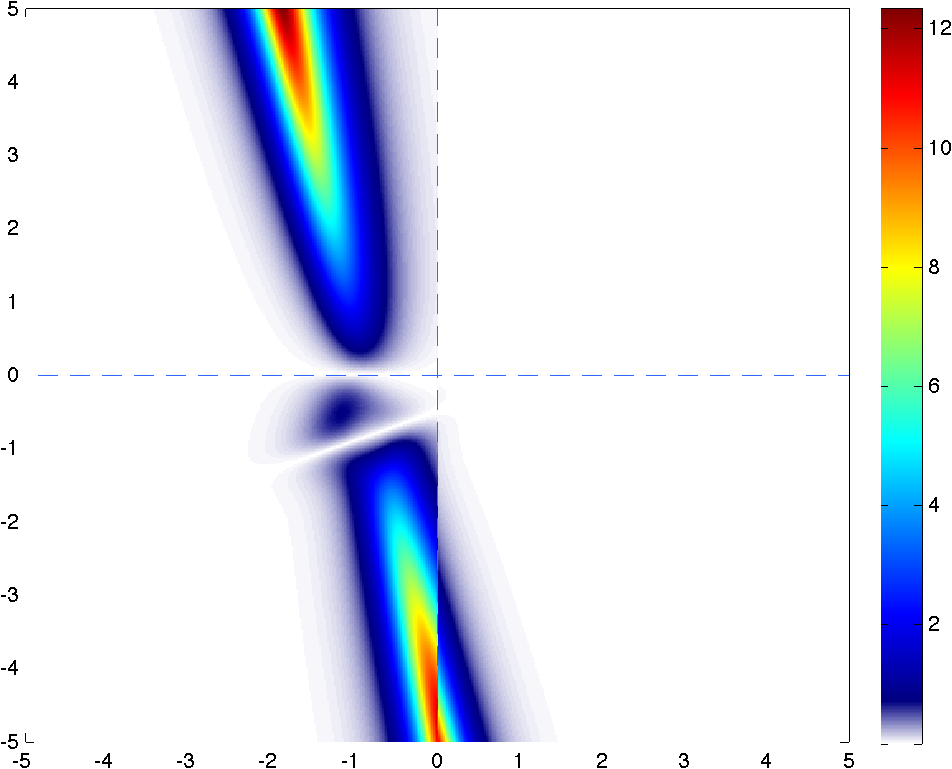} \\
 ({\bf{j}}) Mask: $[\sigma_3'(v_1)]_1$ &
({\bf{k}}) $\abs{[y_1]_1}$ &
({\bf{l}}) $\abs{\partial f/\partial [A_1]_{1,2}}$
\end{tabular}
\caption{Illustration of the error backpropagation
  process.}\label{Fig:Backprop}
\end{figure}

\clearpage

\begin{figure}[ht]
\centering
\begin{tabular}{ccc}
$\max_{i,j} \vert{\partial [A_3]_{i,j}}\vert$ &
$\max_{i,j} \vert{\partial [A_2]_{i,j}}\vert$ &
$\max_{i,j} \vert{\partial [A_1]_{i,j}}\vert$ \\
\includegraphics[width=0.282\textwidth]{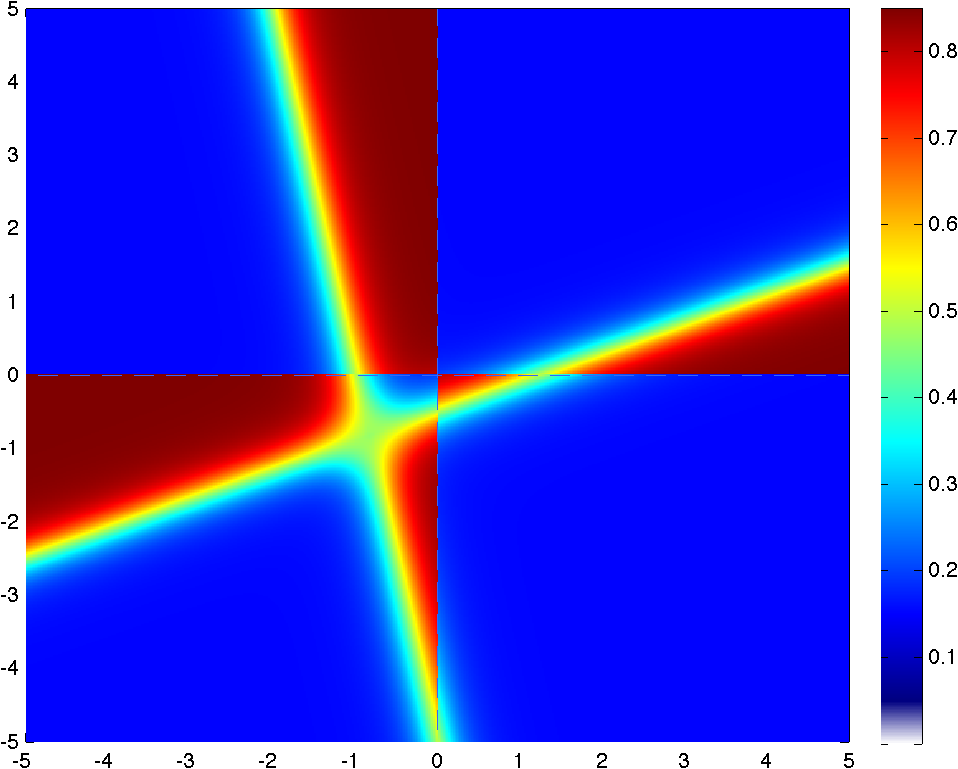}&
\includegraphics[width=0.282\textwidth]{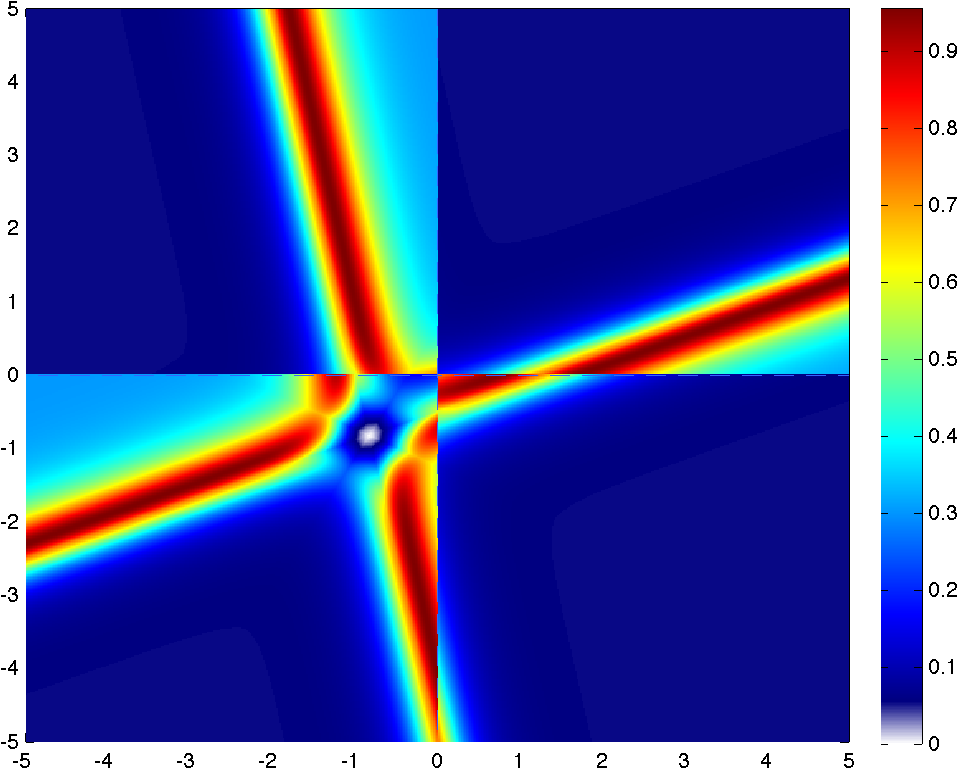}&
\includegraphics[width=0.282\textwidth]{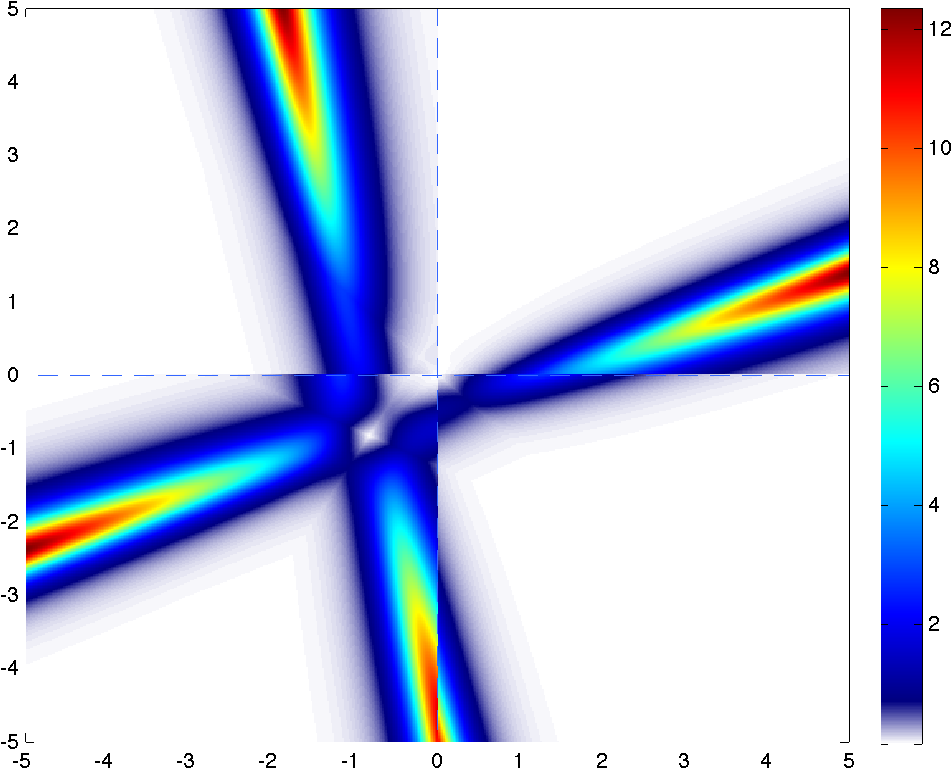}\\
\multicolumn{3}{c}{({\bf{a}}) $\gamma = 1$} \\[6pt]
\includegraphics[width=0.282\textwidth]{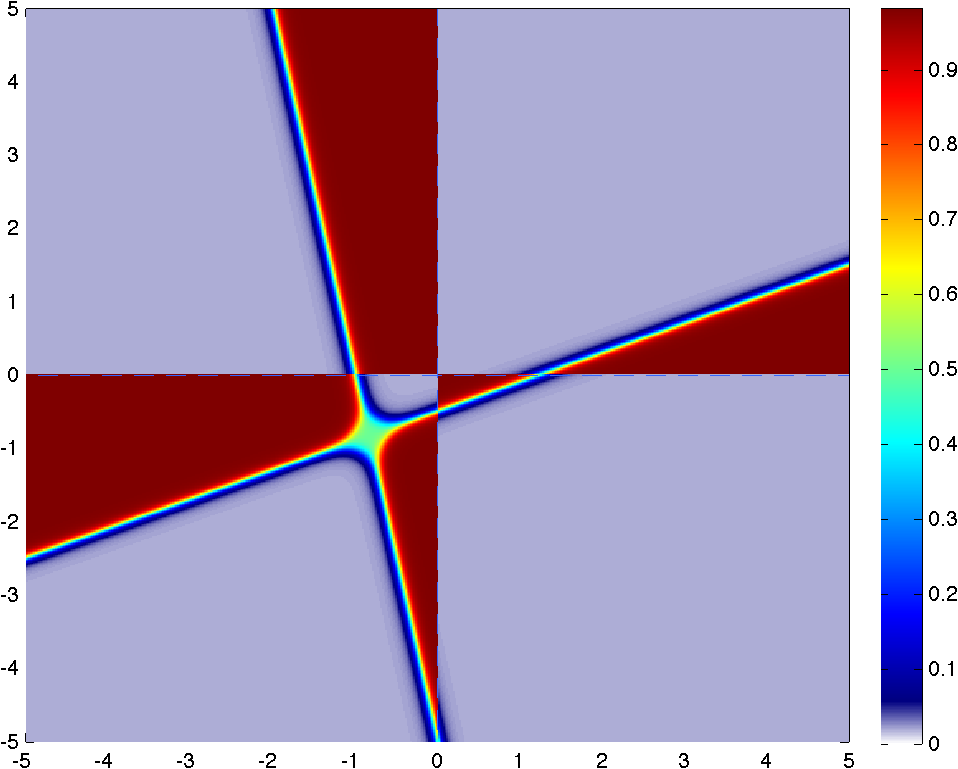}&
\includegraphics[width=0.282\textwidth]{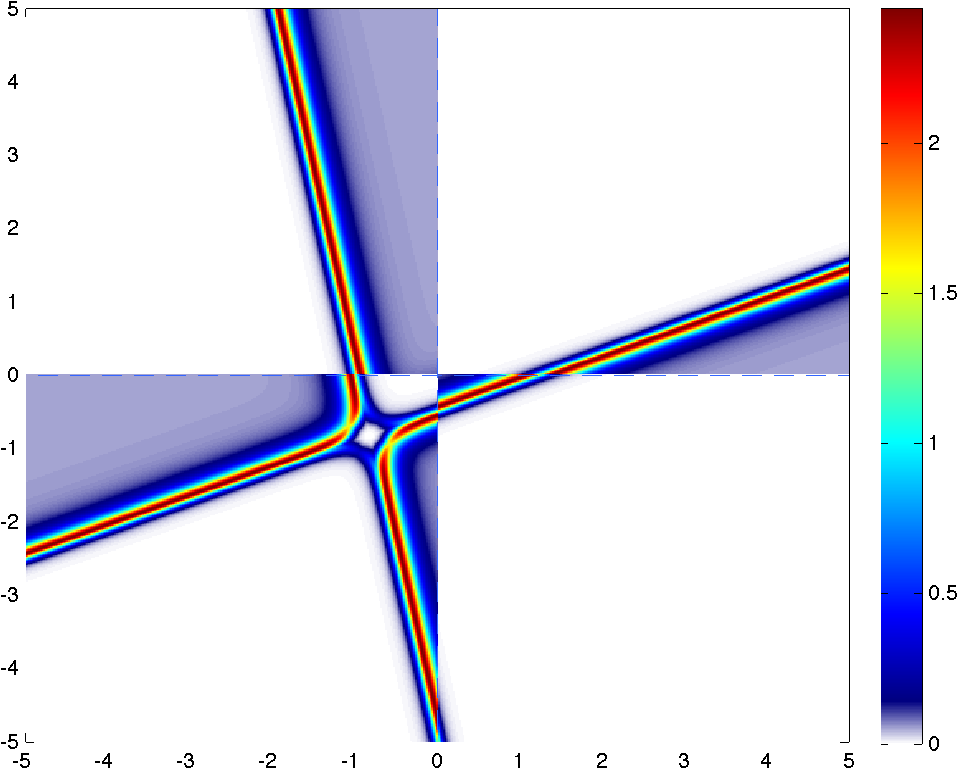}&
\includegraphics[width=0.282\textwidth]{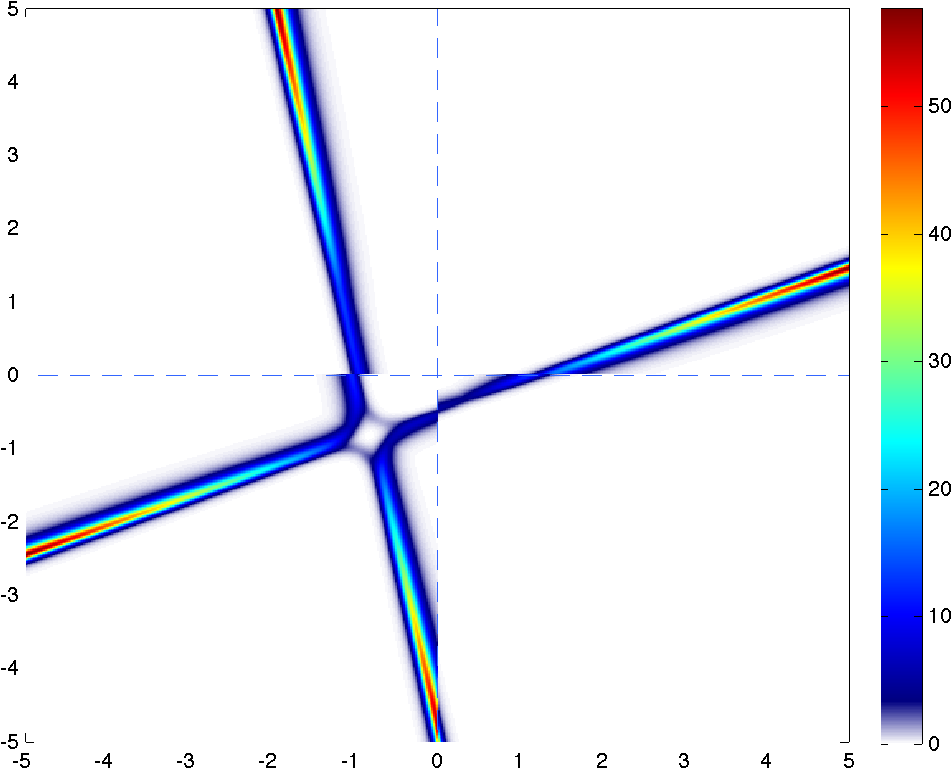}\\
\multicolumn{3}{c}{({\bf{b}}) $\gamma = 2$} \\[6pt]
\includegraphics[width=0.282\textwidth]{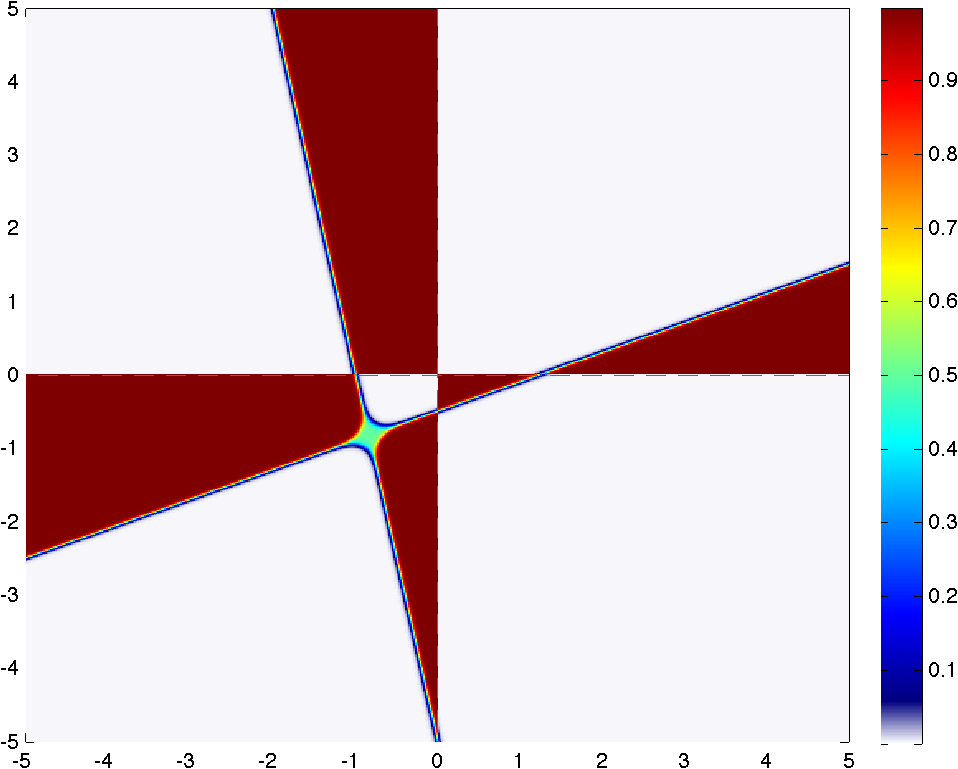}&
\includegraphics[width=0.282\textwidth]{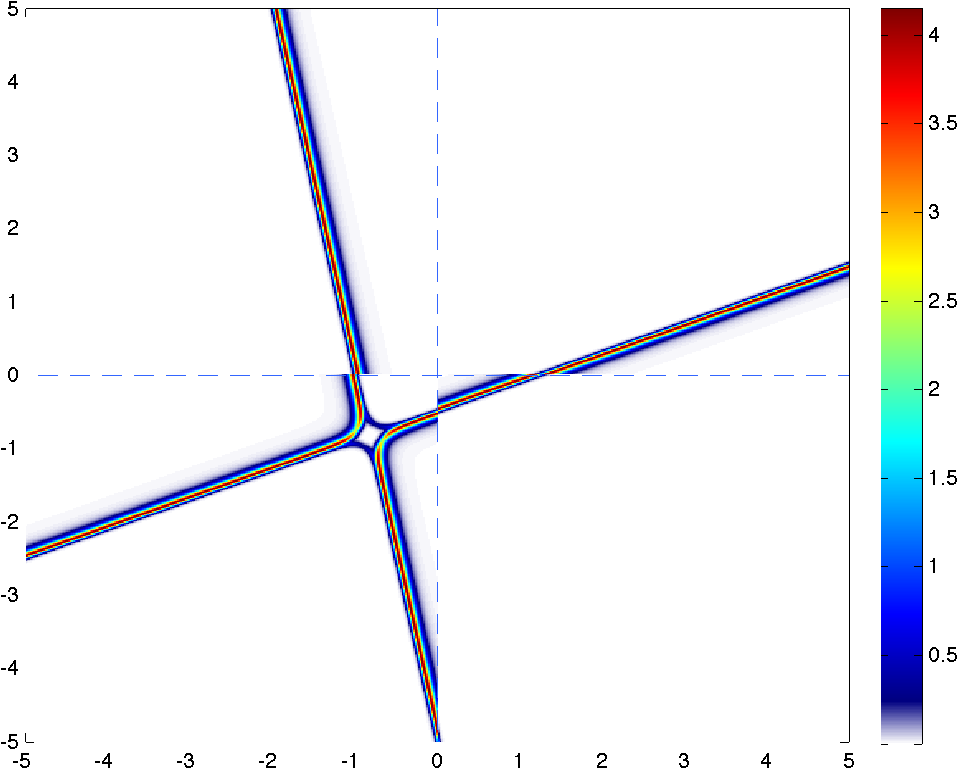}&
\includegraphics[width=0.282\textwidth]{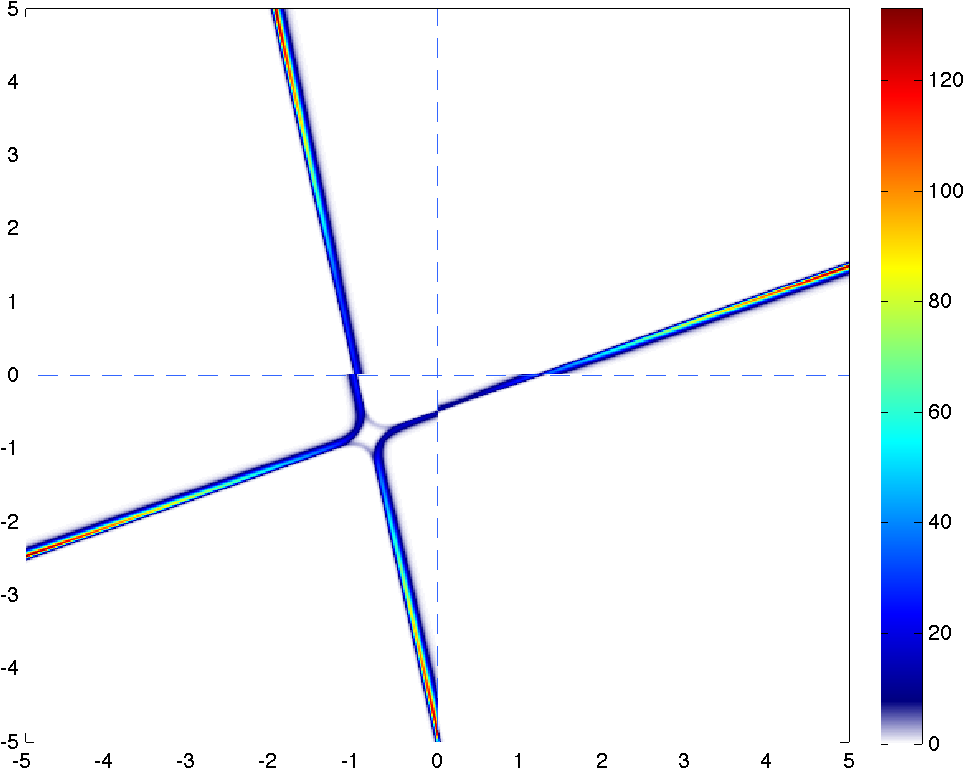}\\
\multicolumn{3}{c}{({\bf{c}}) $\gamma = 3$}
\end{tabular}
\caption{Distribution over the feature space of the maximum gradient
  components for the weight matrices in each of the three layers. The
  weight and bias terms in all layers are scaled by a factor
  $\gamma$.}\label{Fig:BackpropMaxGamma}
\end{figure}

In the next stage of the backpropagation we multiply $y_3$ by the
transpose of $A_3$ to obtain $z_3$, shown in
Figure~\ref{Fig:Backprop}(d). As an intermediate value $z_3$ itself is
not used, but it is further backpropagated though multiplication by
$\sigma_3'(v_2)$ to obtain $y_2$. As illustrated in
Figure~\ref{Fig:Sigmoid}(a), the gradient of the sigmoid is a kernel
around the origin, and when applied to $v_2$, the preimage of $x_2$
under $\sigma_3$, it emphasizes the regions of low confidence and
suppresses the regions of high confidence. This can be seen when
comparing the mask for $[v_2]_2$, shown in
Figure~\ref{Fig:Backprop}(e), with the corresponding region $[x_2]_2$
shown in Figure~\ref{Fig:NNExample2D}(d). The result obtained with
multiplication by the mask is illustrated in
Figure~\ref{Fig:Backprop}(f) and shows that backpropagation of the
error is most predominant in the boundary region as well as in some
regions where it was large to start with (most notably at the top of
the bottom-left quadrant). From here we can compute the partial
differential with respect to the entries of $A_2$ though
multiplication by $x_1$, which again damps values around the
transition region, and backpropagate further to get $z_2$, as shown in
Figures~\ref{Fig:Backprop}(g)--(i). To obtain $y_1$, we need to
multiply $z_2$ by the mask corresponding to the preimage of the
regions in $x_1$. Unlike all other layers, these values are unbounded
in the direction of the hyperplane normal and, as shown in
Figure~\ref{Fig:Backprop}(j), result in masks that vanish away from
the boundary region. Multiplication by the mask corresponding to
$[v_1]_1$ gives $[y_1]_1$ shown in Figure~\ref{Fig:Backprop}(k). We
finally obtain the partial differentials with respect to the entries
in $A_1$ by multiplying by the corresponding entries in $x_0$.  For
the first layer this stage actually amplifies the gradient entries
whenever the corresponding coordinate value exceeds one in absolute
value.  In subsequent layers the maximum values of $x$ lie in the -1
to 1 output range of the sigmoid function and can therefore only
reduce the resulting gradient components.

In Figure~\ref{Fig:BackpropMaxGamma}(a) we plot the maximum absolute
gradient components for each of the three weight matrices. It is clear
that the partial differentials with respect to $A_3$ are predominant
in misclassified regions, but also exist outside of this region in
areas where the objective function could be minimized further by
increasing the confidence levels (scaling up the weight and bias
terms). In the second layer, the backpropagated values are damped in
the regions of high confidence and concentrate around the decision
boundaries, which, in turn, are aligned with the underlying
hyperplanes. Finally, in the first layer, we see that gradient values
away from the hyperplanes have mostly vanished as a result of
multiplication with the sigmoid gradient mask, despite the
multiplication with potentially large coordinate values. Overall we
see the tendency of the gradients to become increasingly localized in
feature space towards the first layer. The boundary shifts we
discussed in Section~\ref{Sec:BoundaryShifts} can lead to additional
damping, as the sigmoid derivative masks no longer align with the
peaks in the gradient field. Scaling of the weight and bias is
detrimental to the backpropagation of the error (a phenomenon that is
also known as saturation of the sigmoids~\cite{LEC1998BOMa}) and can
lead to highly localized gradient values. This is illustrated in
Figures~\ref{Fig:BackpropMaxGamma}(b) and (c) where we scale all
weight and bias terms by a factor of $2$ and $3$, respectively.
Especially in deep networks it can be seen that a single sharp mask in
one of the layers can localize the backpropagating error and thereby
affect all preceding layers. These figures also show that the
increased scaling of the weights not only leads to localization, but
also to attenuation of the gradients. In the first layer this is
further aided by the multiplication with the coordinate
values. Summarizing, we see that the repeated multiplication by the
masks generated by the derivative of the activation function tends to
localize gradients. Multiplication with $A^T$ in the back propagation
mixes the regions with large gradients, but the location of these
regions does not otherwise change. Finally, we note that the above
principles are not restricted to the sigmoid or hyperbolic tangent
functions. However, for activation functions where the gradient masks
as not localized, for example for rectified linear units of the form
$\max(\alpha x, \beta x)$, the vanishing gradient problem is less of a
problem.

\section{Optimization}\label{Sec:Training}
In the previous section we studied how individual training samples
contribute to the overall gradient \eqref{Eq:GradientSum} of the loss
function. In this section we take a closer look at the dynamic
behavior and the changing relevance of training samples during
optimization over the network parameter vector $s$. The parameter
updates are gradient-descent steps of the form
\begin{equation}\label{Eq:GD}
s^{k+1} = s^k - \alpha \nabla\phi(s^k),
\end{equation}
with learning rate $\alpha$. The goal of this section is to clarify
the relationships between the training set and the optimization
process of the network parameters. To keep things simple we make no
effort to improve the efficiency of the optimization process and,
unless noted otherwise, we use a fixed learning rate with a moderate
value of $\alpha = 0.01$. Likewise, we compute the exact gradient
using the entire training set instead of using an approximation based
on suitably chosen subsets, as is done in practically favored
stochastic gradient descent (SGD) methods. Note, however, that the
mechanisms exposed in this section are general enough to carry over to
these and other methods without substantial changes. Similar findings
may moreover apply to other models. Throughout this section we place a
particular emphasis on the first layer of the neural network. To
illustrate certain mechanisms it often helps to keep parameters of
subsequent layers fixed. In this case it is implied that the
corresponding entries in the gradient update in \eqref{Eq:GD} are
zeroed out.

\subsection{Sampling density and transition width}

To investigate the roles of sampling density and transition width we
start with a very simple example with feature vectors $x \in
[-6,6]\times [-1,1] \subset \mathbb{R}^2$, and two classes: one to the
left of the $y$-axis (first entry is negative), and one to the
right. Example training sets with samples in each of the two classes
are plotted in Figure~\ref{Fig:SimpleDomain1}(a) and (b). Given such
training sets we want to learn the classification using a neural
network with a single hidden layer consisting of one node. To ensure
that the classes are well defined we place four training samples---two
for each class-- near the interface of the two classes and sample the
remaining points to the left and right of these points. Unless stated
otherwise we keep all network parameters fixed except for the weights
and bias terms in the first layer.

\subsubsection{Sampling density}

In the first experiment we study how the number or density of training
points affects the optimization. We initialize the network with
parameters
\[
A_1 =
{\textstyle\frac{25}{\sqrt{1.09}}}[1,0.3],\
b_1 = A_1\cdot[2,0]^T,\quad
A_2 = [3; -3],\ b_2 = [0;0],
\]
and keep the parameters in the second layer fixed. Parameter $b_1$ is
chosen such that the initial hyperplane goes through the point
$(2,0)$. Training sets consist of $n$ samples, including the four at
the interface, and are chosen such that the number of points in each
class differs by at most one. Figures~\ref{Fig:SimpleDomain1}(a) and
(b) illustrate such sets for $n=50$, and $n=800$, respectively. The
hyperplane is indicated by a thick black line, bordered with two
dashes lines which indicate the location where the output of the first
layer is equal to $\pm 0.95$. Figure~\ref{Fig:SimpleDomain1}(c) shows
the magnitude of the partial differential with respect to $[A_1]_1$ at
the initial parameter setting over the entire domain. The gradient
$\nabla\phi(s)$ is then computed as the average of the gradient values
evaluated at the individual training samples. As a measure of progress
we can look at the area of the misclassified region, i.e., the region
between the $y$-axis and the hyperplane (note this quantity does not
include information about the confidence levels of the
classification). Figure~\ref{Fig:SimpleDomain1}(e) shows this area as
a function of iteration for different sampling densities. The shape of
the loss function curves are very similar to these and we therefore
omit them here. For $n=800$ and $n=3200$ the area of the misclassified
region steadily goes down to zero, although the rate at which it does
so gradually diminishes. Although not apparent from the curves, this
phenomenon happens for all the training sets used here and we will
explain exactly why this happens in
Section~\ref{Sec:ConstraintsRegularization}.  Progress for $n=50$ and
$n=200$ appears much less uniform and exhibits pronounced stages of
fast and slow progress. The reason for this is a combination of the
sampling density and the localized gradient. From
Figure~\ref{Fig:SimpleDomain1}(c) we can see that the gradient field
is concentrated around the hyperplane, with peak values slightly to
the left of the hyperplane. When the sampling density is low it may
happen that none of the training samples is close to the
hyperplane. When this happens, the gradient will be small, and
consequently progress will be slow. When one or more points are close
to the hyperplane, the gradient will be larger and progress is
faster. Figure~\ref{Fig:SimpleDomain1}(f) shows the rate of change in
the area of the misclassified region along with the distance between
the hyperplane and its nearest training sample for $n=50$. It can be
seen that the rate increases as the hyperplane moves towards the
training sample, with the peak rate happening just before the
hyperplane reaches the point. After that the rate gradually drops
again as the hyperplane slowly moves further away from the sample.
This is precisely the state at 300,000 iterations, which is
illustrated in Figure~\ref{Fig:SimpleDomain1}(d). For $n=25$, we find
ourselves in the same situation right at the start. Initially we move
away from a single training point, but as a consequence of the low
sampling density, no other sampling points are nearby, causing a
prolonged period of very slow progress. The discrete nature of
training samples is less pronounced when the overall sampling density
is high, or when the transition widths are large.

\begin{figure}
\begin{tabular}{cc}
\includegraphics[width=0.475\textwidth]{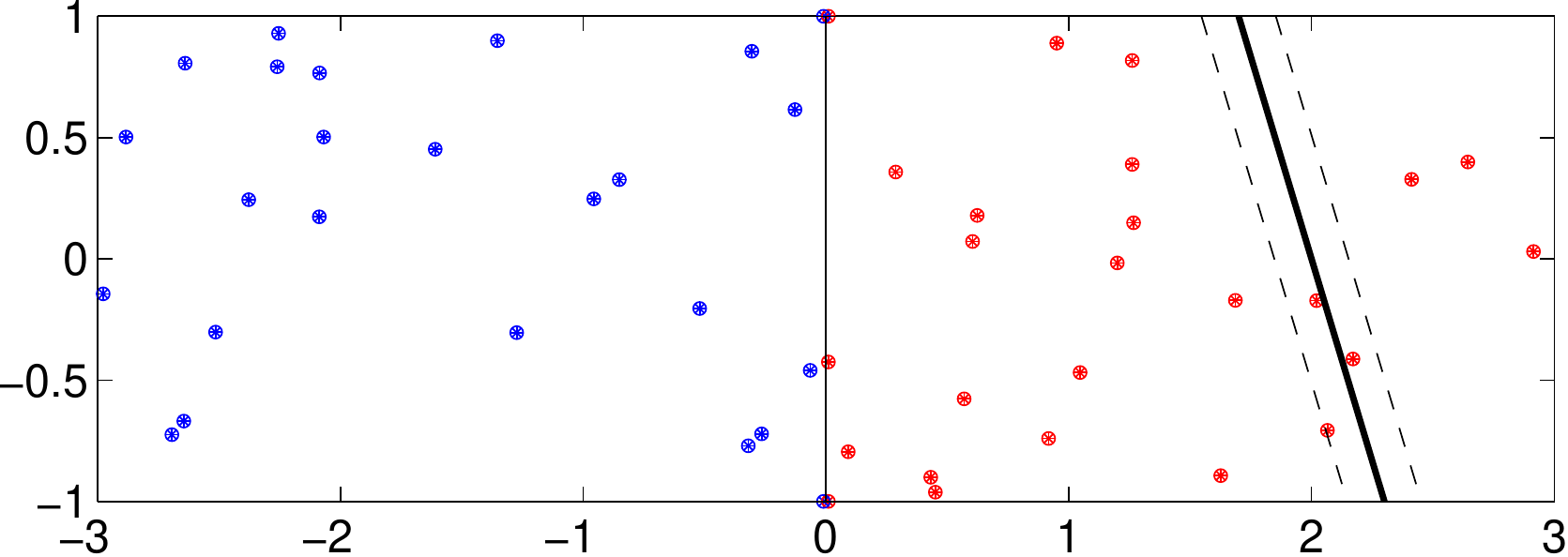}&
\includegraphics[width=0.475\textwidth]{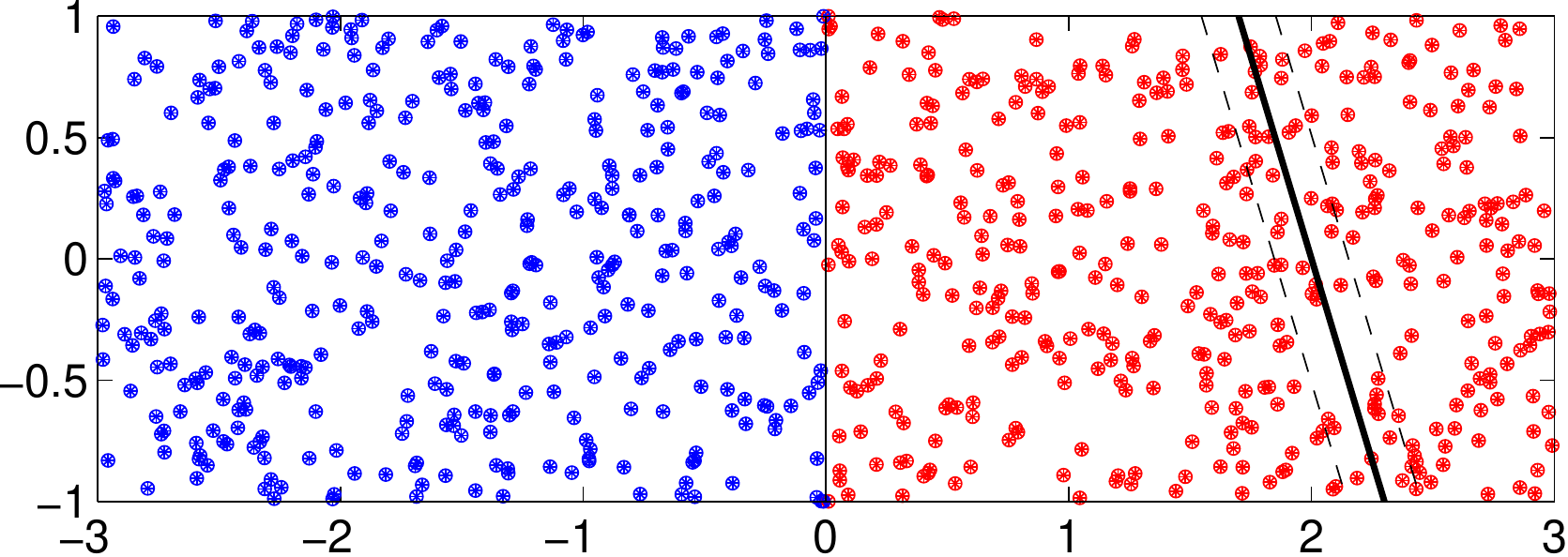}\\
({\bf{a}}) & ({\bf{b}})\\[2pt]
\includegraphics[width=0.465\textwidth]{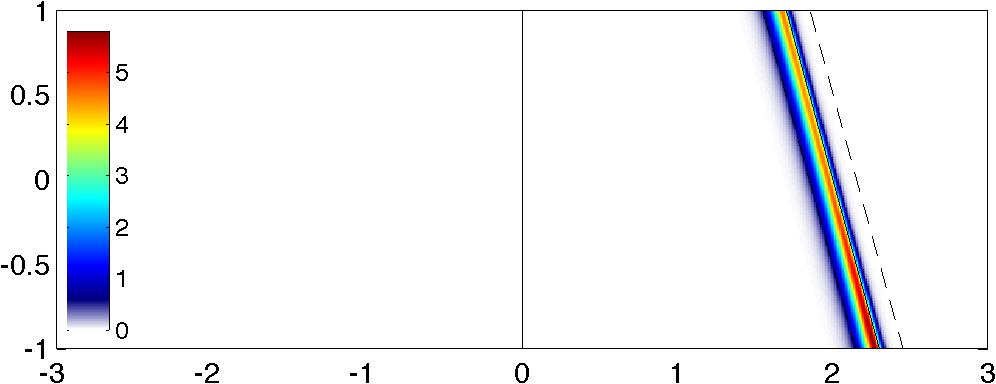}&
\includegraphics[width=0.475\textwidth]{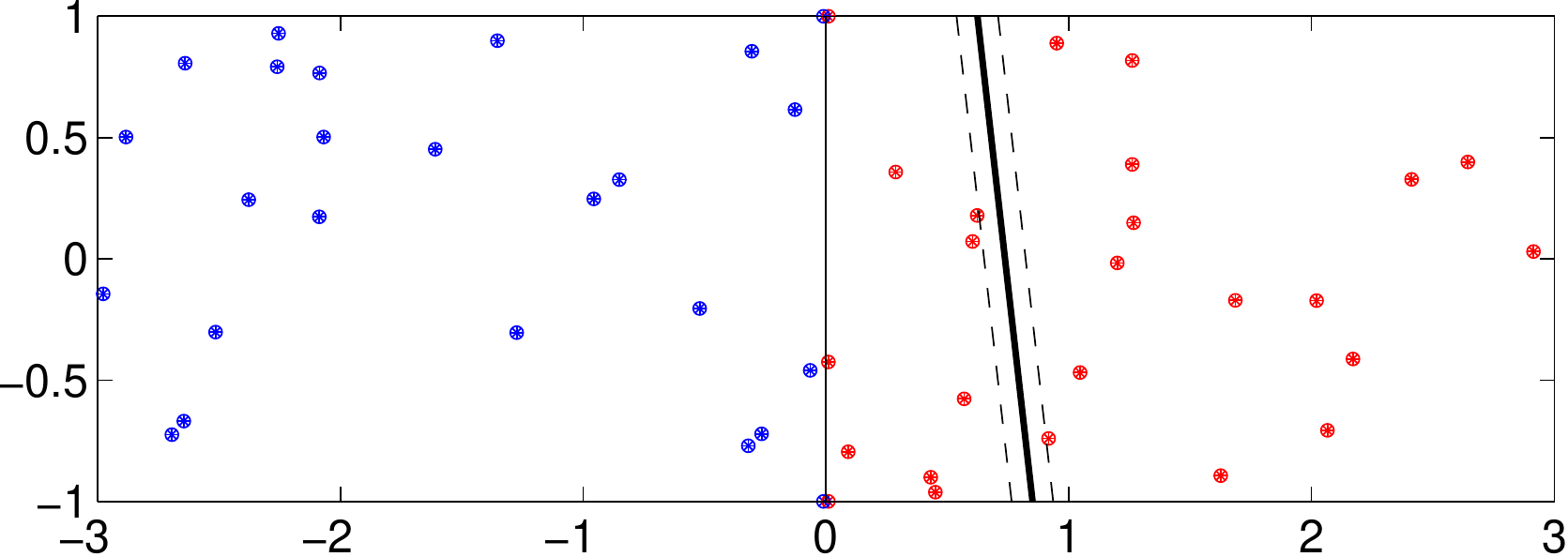}\\
({\bf{c}}) & ({\bf{d}})\\[2pt]
\includegraphics[width=0.475\textwidth]{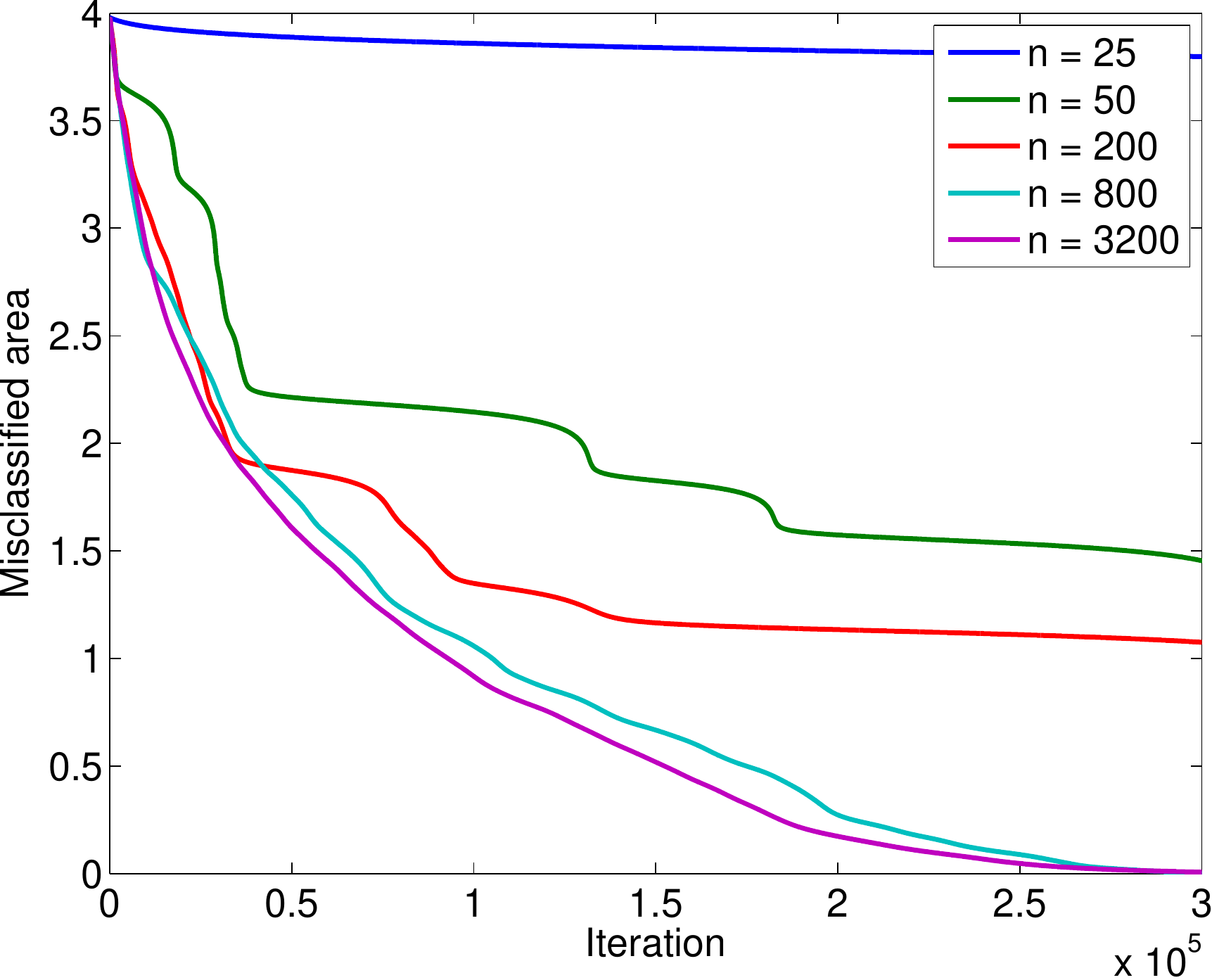}&
\includegraphics[width=0.475\textwidth]{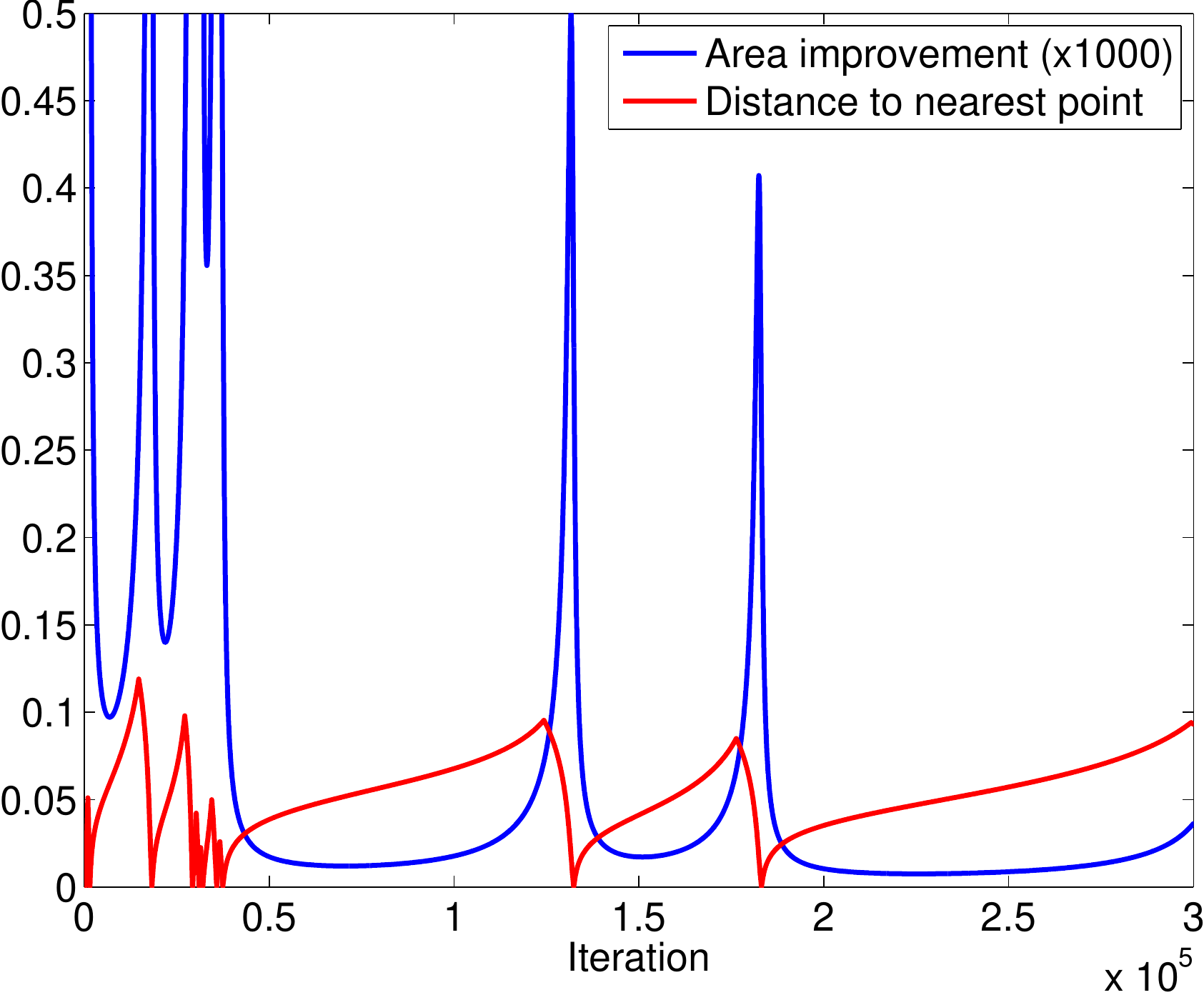}\\
({\bf{e}}) & ({\bf{f}})\\
\end{tabular}
\caption{Simple domain and initial hyperplane location with (a) 50,
  and (b) 800 training samples equally divided over two classes, along
  with (c) the initial gradient field. Plot (d) shows the location of
  the hyperplane after 300,000 iterations and (e) shows the area of
  the misclassified region as a function of iteration for different
  numbers of training samples. Plot (f) shows the reduction in
  misclassified area per iteration and the distance between the
  hyperplane and the nearest sample when 50 training samples are
  used.}\label{Fig:SimpleDomain1}
\end{figure}

\subsubsection{Transition width}

To illustrate the effect of transition widths, we used the setting
with 3,200 samples as described above, but scaled the row vector of
the initial $A_1$ to have Euclidean norm ranging from 1 to 100. In
each case we adjust $b_1$ such that the initial hyperplane goes
through the point $(2,0)$. As shown in
Figure~\ref{Fig:DifferentWeights}(a), the misclassified area reaches
zero almost immediately when $A_1$ is scaled to have unit norm. In
other words, the hyperplane is placed correctly in this case after
only 3,260 iterations. As the norm of the initial $A_1$ increases, it
takes longer to reach this point: for an initial norm of $10$ it takes
some 72,580 iterations, whereas for an initial norm of $25$ it takes
over 300,000. Accordingly, we see from
Figure~\ref{Fig:DifferentWeights}(b) that the loss also drops much
faster for small weights than it does for large weights. However, once
the hyperplane is in place, the only way to decrease the loss is by
scaling the weights to improve the confidence. This process can be
somewhat slow when the weights are small and the hyperplane placement
is finalized (as is the case when we start with small initial
weights). As a result, the setup with initial weight of 25 eventually
catches up with the earlier two, simply because it has a much sharper
transition at the boundary as the hyperplane finally closes in to the
right location.

\begin{figure}[t]
\centering
\begin{tabular}{cc}
\includegraphics[width=0.435\textwidth]{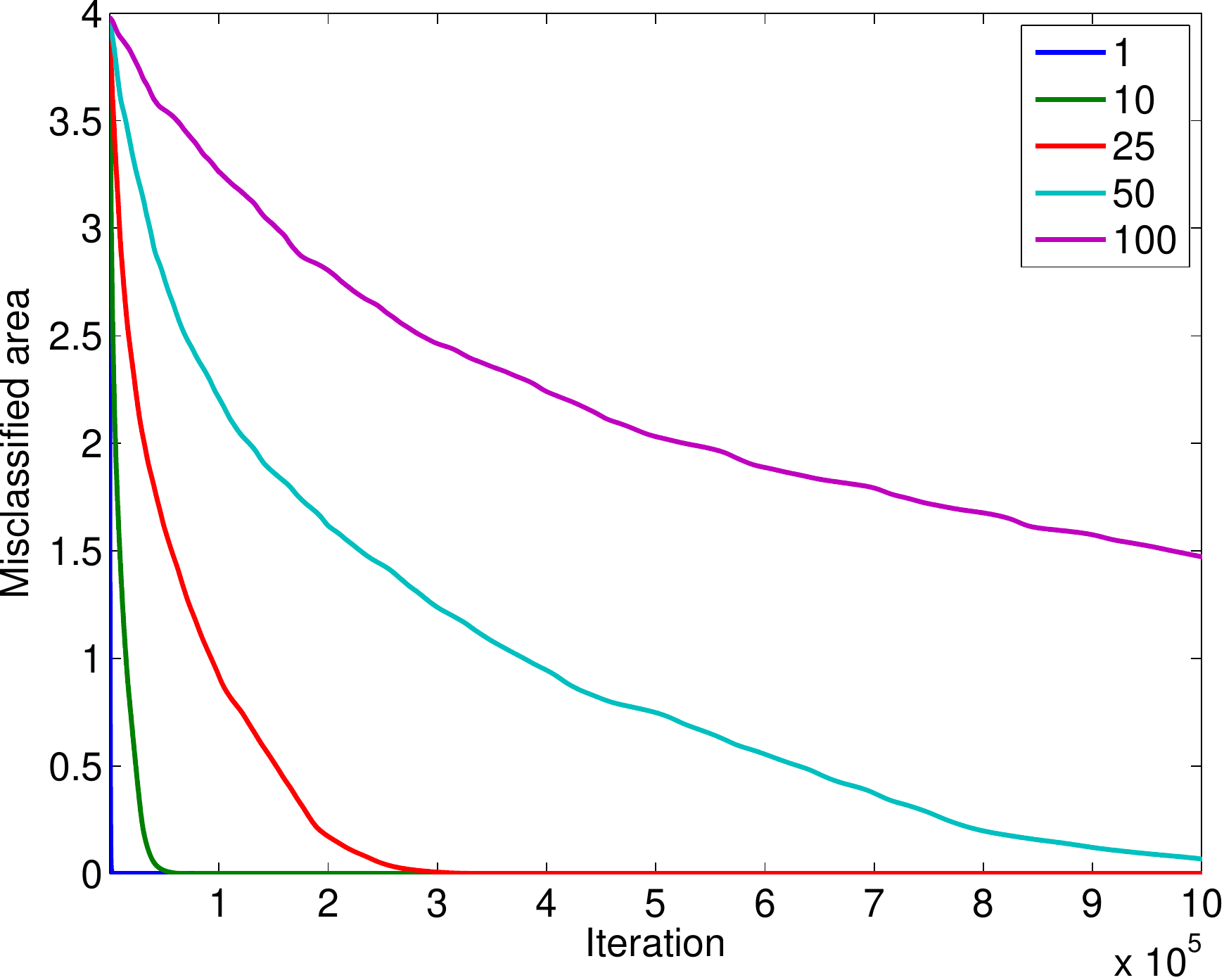}&
\includegraphics[width=0.435\textwidth]{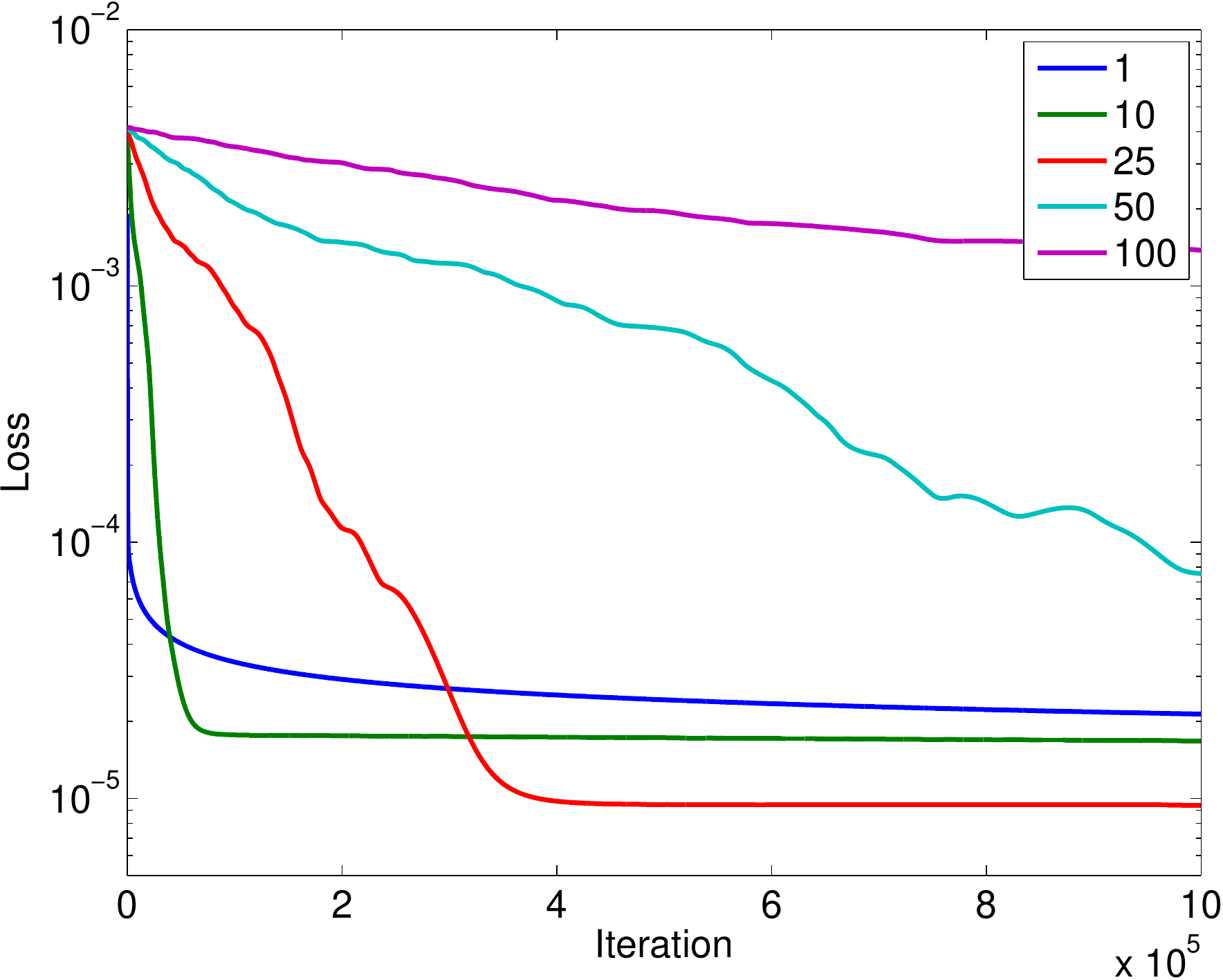}\\
({\bf{a}}) & ({\bf{b}})\\[4pt]
\includegraphics[width=0.435\textwidth]{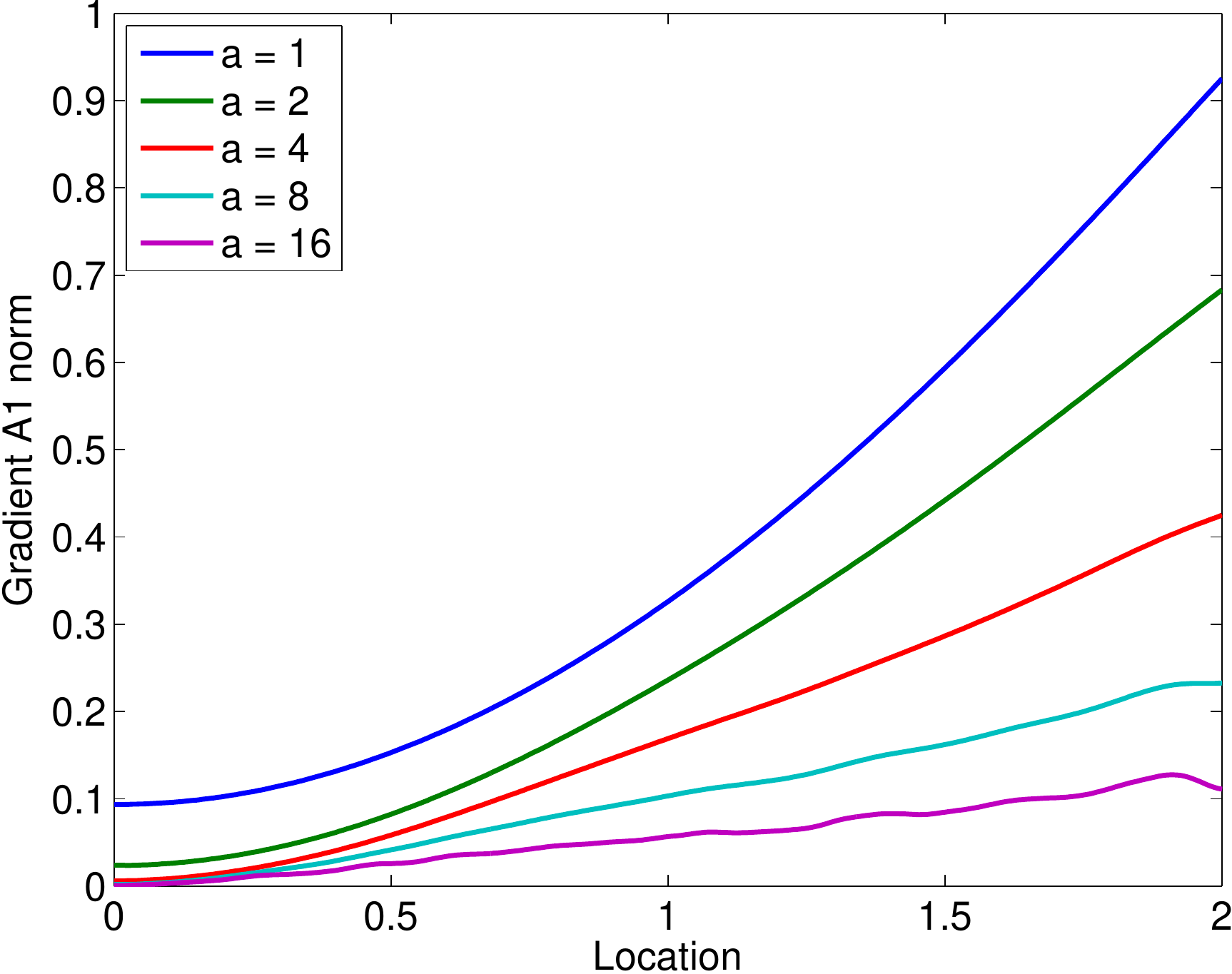}&
\includegraphics[width=0.435\textwidth]{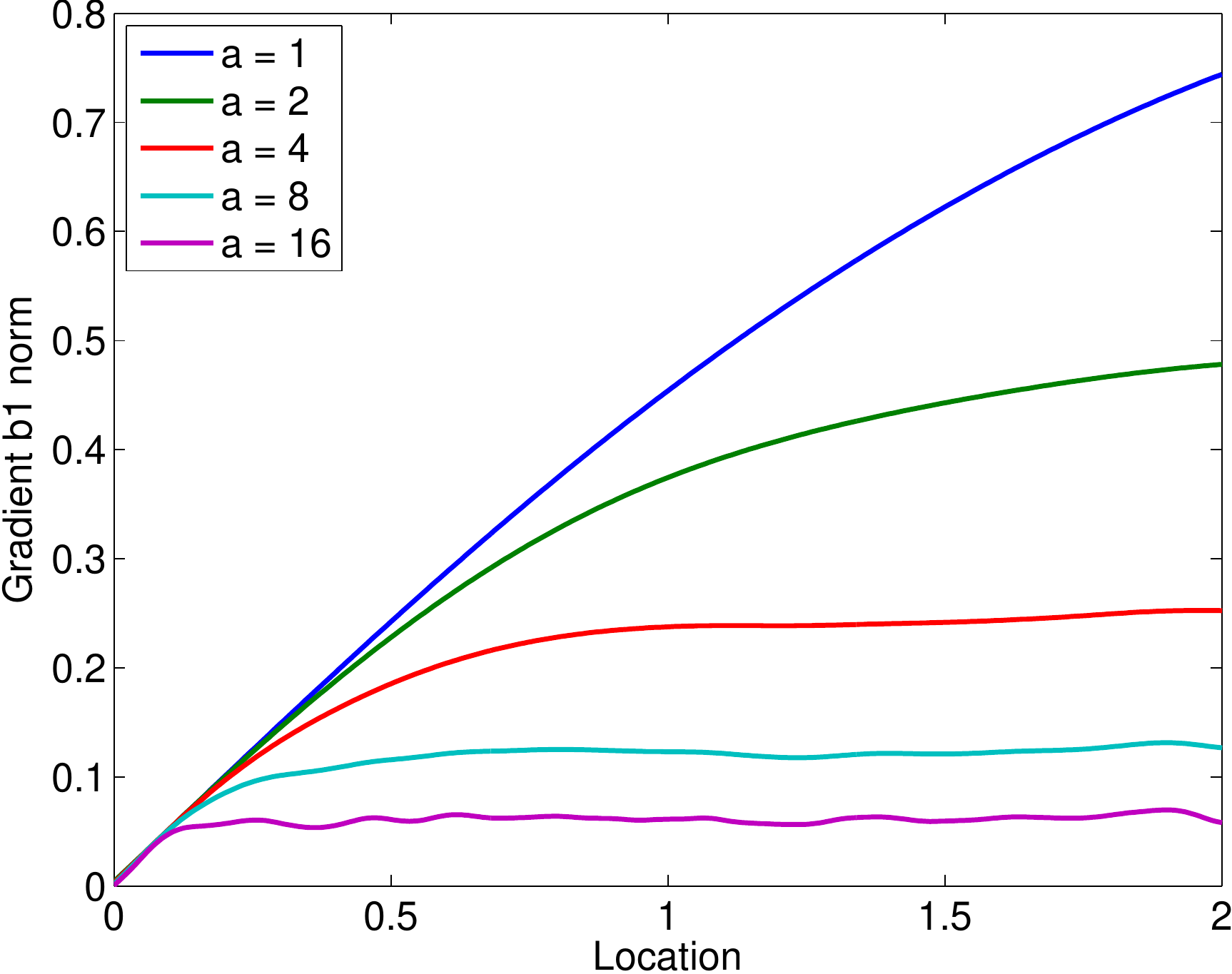}\\
({\bf{c}}) & ({\bf{d}})
\end{tabular}
\caption{Plots of (a) misclassified area and (b) the value of the loss
  function as a function of iterations for different initial
  weights. Norms of the gradients with respect to (c) $A_1$ and (d)
  $b_1$ as a function of hyperplane location with $A_1 = [a,0]$ for
  different values of $a$.}\label{Fig:DifferentWeights}
\end{figure}

The reason why the hyperplane moves faster for small initial weights
is twofold. First, the transition width and support of the gradient
field are larger. As a result, more sample points contribute to the
gradient, leading to a larger overall gradient value.  This is shown
in Figures~\ref{Fig:DifferentWeights}(c) and (d) in which we plot the
norm of the gradients with respect to $A_1$ and $b_1$ when choosing
$A_1 = [a,0]$, and $b$ such that the hyperplane goes through the given
location on the $x$-axis. The gradients with respect to either
parameters are larger for smaller $a$.  (Unlike in
Figure~\ref{Fig:BackpropMaxGamma}, the localization of the gradient
here is due only to scaling of the weights in the first layer; the
intensity of the gradient field therefore remains unaffected.) As the
value of $a$ increases, the curves in
Figures~\ref{Fig:DifferentWeights}(c) become more linear. For those
values the gradient is highly localized and, aside from the scaling by
the training point coordinates, largely independent of the hyperplane
location. The gradient with respect to $b_1$ does not include this
scaling and therefore remains nearly constant as long as the overlap
between the transition width and the class boundary is negligible. As
the hyperplane moves into the right place, the gradient vanishes due
to the cancellation of the contributions from the training points from
the classes on either side of it. The curves for $a=16$ and, to a
lesser extent for $a=8$, show minor aberrations due to a relatively
low sampling density compared to the transition width.  Second, having
larger gradient values for smaller weights means that the relative
changes in weights are amplified, thereby allowing the hyperplane to
move faster.

\subsection{Controlling the parameter scale}\label{Sec:ConstraintsRegularization}

In this section we work with a modified version of the domain shown in
Figure~\ref{Fig:SimpleDomain1}(a).  In particular, we change the
horizontal extent from $[-3,3]$ to $[-30,30]$, and randomly select 250
training samples uniformly at random for each of the two classes (thus
leaving the sampling density unaffected compare to the original
$n=50$). As a first experiment we optimize a three-layer network with
initial parameters:
\begin{equation}\label{Eq:ExtendedDomainThreeLayer}
A_1 = [1,0.3]/\sqrt{1.09},\ \ b_1 = A_1\cdot [25;0],\qquad
A_2 = 3,\ \ b_2 = 0,\qquad \mathrm{and}\qquad
A_3 = [3; -3],\ \ b_3 = 0.
\end{equation}
When we look at the row-norms of the weight matrices, plotted in
Figure~\ref{Fig:GrowingNorms}(a), we can see that all of them are
growing. This growth can help improve the final confidence levels, but
can be detrimental during the optimization process, especially when it
occurs in the layers between the first and the last. Indeed, we can
see from Figure~\ref{Fig:GrowingNorms}(b) that the hyperplane never
quite reaches the origin, despite the large number of iterations. As
illustrated in Figure~\ref{Fig:BackpropMaxGamma}, scaling of the
weight and bias terms leads to increasingly localized gradients. When
the training sample density is low compared to the size of the regions
where the gradient values are significant, it can easily happen that
no significant values from the gradient field are sampled into the
gradient. This applies in particular to the first several layers
(depending on the network depth) where the gradient fields become
increasingly localized (though not necessarily small) as a result of
the sigmoidal gradient masks that are applied during back propagation,
along with shifts in the boundary regions. This `vanishing gradient'
phenomenon can prematurely bring the training process to a halt; not
because a local minimum is reached, but simply because the sampled
gradient values are excessively small\footnote{Small gradients can
  also be due to cancellations in the various contributions. In
  practice, and especially when classes mix in a boundary zone, the
  small gradient can be expected to be due to a combination of
  the two effects.}. Scaling of the parameters in any layer except the
last can cause the gradient field to become highly localized for the
current and all preceding layers.  This can cause a cascading effect
in which suboptimal parameters in a stalled first layer lead to
further parameter scaling in later layers, eventually causing the
second layer to stall, and so on. To avoid this, we need to control
the parameter scale during optimization.

Parameter growth can be controlled by adding a regularization or
penalty term to the loss function, or by imposing explicit
constraints. Extending \eqref{Eq:NNTraining} we could use
\begin{equation}\label{Eq:NNTraining2}
\begin{array}{lcll}
\minimize{s}\quad \phi(s) + r(s),&\qquad\mbox{or}\qquad\ &
\minimize{s}&\phi(s)\\
&&\st &c_i(s) \leq 0,
\end{array}
\end{equation}
where $r(s)$ is a regularization function, and $c_i(s)$ are constraint
functions. The discussions so far suggest some natural choices of
functions for different layers. The function in the first layer should
generally be based on the (Euclidean) $\ell_2$ norm of each of the
rows in $A_1$, such as their sum, maximum, or $\ell_2$ norm. The
reason for this is that each row in $A_1$ defines the normal of a
hyperplane, and using any function other than an $\ell_2$ norm may
introduce a bias in the hyperplane directions due to a lack of
rotational invariance. For subsequent layers $k$ (except possibly the
last layer) we may want to ensure that the output cannot be too
large. In the worst case, each input from the previous layer is close
to $+1$ or $-1$, and we can limit the output value by ensuring that
the sum of absolute values, i.e., the $\ell_1$ norm, of each row in
$A_k$ is sufficiently small. Of course, the corresponding value in
$b_k$ could still be large, which may suggest adding a constraint that
$\norm{[A_k]_j}_1 \leq \abs{[b_k]_j}$ for each row $j$. However, this
constraint is non-convex and may impede sign changes in $b$.  The use
of an $\ell_1$ norm-based penalty or constraint on intermediate layers
has the additional benefit that it leads to sparse weight matrices,
which can help reduce model complexity as well as evaluation cost.

\begin{figure}[t]
\centering
\begin{tabular}{cc}
\includegraphics[width=0.425\textwidth]{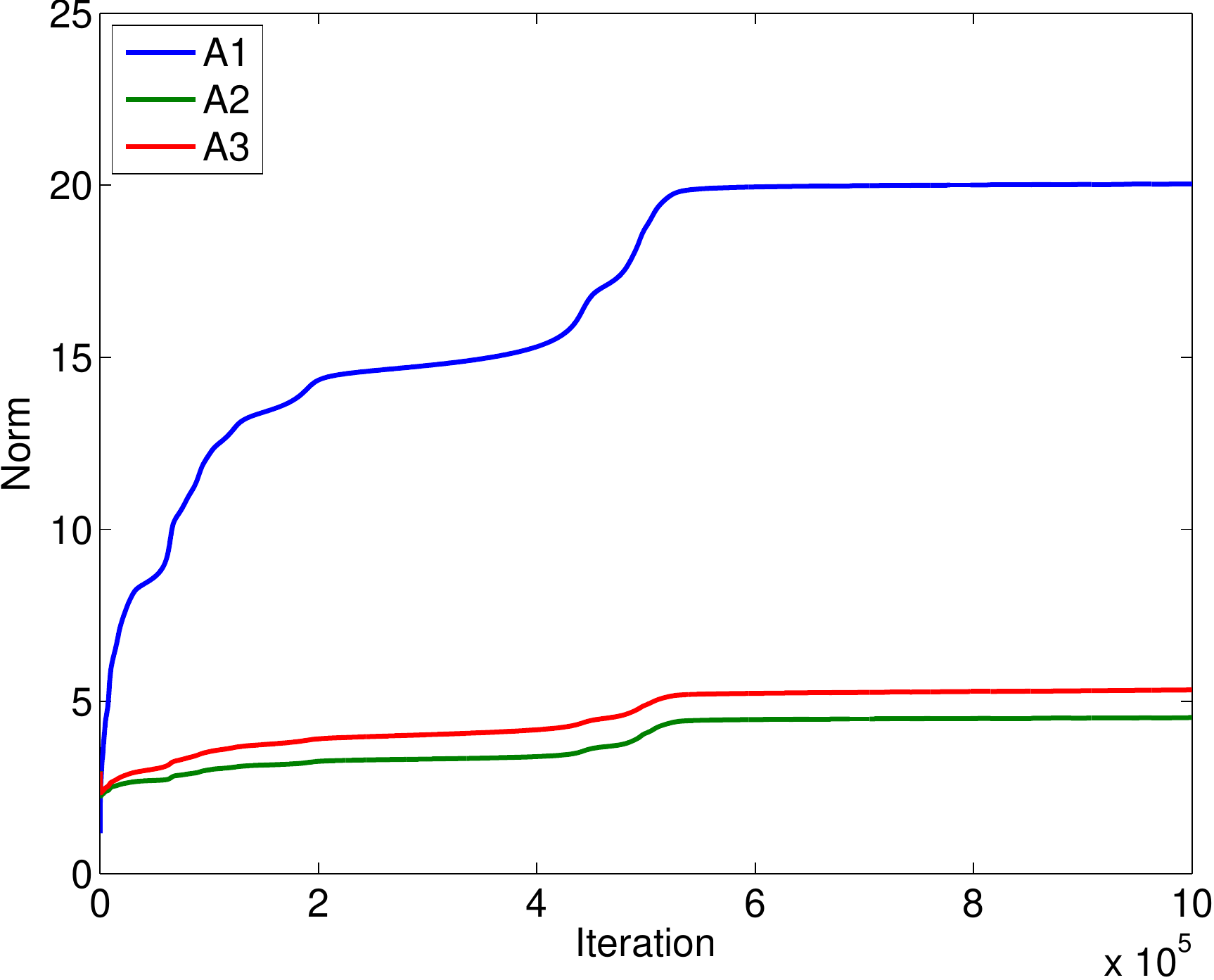}&
\includegraphics[width=0.425\textwidth]{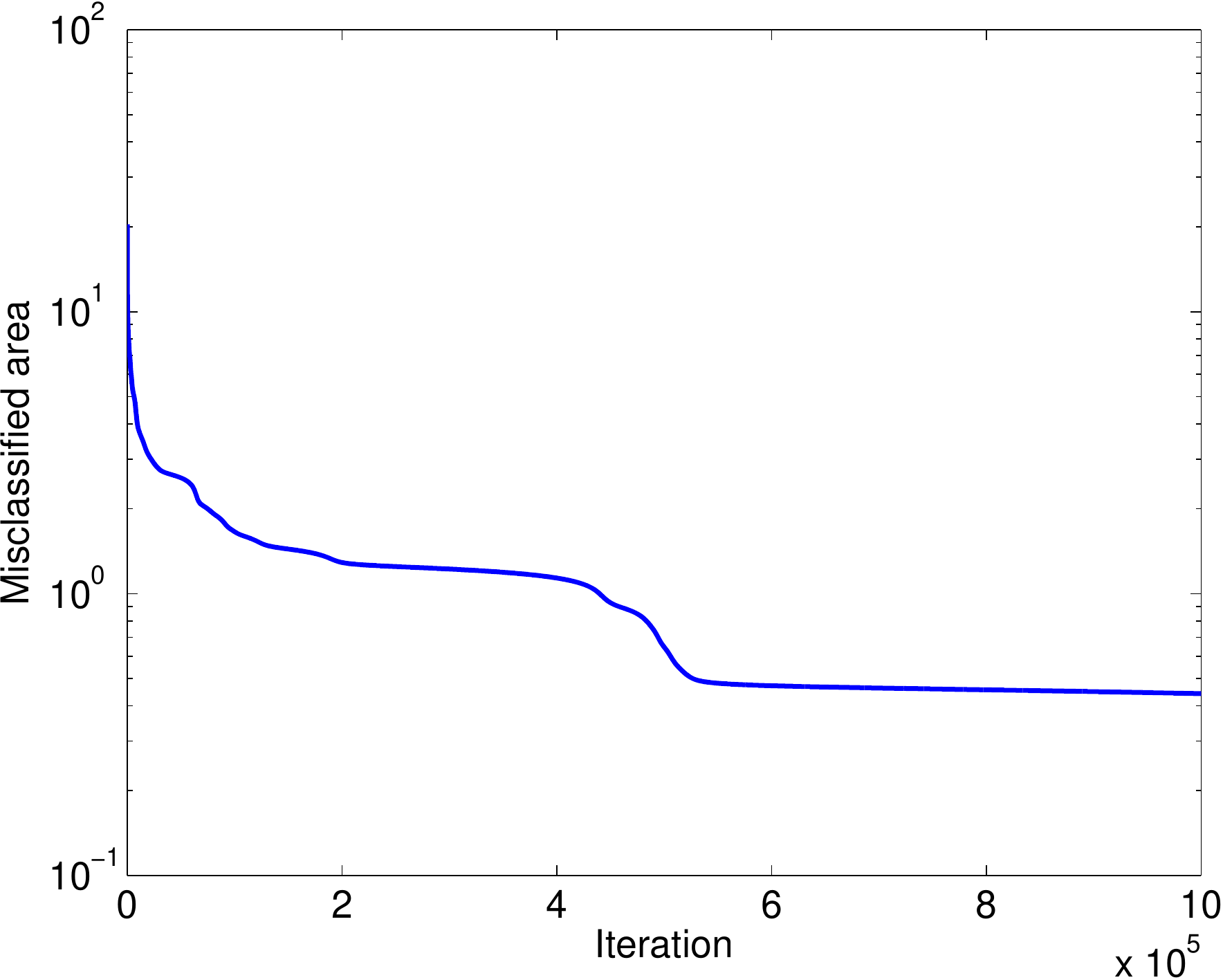}\\
({\bf{a}}) & ({\bf{b}})
\end{tabular}
\caption{Plots of (a) growth in the norms of the weight matrices and
  (b) reduction of the misclassified area as a function of
  iteration.}\label{Fig:GrowingNorms}
\end{figure}

As an illustration of the effect of $\ell_2$ regularization on the
first layer we consider the setting as given in
\eqref{Eq:ExtendedDomainThreeLayer}, but with the second layer
removed. We optimize the weight and bias terms in the first layer
using the standard formulation \eqref{Eq:NNTraining}, as well as those
in \eqref{Eq:NNTraining2} with $r(s) = \lambda/2 \norm{A_1^T}_2^2$ or
$c(s) = \norm{A_1^T}_2 \leq \kappa$. For simplicity we keep all other
network parameters fixed. Optimization in the constrained setting is
done using a basic gradient projection method with step size fixed to
0.01, as before. The results are show in
Figure~\ref{Fig:ExtendedDomain}.
When using the standard formulation we see from
Figure~\ref{Fig:ExtendedDomain}(a) that, like above and in
Figures~\ref{Fig:SimpleDomain1}(a,d), the $\ell_2$ norm of the row in
$A_1$ keeps growing. This is explained as follows: suppose the
hyperplane is vertical with $A_1$ of the form $[a, 0]$, and $b_1 =
b$. Then the area of the misclassified region is $2\abs{b} /
\abs{a}$. We can therefore reduce the misclassified area (and in this
case the loss function) by increasing $a$ and decreasing $b$, which is
exactly what happens. However, from Figure~\ref{Fig:ExtendedDomain}(b)
we can see that the rate at which the misclassified area is reduced
decreases. The reason for this is a combination of three factors.
First, the speed at which $\abs{b} / \abs{a}$ goes towards zero slows
down as $a$ gets larger. Second, the peak of the gradient field lies
along the hyperplane and shifts towards the origin with it. Because
the gradient in the first layer is formed by a multiplication of the
backpropagated error with the feature vectors (coordinates), the
gradient gets smaller too. Third, because of the growing norm of
$A_1$, the transition width shrinks and causes the gradient to become
more localized. As a result, fewer training points sample the gradient
field at significant values, leading to smaller overall gradients with
respect to both $A_1$ and $b_1$.

\begin{figure}[t]
\centering
\begin{tabular}{cc}
\includegraphics[width=0.45\textwidth]{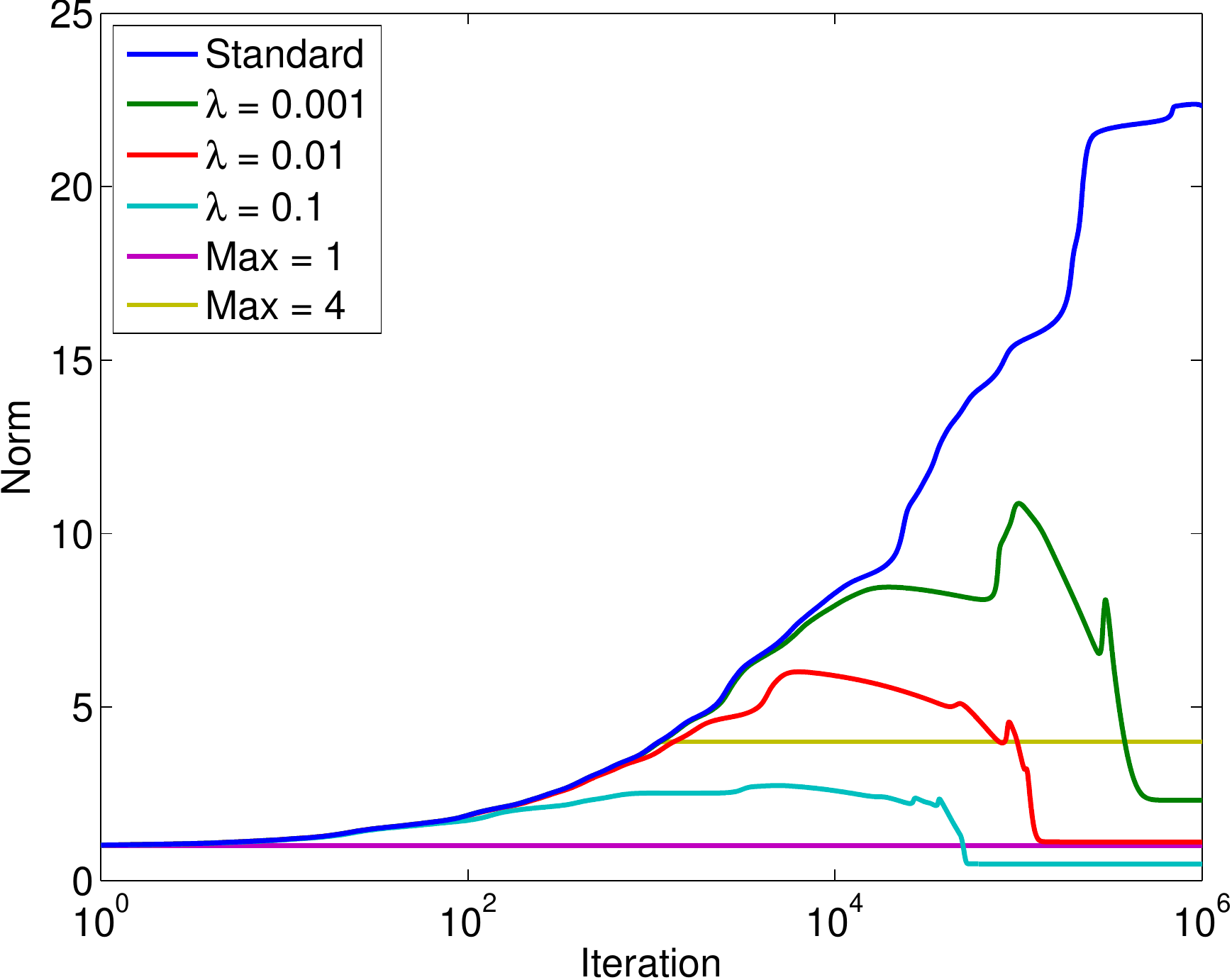}&
\includegraphics[width=0.45\textwidth]{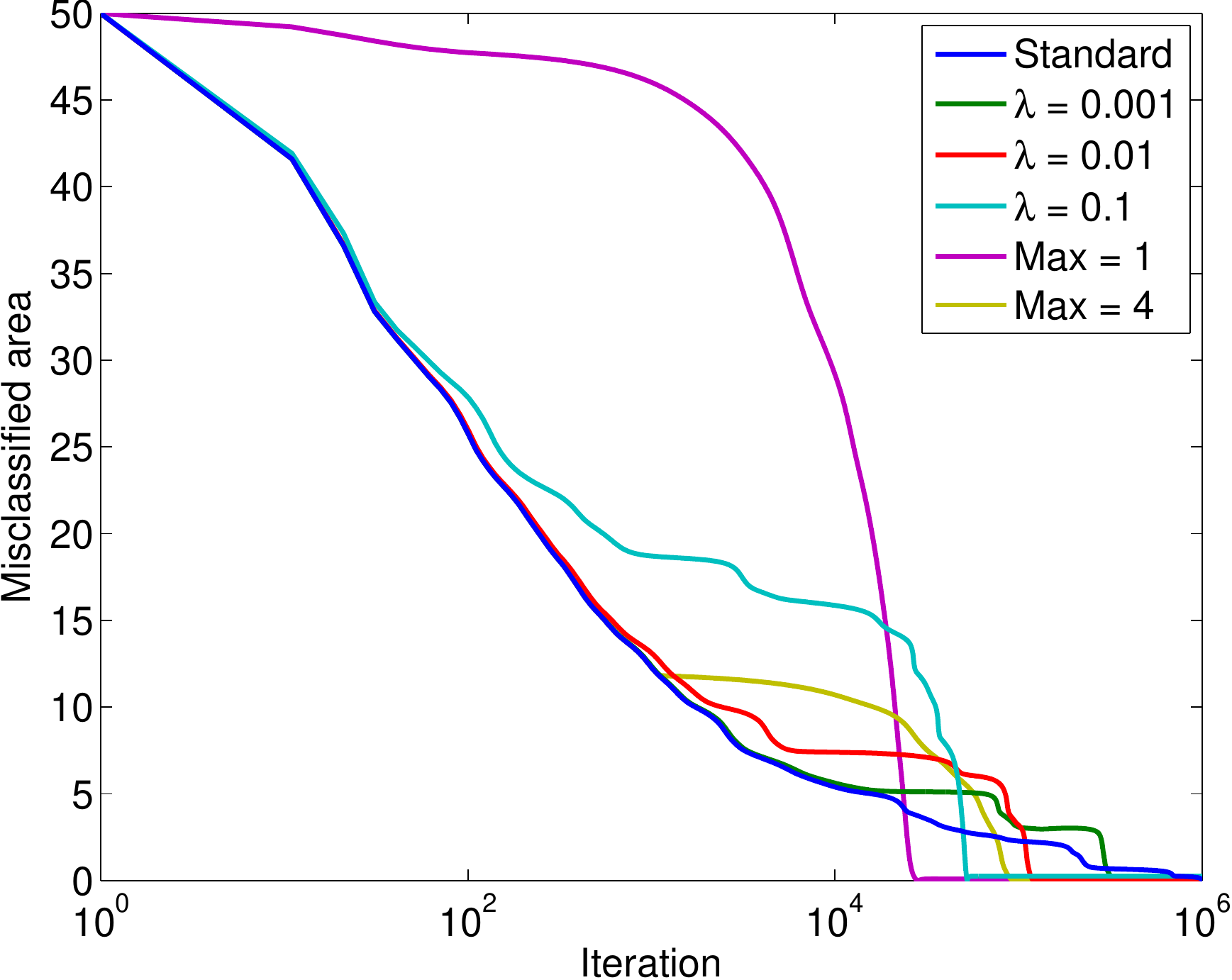}\\
({\bf{a}}) & ({\bf{b}}) \\[4pt]
\includegraphics[width=0.45\textwidth]{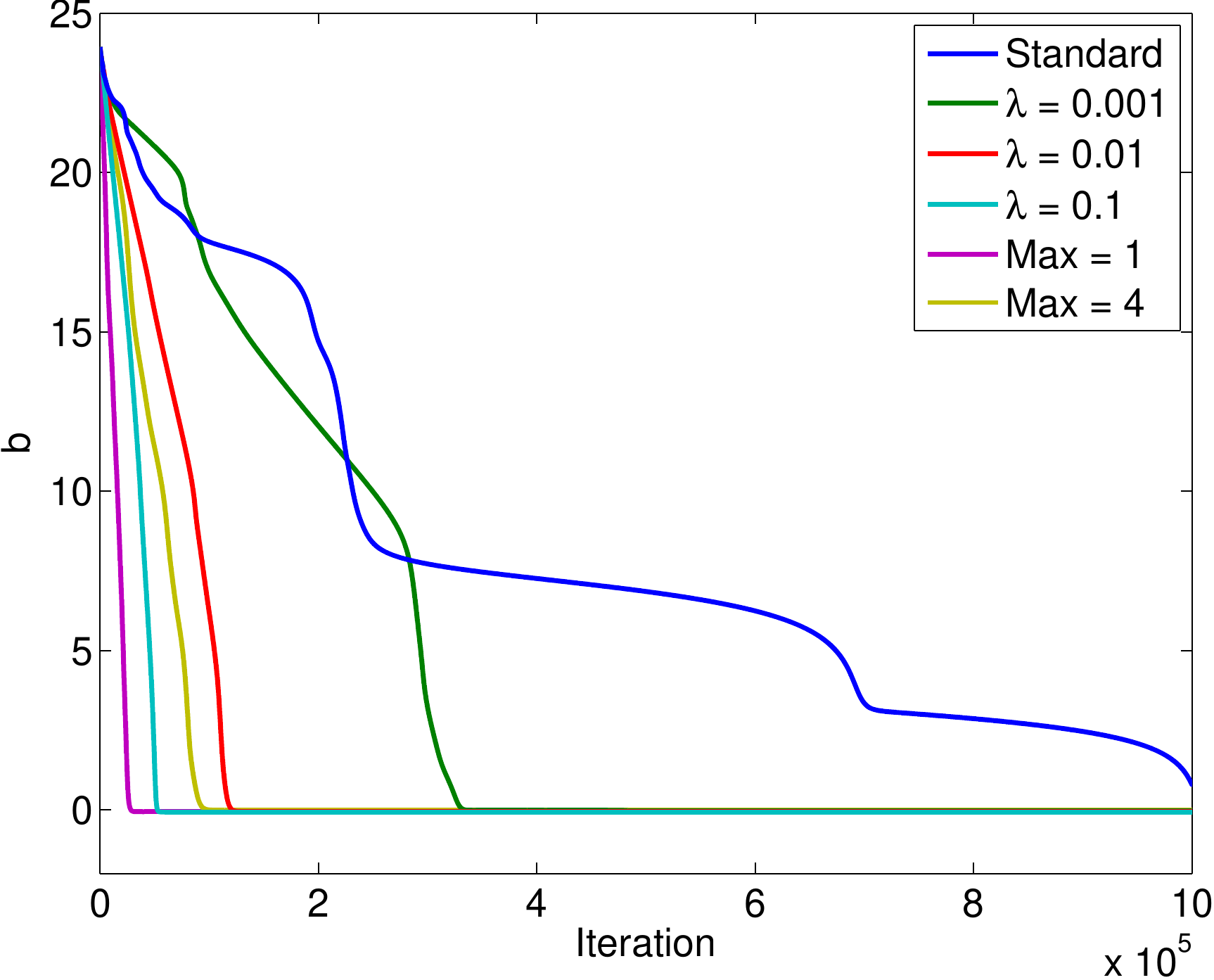}&
\includegraphics[width=0.45\textwidth]{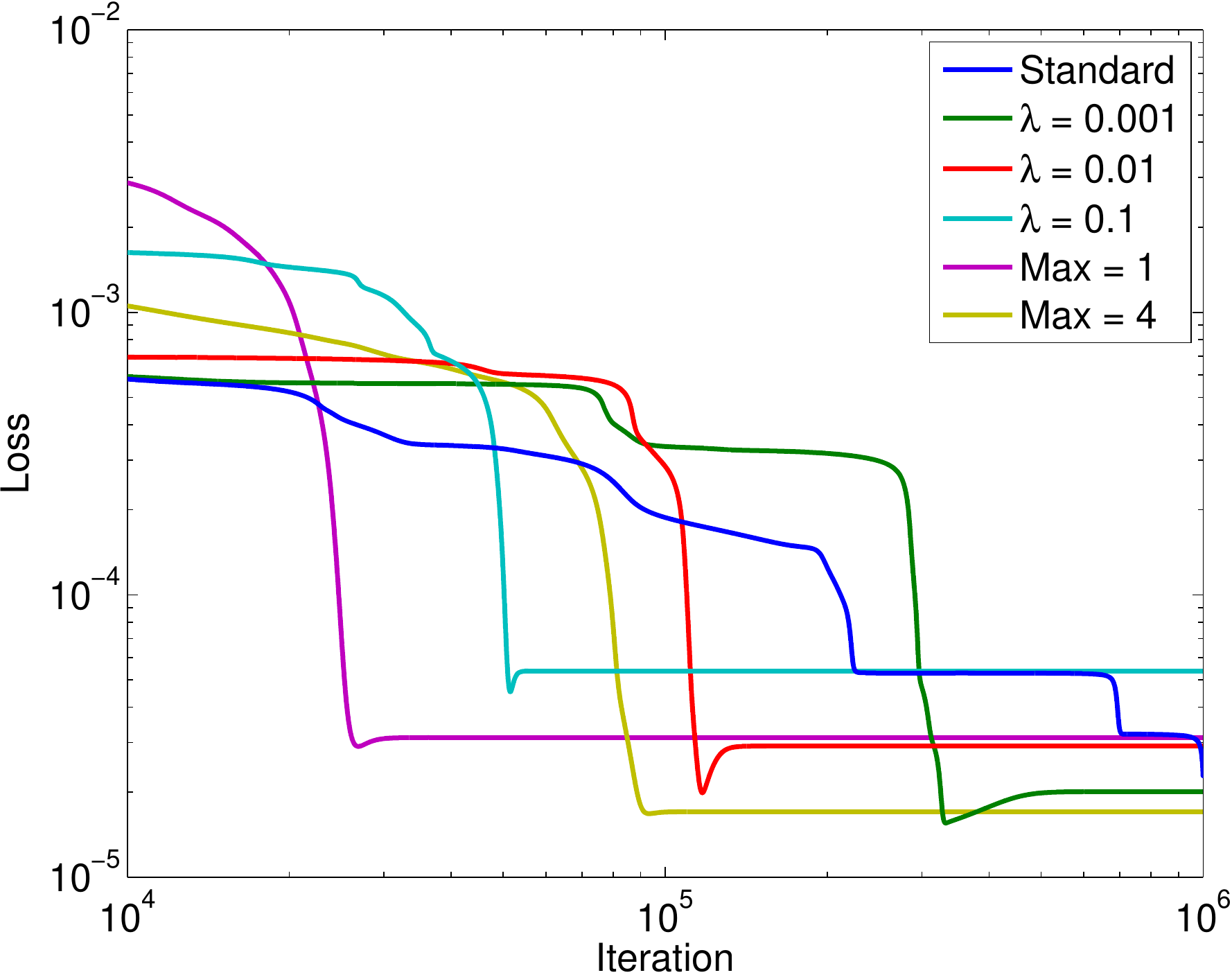}\\
({\bf{c}}) & ({\bf{d}})
\end{tabular}
\caption{Differences between (a) the norm of $A_1$; (b) the area of
  the misclassified region; (c) the magnitude of $b$; and (d) the loss
  function, as a function of iteration for standard gradient descent
  and variations of regularized and constrained
  optimization.}\label{Fig:ExtendedDomain}
\end{figure}

There is not much we can be do about the first two causes, but adding
a regularization term or imposing constraints, certainly does help
with the third, and we can see from Figure~\ref{Fig:ExtendedDomain}(a)
that the norm of $A_1$ indeed does not grow as much as in the standard
approach. At first glance, this seems to hamper the reduction of the
misclassified area, shown in Figure~\ref{Fig:ExtendedDomain}(b). This
is true initially when most of the progress is due to the scaling of
$A_1$, however, the moderate growth in $A_1$ also prevents strong
localization of the gradient and therefore results in much steadier
reduction of $b$, as shown in Figure~\ref{Fig:ExtendedDomain}(c). The
overall effect is that the constrained and regularized methods catch
up with the standard method and reduce the misclassified area to zero
first. Even so, when looking at the values of the loss function
without the penalty term, as plotted in
Figure~\ref{Fig:ExtendedDomain}(d), we see that the standard method
still reaches the lowest loss value, even though all methods have zero
misclassification. As before, this is because the two classes are
disjoint and are best separated with a very sharp transition. The
order in which the lines in Figure~\ref{Fig:ExtendedDomain}(d) appear
at the end, is therefore related to the norms in
Figure~\ref{Fig:ExtendedDomain}(a). This suggests the use of cooling
or continuation strategies in which norms are gradually allowed to
increase. The initial small weights ensure that many of the training
samples are informative and contribute to the gradients of all layers,
thereby allowing the network to find a coarse class alignment. From
there the weights can be allowed to increase slowly to fine tune the
classification and increase confidence levels. Of course, while doing
so, care needs to be taken not to allow excessive scaling of the
weights, as this can lead to overfitting.

Instead of scaling weight and bias terms we could also consider
scaling sigmoid parameters $\gamma$, or learn them
\cite{SPE1993Sa}. One interesting observation here is that even though
all networks with parameters $\alpha A_1$, $\alpha b_1$, and
$\gamma/\alpha$ are equivalent for $\alpha > 0$, their training
certainly is not. The reason is the $1/\alpha$ term that applies to
the gradients with respect to $A_1$ and $b_1$.  Choosing $\alpha > 1$
means larger parameter values and smaller gradients. This reduces both
the absolute and relative change in parameter values and is equivalent
to having a stepsize that is $\alpha^2$ smaller.  Instead of doing
joint optimization over both the layer and nonlinearity parameters, it
is also possible to learn the nonlinearity parameters as a separate
stage after optimization of the weight and bias terms.

\subsection{Subsampling and partial backpropagation}

Consider the scenario shown in Figure~\ref{Fig:SimpleDomain1}(a) and
suppose we double the number of training samples by adding additional
points to the left and right of the current domain. In the original
setting, the gradient with respect to the weights in the first layer
is obtained by sampling the gradient field shown in
Figure~\ref{Fig:SimpleDomain1}(c). In the updated setting, all newly
added points are located away from the decision boundary. As a result,
their contribution to the gradient is relatively small and the overall
gradient may be very similar to the original setting. However, because
the loss function $\phi(s)$ in \eqref{Eq:NNTraining} is defined as the
average of the individual loss-function components, we now need to
divide by $2N$ rather than $N$, thereby effectively scaling down the
gradient by a factor of approximately two. Another way to say this is
that the stepsize is almost halved by adding the new points. This
example is of course somewhat contrived, since additional training
samples can typically be expected to follow the same distribution as
existing points and therefore increase sampling density. Nevertheless,
this example makes one wonder whether the training samples on the left
and right-most side of the original domain are really needed; after
all, using only the most informative samples in the gradient
essentially amounts to larger stepsize and possibly a reduction in
computation.

For sufficiently deep networks with even moderate weights, the
hyperplane learning is already rather myopic in the sense that only
the training points close enough to the hyperplane provide information
on where to move it. This suggests a scheme in which we subsample the
training set and for one or more iterations work with only those
points that are relevant. We could for example evaluate $v_1 = A_1x_0
- b_1$ for each input sample $x_0$, and proceed with the forward and
backward pass only if the minimum absolute entry in $v_1$ is
sufficiently small (i.e., the point lies close enough to at least one
of the hyperplanes). This approach works to some extend for the first
layer when the remaining layers are kept fixed, however, it does not
generalize because the informative gradient regions can differ
substantially between layers (see e.g.,
Figure~\ref{Fig:BackpropMaxGamma}). Instead of forming a single
subsampled set of training points for all layers we can also form a
series of sets---one for each layer---such that all points in a set
contribute significantly to the gradient for the corresponding and
subsequent layers. This allows us to appropriately scale the gradients
for each layer. It also facilitates partial backpropagation in which
the error is backpropagated only up to the relevant layer, thereby
reducing the number of matrix-vector products. Given a batch of
points, we could determine the appropriate set by evaluating the
gradient contribution to each layer and finding the lowest layer for
which the contribution is above some threshold. Alternatively, we
could use the following partial backpropagation approach, which may be
beneficial in its own right, especially for deep networks.

\begin{figure}[t]
\centering
\begin{tabular}{ccc}
\includegraphics[width=0.32\textwidth]{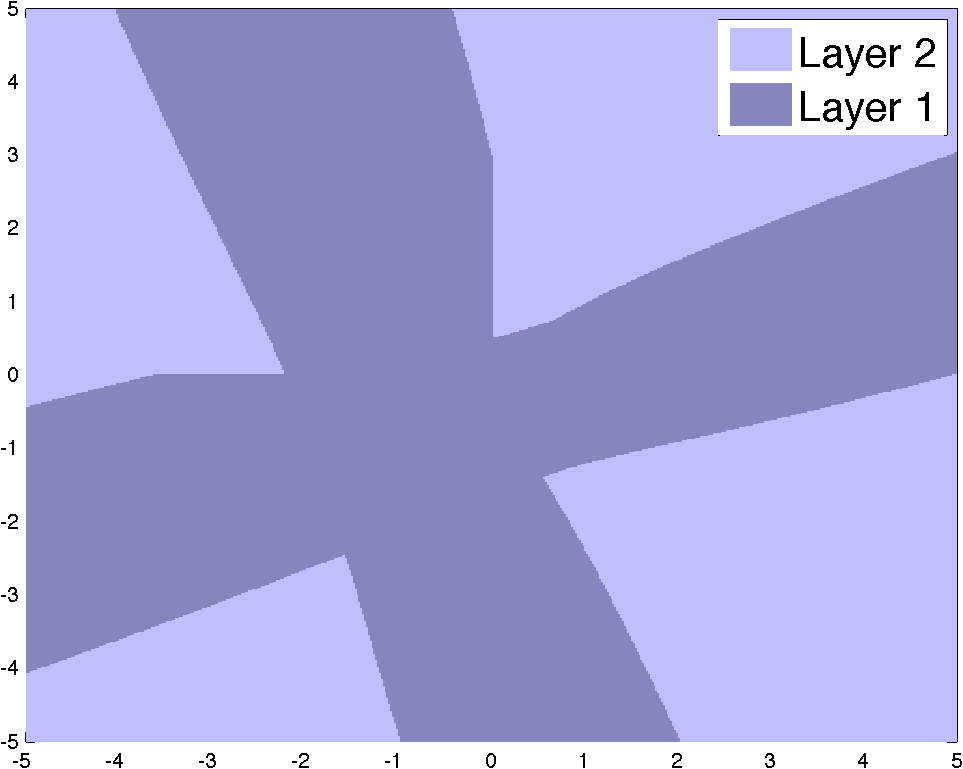}&
\includegraphics[width=0.32\textwidth]{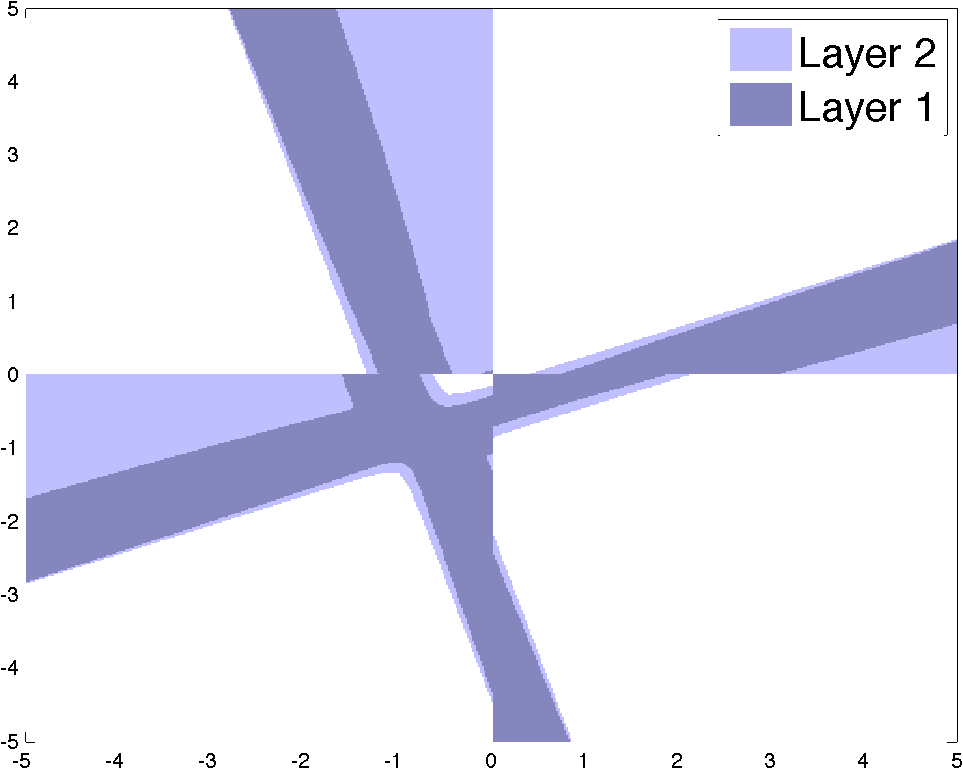}&
\includegraphics[width=0.32\textwidth]{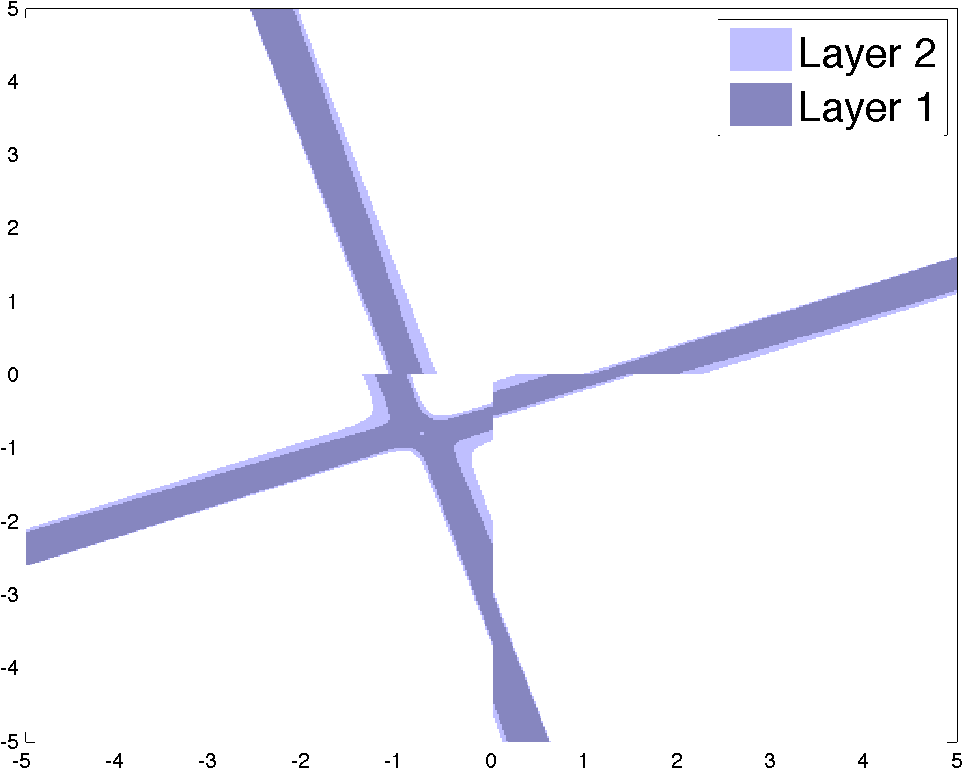}\\
({\bf{a}}) 100\%, 51\% & ({\bf{b}}) 28\%, 16\% & ({\bf{c}}) 9\%, 7\%\\[4pt]
\includegraphics[width=0.32\textwidth]{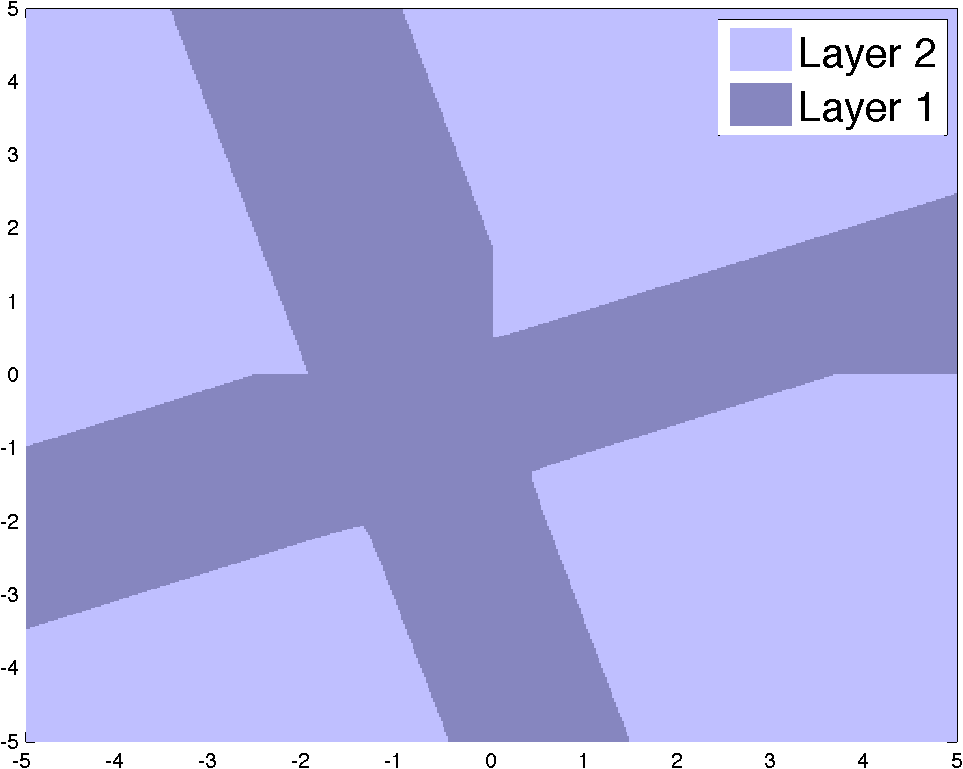}&
\includegraphics[width=0.32\textwidth]{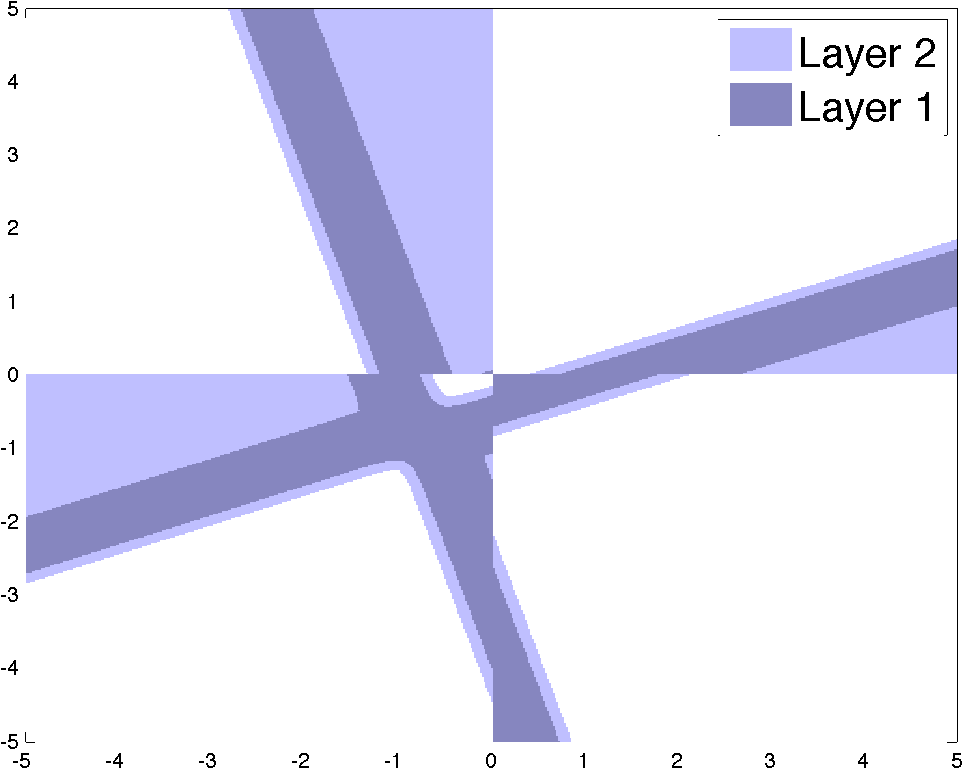}&
\includegraphics[width=0.32\textwidth]{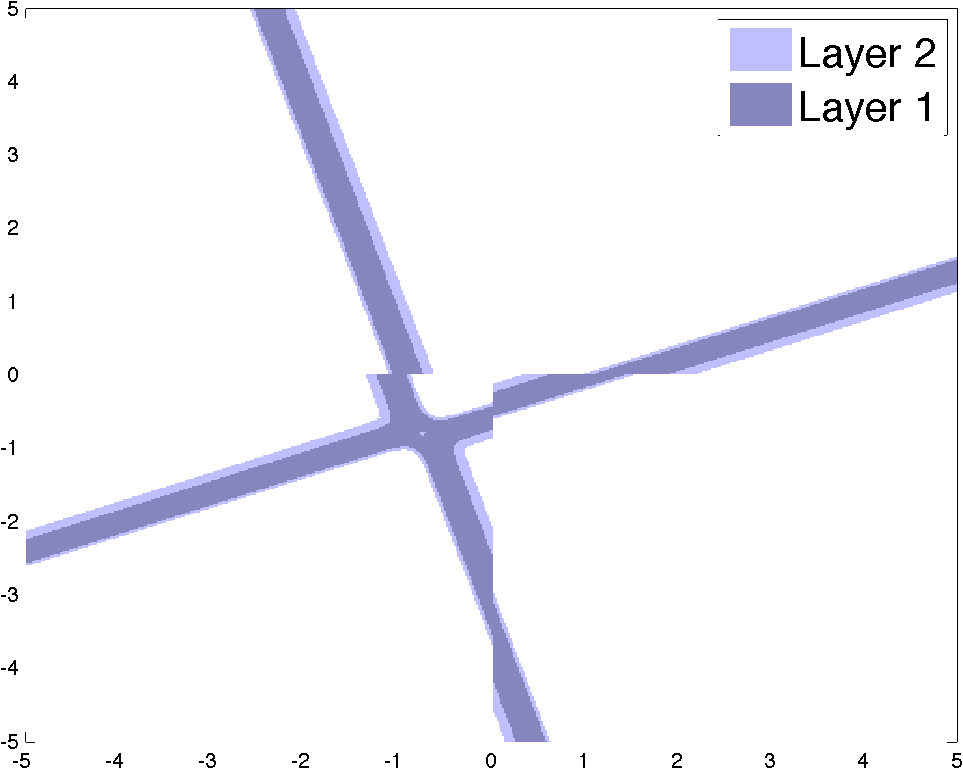}\\
({\bf{d}}) 100\%, 40\% & ({\bf{e}}) 28\%, 14\% & ({\bf{f}}) 9\%, 6\%\\[4pt]
\end{tabular}
\caption{Regions of the feature space that are backpropagated to
  layers 2 and 1. From left to right we have the settings $\gamma =
  1$, $\gamma = 2$, and $\gamma = 3$ from
  Figure~\ref{Fig:BackpropMaxGamma}, respectively. The top row shows
  the results obtained with the Frobenius norm of the gradients with
  respect to the weight matrices in each layer. The bottom row shows
  the results obtained by bounding the gradients elementwise. The
  percentages indicate the fraction of the feature space that was
  backpropagated to the second, and first layer.}\label{Fig:PartialPackprop}
\end{figure}

In order to do partial backpropagation, we need to determine at which
layer to stop. If this information is not given a priori, we need a
conservative and efficient mechanism that determines if further
backpropagation is warranted. One such method is to determine an upper
bound on the gradient components of all layers up to the current layer
and decide if this is sufficiently small. We now derive bounds on
$\norm{\partial f/\partial A_k}_F$ and $\norm{\partial f/\partial
  b_k}_F$ as well as on $\max_{i,j}\abs{[\partial f/\partial
  A_k]_{i,j}}$ and $\norm{\partial f/\partial b_k}_{\infty}$. It
easily follows from \eqref{Eq:PartialDiffsAb} that these quantities
are equal to $\norm{x_{k-1}}_2\norm{y_k}_2$ and $\norm{y_k}_2$,
respectively $\norm{x_{k-1}}_{\infty}\norm{y_k}_\infty$ and
$\norm{y_k}_{\infty}$. Since $x_{k-1}$ is known explicitly from the
forward pass, it suffices to bound the norms of $y_k$.  In fact, what
we are really after is to bound the norms of $y_k$ for all $1 \leq k <
j$ given $y_j$, since we can stop backpropagation only if all of them
are sufficiently small. For the $\ell_2$ norm we have
\begin{equation}\label{Eq:L2Bound}
\norm{y_{k-1}}_2\ \leq\ \norm{\sigma_{\gamma_{k-1}}'(v_{k-1})}_\infty \norm{z_k}_2\
\leq\ \sigma_{\gamma_{k-1}}'([v_{k-1}]_i)\cdot \sigma_{\max}(A_k)\norm{y_k}_2,
\end{equation}
where $i:= \argmin_j \abs{[v_{k-1}]_j}$, and $\sigma_{\max}(A_k)$ is
the largest singular values of $A_k$. Once we have a bound on
$\norm{y_j}_2$ we can apply \eqref{Eq:L2Bound} with $k=j$ to bound
$\norm{y_{j-1}}_2$. Although computation of $\sigma_{\max}(A_k)$ needs
to be done only once per batch but may still be prohibitively
expensive. In practice, however, it may suffice to work with an
approximate value, or use an alternative bound instead. For
$\ell_{\infty}$ we find
\begin{equation}\label{Eq:LInfBound}
\norm{y_{k-1}}_\infty\ \leq\ \max_i
\{\sigma_{\gamma_{k-1}}'([v_{k-1}]_i)\cdot
\norm{[A_{k}]_i}_2\norm{y_k}_2\}\ \leq\
\norm{\sigma_{\gamma_{k-1}}'(v_k)}_{\infty}\norm{y_k}_2 \max_{i}\{\norm{[A_k]_i}_2\},
\end{equation}
where the second, looser bound can be used if we want to avoid
evaluating $\sigma'_{\gamma_{k-1}}$ for all entries in $v_k$; the
infinity norm of this vector can be evaluated as above.

We applied the second bound in \eqref{Eq:L2Bound} and the first bound
in \eqref{Eq:LInfBound} to the setting for
Figure~\ref{Fig:BackpropMaxGamma} as follows. We first compute $y_3$
and evaluate the bound the gradients with respect to the weight and
bias terms in the first and second layer. If these bounds are smaller
than $0.05$ and $0.01$, respectively, we stop
backpropagation. Otherwise, we evaluate $y_2$ and update the bound on
the gradient with respect to the parameters of the first layer. If
this is less than $0.05$ we stop backpropagation, otherwise we
evaluate $y_1$ and complete the backpropagation process. In
Figure~\ref{Fig:PartialPackprop} we show the regions of the feature
space where backpropagation reaches the second, respectively first
layer. These regions closely match the predominant regions of the
gradient fields shown in Figure~\ref{Fig:BackpropMaxGamma}.
In practical applications the threshold values could be based on
previously computed (partial) gradient values, and may be adjusted
when the number of training samples that backpropagate to a given
layer falls below some threshold.

\section{Conclusions}\label{Sec:Discussion}
We reviewed and studied the decision region formation in feedforward
neural networks with sigmoidal nonlinearities. Although the definition
of hyperplanes and their subsequent combination is well known, very
little attention has so far been given to transitions regions at the
boundaries of classes and other regions with varying levels of
classification confidence. We clarified the relation between the
scaling of the weight matrices, the increase in confidence and
sharpening of the transition regions, and the corresponding
localization of the gradient field. The degree of localization differs
per layer and is one of the main factors that determine how much
progress can be made at each step of the training process: a high
level of localization combined with a relatively coarse sampling
density or small batch size leads to the vanishing gradient problem
where updates to one or more layers become excessively small. The
gradient field tends to become increasingly localized towards the
first layer, and the parameters in this layer are therefore most
likely to get stuck prematurely. When this happens, subsequent layers
must form classifications regions based on suboptimal hyperplane
locations. It is often possible to slightly decrease the loss function
by increasing confidence levels by scaling parameters in later
layers. This can lead to a cascading effect in which layers
successively get stuck.  The use of regularized or constrained
optimization can help control the scaling of the weights, thereby
limiting the amount of gradient localization and thus avoiding or
reducing these problems. By gradually allowing the weights to increase
it is possible to balance progress in the learning process and
attaining decision regions with sufficiently high confidence
levels. In addition, regularized and constrained optimization can help
prevent overfitting.
Analysis of the gradient field also shows that at any given iteration,
the contributions of different training points to the gradient can
vary substantially. Localization of the gradient towards the first layer
also means that some points are informative only from a certain layer
onwards. Together this suggests dynamic subset selection and partial
backpropagation, or adaptive selection of the step size for each layer
depending on the number of relevant points.

We hope that some of the results presented in this paper will
contribute to a better understanding of neural networks and eventually
lead to new or improved algorithms. There remain several topics that
are interesting but beyond the scope of the present paper. For
example, it would be interesting to see what the hyperplanes generated
during pre-training using restricted Boltzmann machines
\cite{HIN2006OTa} look like, and if there are better choices. One
possible option is to select random training samples from each class
and generate randomly oriented hyperplanes through these points by
appropriate choice of $b$. Likewise, given a hyperplane orientation
and a desired class, it is also possible to place the hyperplane at
the class boundary by choosing $b$ to coincide with the largest or
smallest inner product of the normal with points from that class.
Another interesting topic is an extension of this work to other
nonlinearities such as the currently popular rectified linear unit
given by $\nu(x) = \max(0,x)$. The advantage of these units is that
gradient masks is one for all all positive inputs and are not
localized, thereby avoiding gradient localization and thus allowing
the error to backpropagate more easily.  It would be interesting to
look at the mechanisms involved in the formation of decision regions,
which differ from those of sigmoidal units. For example, it is not
entirely clear how the logical \band{} should be implemented: summing
inverted regions and thresholding may work in some cases, but more
generally it should consist of the minimum of all input regions. In
terms of combinatorial properties, bounds on the number of regions
generated using neural networks with rectified and piecewise linear
functions were recently obtained in \cite{MON2014PCBa,PAS2013MBa}. The
main problem with rectified linear units is that it maps all negative
inputs to zero, thereby creating a zero gradient mask at those
locations.  The softplus nonlinearity \cite{GLO2011BBa}, which is a
smooth alternative in which the gradient mask never vanishes, would
also be of interest. Finally it would be good to get a better
understanding of dropout \cite{HIN2012SKSa} and second-order methods
from a feature-space perspective.

\bibliography{bibliography.bib}

\end{document}